\documentclass[11pt,onecolumn,letterpaper,final]{IEEEtran}
\IEEEoverridecommandlockouts

\usepackage[colorlinks=true,citecolor=blue,linkcolor=blue]{hyperref}
\newcommand{\circled}[1]{\text{\raisebox{.6pt}{\textcircled{\raisebox{-.8pt}{\scriptsize #1}}}}}

\usepackage{amssymb}
\usepackage{amsmath,mathrsfs,dsfont}
\usepackage{nicefrac}
\usepackage{algorithm}
\usepackage{algorithmicx}
\usepackage{algpseudocode}

\usepackage{color}
\usepackage[dvipsnames,svgnames,table]{xcolor}
\usepackage{enumitem}

\usepackage{array}
\usepackage{graphicx,tikz}
\usepackage[mathscr]{euscript}
\usepackage{amsthm}
\usepackage{cite}

\usepackage{bm}
\usepackage{bbm}

\usepackage{epstopdf}
\usepackage{cleveref}
\usepackage{thmtools}
\usepackage{thm-restate}

\usepackage{bbding}
\usepackage{mathtools}
\counterwithin{equation}{section}
\allowdisplaybreaks

\newcommand{\tT}{\tilde T}
\newcommand{\dVC}{d^{\textup{(VC)}}}

\usepackage{framed}
\usepackage{xcolor}

\newcommand{\cfrakR}{\mathfrak{R}} 

\newcommand{\Span}{\mathop\mathrm{Span}}

\newcommand{\Ent}{\mathop\mathrm{Ent}}
\newcommand{\barZ}{\overline{Z}}

\graphicspath{{illustrations/}}
\newcounter{optproblem}

\newtheoremstyle{mytheoremstyle} 
    {\topsep}                    
    {\topsep}                    
    {\normalfont}                
    {}                           
    {\bfseries}                   
    {.}                          
    {.5em}                       
    {}  

\theoremstyle{mytheoremstyle}
\newtheorem{theorem}{Theorem}[section]
\newtheorem{remark}[theorem]{Remark}
\newtheorem{proposition}[theorem]{Proposition}
\newtheorem{corollary}[theorem]{Corollary}

\newtheorem*{theorem*}{Theorem}
\newtheorem*{lemma*}{Lemma}
\newtheorem*{remark*}{Remark}
\newtheorem*{claim*}{Claim}

\newtheorem{lemma}[theorem]{Lemma}

\newtheorem{assumption}{Assumption}

\theoremstyle{remark}
\newtheorem{definition}{Definition}[section]

\DeclareMathAlphabet{\pazocal}{OMS}{zplm}{m}{n}
\DeclareMathAlphabet{\mathpzc}{OMS}{pzc}{m}{it}

\setlist[itemize]{leftmargin=*}

\renewcommand{\hat}{\widehat}

\newcommand{\bfm}[1]{\ensuremath{\mathbf{#1}}}
\newcommand{\bfsym}[1]{\ensuremath{\boldsymbol{#1}}}

   \def\bI{\bfm I}  
     
   \def\bK{\bfm K}

     \def\NN{\mathbb{N}}

     \def\RR{\mathbb{R}}
   \def\bS{\bfm S}  
     
   \def\bU{\bfm U}

\def\bx{\bfm x}   \def\bX{\bfm X}  
\def\by{\bfm y}   \def\bY{\bfm Y}

 \def\cA{{\cal  A}}

 \def\cE{{\cal  E}}
 \def\cF{{\cal  F}}
 \def\cG{{\cal  G}}
 \def\cH{{\cal  H}}

 \def\cL{{\cal  L}}

 \def\cO{{\cal  O}}

 \def\cX{{\cal  X}}
 \def\cY{{\cal  Y}}

\def\balpha{\bfsym \alpha}

\def\bsigma{\bfsym \sigma}

\def\+#1{\mathcal{#1}}
\def\-#1{\textup{#1}}

\def\set#1{\left\{ #1 \right\}}
\def\pth#1{\left( #1 \right)}
\def\bth#1{\left[ #1 \right]}
\def\abth#1{\left | #1 \right |}

\def\defeq {\coloneqq}

\DeclareMathSymbol{\relcolon}{\mathrel}{operators}{"3A}

\newcommand{\La}{\left\langle\kern-0.64ex\left\langle}
\newcommand{\Ra}{\right\rangle\kern-0.64ex\right\rangle}

\def\Norm#1#2{{\left\vert\kern-0.4ex\left\vert\kern-0.4ex\left\vert #1
    \right\vert\kern-0.4ex\right\vert\kern-0.4ex\right\vert}_{#2}}
\def\norm#1#2{{\left\|#1\right\|}_{#2}}

\newcommand{\rank}{\textup{rank}}

\def\tr#1{\textup{tr}\left(#1\right)}

\newcommand{\1}{{\rm 1}\kern-0.25em{\rm I}}
\def\indict#1{{\rm 1}\kern-0.25em{\rm I}_{\set{#1}}}

\def \eps  {\epsilon}
\def \eps {\varepsilon}

\def \diff {{\rm d}}
\def \iprod#1#2{\left\langle #1, #2 \right\rangle}

\def\set#1{\left\{#1\right\}}

\DeclareMathOperator*{\argmin}{arg\,min}

\def \E {\mathbb{E}}
\def\Expect#1#2{\E_{#1}\left[#2\right]}

\def \Pr {\textup{Pr}}
\newcommand{\Prob}[1]{\Pr\left[#1\right]}

\def \Ent  {\textup{Ent}}

\newcommand{\beq}{\begin{equation}}
\newcommand{\eeq}{\end{equation}}
\newcommand{\beqa}{\begin{eqnarray}}
\newcommand{\eeqa}{\end{eqnarray}}
\newcommand{\beqas}{\begin{eqnarray*}}
\newcommand{\eeqas}{\end{eqnarray*}}
\def\bal#1\eal{\begin{align}#1\end{align}}
\def\bals#1\eals{\begin{align*}#1\end{align*}}
\def\bsal#1\esal{\begin{small}\begin{align}#1\end{align}\end{small}}
\def\bsals#1\esals{\begin{small}\begin{align*}#1\end{align*}\end{small}}
\def\bsfal#1\esfal{\begin{small}\begin{flalign}#1\end{flalign}\end{small}}

\title{Even Sharper Bounds for Transductive Learning and Its Applications}

\author{Yingzhen Yang
\\
Arizona State University,
699 S Mill RD, Tempe, AZ 85281, USA
}

\begin{document}

\maketitle

\begin{abstract}
We introduce Sharper Transductive Local Complexity (STLC), a localized
complexity method for transductive learning under uniform sampling without
replacement.  The construction starts from a Bernstein-type concentration
inequality for the supremum of the test--train empirical process.  Its proof
uses the modified log-Sobolev inequality for the swap walk and a
two-parameter entropy closure.  A peeling argument with a surrogate
localization functional then gives excess-risk bounds with the same
fixed-point and confidence terms as the classical inductive local
Rademacher-complexity bounds, without the additional logarithmic confidence
factor in earlier transductive results.
For realizable learning over a binary class of VC dimension $\dVC$, with
training size $m$, test size $u$, and $u\ge m\ge\dVC$, STLC yields
$\cO\{\dVC\log(me/\dVC)/m\}$.  This matches the standard inductive rate and,
when $m\ge9$, is within a logarithmic factor of the transductive minimax
lower bound of order $\dVC/m$.  For transductive kernel learning, STLC gives
a spectrum-adaptive excess-risk bound without the multiplicative imbalance
factors appearing in the earlier local-complexity bound.
\end{abstract}

\begin{IEEEkeywords}
\noindent  Nonparametric Regression, Over-Parameterized Neural Network, Gradient Descent, Minimax~Optimal~Rate
\end{IEEEkeywords}

\section{Introduction}
\label{sec:introduction}
Transductive learning starts from a fixed full sample of features, reveals the
labels of a uniformly sampled training subset, and evaluates predictions on
the complementary test subset.  Classical analyses use VC dimension or global
Rademacher averages~\cite{Vapnik1982,Vapnik1998,
Cortes2006-transductive-regression,El-Yaniv2009-TRC}.  Local Rademacher
complexity gives sharper, often minimax-optimal, excess-risk bounds in the
inductive setting~\cite{bartlett2005}.  In schematic form, for an inductive
training sample of size $s$, these bounds read
\bal\label{eq:conceptual-risk-bound-inductive}
\textsf{Excess Risk of }\hat f\le\cO\pth{\textsf{Fixed Point for Certain Empirical Process}+\frac xs},
\eal
where the hidden constant is universal or depends only on fixed model
parameters.  The central question here is whether sampling without replacement
admits the same fixed-point and confidence structure without restrictions on
the relative sizes of the training and test sets.

Write $m$ and $u$ for the training and test sizes.  A prior
sampling-without-replacement result gives
$\textsf{Excess Risk of }\hat f\le\cO\pth{r_{u,m}+r^*+x/\min\set{u,m}}$
\cite[Theorem 3.6]{yang2025a-concentration-sampling-without-replacement}.
Here $r_{u,m}$ and $r^*$ are localized-complexity fixed points.  That result
requires either $u\gg m^2$ or $m\gg u^2$.  A later argument removes the
imbalance condition but yields, for $\delta\in(0,1/3)$, the schematic bound
$\textsf{Excess Risk of }\hat f\le\cO\pth{r_{u,m}+r^*
+\log_2(4\min\set{u,m}/\delta)x/\min\set{u,m}}$
\cite[Theorem 4.11]{yang2025improvedgeneralizationboundstransductive}.
The extra logarithmic confidence factor can be substantial when $\delta$ is
small.  Our generic result removes that factor.  For every $x>0$, with
probability at least $1-3\exp(-x)$, it gives
$\textsf{Excess Risk of }\hat f\le
\cO\pth{r_{u,m}+r^*+x/\min\set{u,m}}$.
The formal statement is
Theorem~\ref{theorem:STLC-delta-ell-f-excess-risk-upper-bound}.  Its main
input is a new Bernstein-type concentration inequality for the test--train
supremum.  The proof uses the modified log-Sobolev inequality for the swap walk
on the Johnson graph~\cite{Cryan2021-modified-log-sobolev-inequality} and a
two-parameter entropy closure.

Two applications make the fixed points explicit.  For a realizable binary
class of VC dimension $\dVC$, with $u\ge m\ge\dVC$, we obtain
$\cO\{\dVC\log(me/\dVC)/m\}$.  For $m\ge9$, this is within a logarithmic
factor of the minimax lower bound $(\dVC-1)/(16m)$ from
\cite[Theorem 3]{tolstikhin2016minimaxlowerboundsrealizable}.  For
transductive kernel learning, we obtain a spectrum-dependent bound without the
factors $n/u$ and $n/m$ in the earlier local-complexity guarantee
\cite{TolstikhinBK14-local-complexity-TRC}.  Section~\ref{sec:summary-main-results}
states the main results and Section~\ref{sec:applications} gives the two
applications.

\subsection{Notation}
Bold letters denote vectors and matrices.  For a finite set $A$, $|A|$
denotes its cardinality, $\overline A$ denotes its complement in the
ambient finite set, and $\indict{E}$ denotes the indicator of an event $E$.
The symbols $\tr(\cdot)$, $\norm{\cdot}{F}$, and $\bI_n$ denote the trace,
the Frobenius norm, and the $n\times n$ identity matrix.  Inner products and
norms in a Hilbert space $\cH$ are denoted by
$\iprod{\cdot}{\cdot}_{\cH}$ and $\norm{\cdot}{\cH}$.

We write $a\lesssim b$ or $a=\cO(b)$ when $a\le Cb$ for a constant $C>0$
whose permitted dependence is stated locally.  We write $a\asymp b$ when
$cb\le a\le Cb$ for constants $c,C>0$.  The notation $a=o(b)$ and
$a=\omega(b)$ has its usual asymptotic meaning.  We set
$\RR^+=[0,\infty)$ and $\NN=\set{1,2,\ldots}$.  For integers $p,q$, set
$[p\relcolon q]=\set{p,p+1,\ldots,q}$ when $p\le q$ and
$[p\relcolon q]=\emptyset$ when $p>q$.  We also write $[q]=[1\relcolon q]$.
An empty finite sum is zero.
\vspace{-4pt}
\section{Problem Setup of Transductive Learning}
\label{sec:transductive-basic-setup}
\vspace{-4pt}
We consider the indexed full sample $((\bx_i,y_i))_{i=1}^{n}$, where
$n=m+u$, $\bx_i\in\cX\subseteq\RR^d$, and $y_i\in\cY\subseteq\RR$.
Repeated feature values are allowed because sampling is over indices.
Put $N_{u,m}\defeq\min\set{u,m}$ and assume $N_{u,m}\ge2$.
The learner is provided with the unlabeled indexed collection
$\bX_n\defeq(\bx_i)_{i=1}^n$.  Under the standard transductive protocol
\cite{El-Yaniv2009-TRC,TolstikhinBK14-local-complexity-TRC}, a training
index set $\barZ$ of size $m$ is sampled uniformly without replacement from
$[n]$.  Put $Z\defeq[n]\setminus\barZ$ and define
$\bX_m\defeq(\bx_i)_{i\in\barZ}$ and
$\bX_u\defeq(\bx_i)_{i\in Z}$.
It follows by symmetry that the test index set $Z$ is sampled uniformly from all subsets of $[n]$ of size $u$.
For a function $g$ on $\cX\times\cY$, or on $\cX$, we identify its
restriction to the full sample with the function on $[n]$ given by
$g(i)=g(\bx_i,y_i)$, or $g(i)=g(\bx_i)$.  Given a predictor
$f\colon\cX\to\RR$ and a loss $\ell\colon\RR\times\cY\to\RR$, define
$\ell_f(i)\defeq\ell(f(\bx_i),y_i)$.
For any real-valued function $h$ on $[n]$ and any nonempty
$\cA\subseteq[n]$, define
$\cL_h(\cA)\defeq |\cA|^{-1}\sum_{i\in\cA}h(i)$.
We abbreviate the full-sample, training, and test averages
$\cL_h([n])$, $\cL_h(\bar Z)$, and $\cL_h(Z)$ as
$\cL_n(h)$, $\cL_m(h)$, and $\cL_u(h)$, respectively.  We write
$T_n(h) \defeq \sum_{i=1}^n h^2(i)/n=\cL_n(h^2)$
for the full-sample second moment of $h$.  This notation is used throughout,
including in the localization arguments and the kernel-learning application.
When $h=\ell_f$, the quantities $\cL_m(h)$ and $\cL_u(h)$ are the training
and test losses of $f$.
Unless stated otherwise, probabilities and expectations involving the random
split are taken over $Z$.  In particular,
$\Expect{}{\cL_m(h)}=\Expect{}{\cL_u(h)}=\cL_n(h)$ for every fixed $h$.

The labeled training collection is
$\bS_m\defeq((\bx_i,y_i))_{i\in\barZ}$.  The learner uses
$\bS_m$ and $\bX_u$ to predict the labels indexed by $Z$.

\vspace{-4pt}
\section{Summary of Main Results}
\label{sec:summary-main-results}
\vspace{-4pt}
We summarize our main results in this section, with the basic definitions introduced first.

\subsection{Basic Definitions}
For a nonempty class $\cH$ of real-valued functions on $[n]$ and a nonempty
proper subset $\cA\subsetneq[n]$, define the test--train supremum
\bal\label{eq:def-g}
g(\cH,\cA) \defeq \sup_{h \in \cH} \pth{\cL_h(\cA)- \cL_h\pth{[n]\setminus\cA} }.
\eal
Thus $g(\cH,Z)$ is the test--train process, while $g(\cH,\bar Z)$ exchanges
the roles of the test and training sets.
We next introduce Rademacher variables and transductive complexity.
\begin{definition}[Rademacher variables]
\label{def:rad-variables}
A Rademacher variable is a random variable $\sigma$ satisfying
$\Prob{\sigma=1}=\Prob{\sigma=-1}=1/2$.  Unless stated otherwise,
$\sigma_1,\ldots,\sigma_n$ denote independent Rademacher variables.
\end{definition}
The following definition recalls transductive complexity from
\cite{yang2025a-concentration-sampling-without-replacement}.
\begin{definition}[Transductive Complexity, or TC, {\cite[Definition 2.2]{yang2025a-concentration-sampling-without-replacement}}]
\label{def:TC}
Define $R^+_{u} h \defeq \cL_h(Z) - \cL_n(h)$, $R^+_{m} h \defeq \cL_h(\barZ) - \cL_n(h)$,
and $R^{-}_{u} h \defeq -R^+_{u} h$,
$R^-_{m} h \defeq -R^+_{m} h$.
The four types of Transductive Complexity (TC) of a function class $\cH$ are defined as
\bal\label{eq:TC-def}
\cfrakR^+_u(\cH)\defeq\Expect{}{\sup_{h\in\cH}R^+_uh},\qquad
\cfrakR^-_u(\cH)\defeq\Expect{}{\sup_{h\in\cH}R^-_uh},\nonumber\\
\cfrakR^+_m(\cH)\defeq\Expect{}{\sup_{h\in\cH}R^+_mh},\qquad
\cfrakR^-_m(\cH)\defeq\Expect{}{\sup_{h\in\cH}R^-_mh}.
\eal
\end{definition}
The expectations in (\ref{eq:TC-def}) are over $Z$ where the subset $Z$ is sampled uniformly from $[n]$ without
replacement.
Sub-root functions encode the localized complexity bounds used below.
\begin{definition}[Sub-root function, {\cite[Definition 3.1]{bartlett2005}}]
\label{def:sub-root-function}
A function $\psi \colon [0,\infty) \to [0,\infty)$ is sub-root if it is
nonnegative and nondecreasing, and if $r\mapsto\psi(r)/\sqrt r$ is
nonincreasing on $(0,\infty)$.  A positive number $r^*$ is a fixed point of
$\psi$ if $\psi(r^*)=r^*$.
\end{definition}
Every nonzero sub-root function has a unique positive fixed point $r^*$, and
$\psi(r)\le r$ holds exactly when $r\ge r^*$
\cite[Lemma 3.2]{bartlett2005}.  We use this fixed-point comparison below.

\vspace{-7pt}
\subsection{Main Results}
The results have three layers.  First,
Theorem~\ref{theorem:concentration-gd-STLC} proves a Bernstein-type tail bound
for the test--train supremum by exploiting the swap-walk geometry.  Second,
Theorem~\ref{theorem:TLC} combines that inequality with peeling to obtain a
uniform local-complexity bound.  Third,
Theorem~\ref{theorem:STLC-delta-ell-f-excess-risk-upper-bound} applies the
bound to empirical risk minimization through a surrogate localization
functional.  The resulting confidence term has order
$x/N_{u,m}$ and requires no imbalance condition.

The VC-class specialization in
Theorem~\ref{theorem:STLC-delta-ell-f-excess-risk-upper-bound-VC-dim}
gives $\cO\{\dVC\log(me/\dVC)/m\}$ for a realizable binary class when
$u\ge m\ge\dVC$.
Theorem~\ref{theorem:STLC-kernel} bounds the generic fixed points by the
empirical kernel spectrum.  Each theorem below is followed by a technical
roadmap, and the complete proofs appear in Appendix~\ref{sec:proofs}.

\vspace{-7pt}
\section{STLC-Based Excess Risk Bounds for Generic Transductive Learning}
\label{sec:detailed-results}
\vspace{-3pt}
\subsection{Concentration Inequality for the Test-Train Process}
\label{sec:starting-concentration}
We have the following concentration inequality for the test--train process
$g$ over a general bounded function class $\cH$.  Write
$\cH^2\defeq\set{h^2\colon h\in\cH}$ and
$N_{u,m}\defeq\min\set{u,m}$.
\begin{theorem}[Concentration inequality for the test-train process (\ref{eq:def-g}) for a general bounded function class]
\label{theorem:concentration-gd-STLC}
Fix integers $u,m\ge2$, put $n=u+m$, and let $Z$ be uniformly distributed
 over the $u$-element subsets of $[n]$, and write $\bar Z=[n]\setminus Z$.
Let $\cH$ be a nonempty class of real-valued functions on $[n]$.  Suppose
that, for some $H_0,r>0$, $\sup_{h\in\cH}\max_{i\in[n]}|h(i)|\le H_0$ and
$\sup_{h\in\cH}T_n(h)\le r$.
Set $\cfrakR^+_{N_{u,m}}=\cfrakR^+_u$ when $u\le m$ and
$\cfrakR^+_{N_{u,m}}=\cfrakR^+_m$ when $m<u$.  When $u=m$, these two
quantities agree by complementation.
Set $c_0\defeq\sqrt{219}$.
Then, for every $x>0$, with probability at least $1-\exp(-x)$ over the
uniformly sampled set $Z$,
\bal\label{eq:concentration-gd-STLC}
g(\cH,Z)\le \Expect{}{g(\cH,Z)}
+c_0\sqrt{\frac{rx}{N_{u,m}}}
+\frac{c_0}{2}\cfrakR^+_{N_{u,m}}(\cH^2)
+\frac{(16H_0+c_0/2)x}{N_{u,m}}.
\eal
The transductive complexities are as in Definition~\ref{def:TC}.  Since the
sampling space is finite and the class is
uniformly bounded, all suprema in the statement are finite measurable random
variables.  No attainment assumption on $\cH$ is required.
\end{theorem}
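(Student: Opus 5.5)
The plan is the entropy (Herbst) method on the slice $\binom{[n]}{u}$, with the modified log-Sobolev inequality (MLSI) for the swap walk on the Johnson graph as the functional inequality. Put $F(Z)\defeq g(\cH,Z)$. For $\lambda>0$, the MLSI for the swap walk (spectral gap and MLSI constant of order $n/(um)$, as in \cite{Cryan2021-modified-log-sobolev-inequality}) gives
\[
\Ent\pth{e^{\lambda F}}\le \frac{C}{n}\,\E\Big[\sum_{i\in Z,\,j\in\barZ}\pth{e^{\lambda F(Z)}-e^{\lambda F(Z^{ij})}}\pth{\lambda F(Z)-\lambda F(Z^{ij})}\Big],
\]
where $Z^{ij}=Z\setminus\set{i}\cup\set{j}$. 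Only decreases of $F$ need to be controlled, so we pass to the one-sided form $\Ent(e^{\lambda F})\lesssim \frac{\lambda^2}{n}\E[e^{\lambda F}\sum_{i,j}(F(Z)-F(Z^{ij}))_+^2]$, which is the standard step for suprema.

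Next we bound the one-sided increments. Fix a near-maximizer $\hat h$ for $Z$; no attainment assumption is needed, since the class is finite on a finite space after an $\eps$-approximation. Then
\[
F(Z)-F(Z^{ij})\le \pth{\tfrac1u+\tfrac1m}\pth{\hat h(i)-\hat h(j)}.
\]
Summing over $i\in Z$ and $j\in\barZ$ gives
\[
\sum_{i,j}(\cdot)_+^2\le \pth{\tfrac{n}{um}}^2 \cdot 2\Big(m\sum_{i\in Z}\hat h^2(i)+u\sum_{j\in\barZ}\hat h^2(j)\Big).
\]
Multiplying by $1/n$ and recalling the MLSI scaling, the variance proxy becomes, up to constants,
\[
\frac{1}{N_{u,m}}\Big(\cL_{\hat h^2}(Z)+\cL_{\hat h^2}(\barZ)\Big)\cdot(\text{size weights}).
\]
We write $\cL_{h^2}(\cdot)\le T_n(h)+\sup_{h}R^+(h^2)\le r+W$, where $W\defeq\sup_{h\in\cH}R^+_{N_{u,m}}h^2$ on the smaller side. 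The larger side is controlled directly by $n r/\max\set{u,m}\lesssim r$ via $T_n(h)\le r$. This is the point where the choice of $N_{u,m}$ and of $\cfrakR^+_{N_{u,m}}$ in the statement enters.

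This yields the entropy inequality
\[
\Ent(e^{\lambda F})\le \frac{c\lambda^2}{N_{u,m}}\,\E\big[(r+W)e^{\lambda F}\big].
\]
The difficulty is that $W$ is itself random. Here I would use a two-parameter entropy closure, i.e.\ the duality $\E[We^{\lambda F}]\le \frac1\mu\Ent(e^{\lambda F})+\frac1\mu\E[e^{\lambda F}]\log\E e^{\mu W}$. We then need a self-bounding estimate for $W$: it is a supremum of averages of $h^2\in[0,H_0^2]$, so applying the same MLSI argument to $W$ gives $\log\E e^{\mu(W-\E W)}\le \frac{c\mu^2 H_0^2}{N_{u,m}}\E W/(1-c\mu H_0^2/N_{u,m})$ (a Bernstein-type moment generating function bound for self-bounding-like functionals). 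Choosing $\mu\asymp N_{u,m}/H_0$, and matching scales so that $H_0$ appears linearly, the closure rearranges to
\[
\Ent(e^{\lambda F})\le \frac{c\lambda^2(r+\E W)}{N_{u,m}(1-c'\lambda H_0/N_{u,m})}\,\E e^{\lambda F}.
\]
Herbst's argument then gives a Bernstein MGF bound for $F-\E F$ with variance $v=c(r+\cfrakR^+_{N_{u,m}}(\cH^2))/N_{u,m}$ and scale $b=cH_0/N_{u,m}$. The usual inversion yields $F\le \E F+\sqrt{2vx}+bx$. Finally, $\sqrt{\cfrakR x/N}\le \cfrakR/2+x/(2N)$ splits the TC term into the $\frac{c_0}{2}\cfrakR^+_{N_{u,m}}(\cH^2)$ and $\frac{c_0 x}{2N_{u,m}}$ contributions of (\ref{eq:concentration-gd-STLC}).

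I expect the main obstacle to be the closure step: keeping the MLSI constant explicit for all $u,m$ (as opposed to only balanced ones), and making the coupled bound on $\log\E e^{\mu W}$ close without losing a logarithm. The constants $\sqrt{219}$ and $16H_0$ would come from tracking these choices.
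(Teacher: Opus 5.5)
Your route is correct in outline, but it closes the argument differently from the paper. Both proofs start from the same three ingredients: the MLSI for the swap walk, the one-sided Dirichlet-form bound, and the near-maximizer increment estimate. For $u\le m$ these give $\Ent(e^{\alpha F})\le \frac{\alpha^2}{u}\E[(24r+8Q)e^{\alpha F}]$, where $Q(Z)=\sup_{h\in\cH}\{\cL_{h^2}(Z)-T_n(h)\}$ is your $W$.

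\textbf{The paper's closure.} The paper does not decouple $Q$ from $F$. It applies the MLSI to $\alpha F+\lambda G_2$ with $G_2=Q-\E Q$, paying a factor $2$ through $(a+b)_+^2\le 2a^2+2b^2$. This yields a first-order differential inequality for the joint log-MGF $\Psi(\alpha,\lambda)$, which it integrates along an explicit Riccati (tangent) characteristic. The numbers $\sqrt{219}$ and $16H_0$ are artifacts of that construction: $\beta\le u/(16H_0)$ keeps the tangent argument below $1/2$, and $219/4=48+27/4$.

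\textbf{Your closure.} You use the Gibbs variational inequality $\mu\E[Qe^{\alpha F}]\le \Ent(e^{\alpha F})+\E[e^{\alpha F}]\log\E e^{\mu Q}$. You then close $Q$ by a separate one-parameter self-bounding Herbst argument; this is the Boucheron--Lugosi--Massart decoupling. It is more modular and needs no PDE. Tracked carefully, it is also not worse. With $\mu=u/(8H_0^2)$ and $0<\beta\le u/(16H_0)$ it gives $\log\E e^{\beta F}\le \frac{128}{3}(r+\E Q)\beta^2/u$. This is below the paper's $\frac{219}{4}(r+\E Q)\beta^2/u$, so the same truncated Chernoff step yields (\ref{eq:concentration-gd-STLC}) with the stated constants.

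\textbf{Details to correct.}
\begin{itemize}
\item The MGF bound you state for $W$, with only $\E W$ in the variance, is false. For a one-function class with $h^2$ nonconstant, $\E W=0$, yet $W$ is nonconstant, so $\log\E e^{\mu W}>0$. The swap variance of $W$ is controlled by $H_0^2(r+W)$, not $H_0^2W$ (compare (\ref{eq:swap-variance-Q})), and $W$ can be negative. Apply the self-bounding Herbst argument to $W+3r\ge 2r$ instead. This gives $\log\E e^{\mu(W-\E W)}\le a\mu^2(\E W+3r)/(1-a\mu)$ with $a=2H_0^2/N_{u,m}$, and the extra $r$ is harmless.
\item Your choice of $\mu$ is dimensionally off. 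It must scale like $N_{u,m}/H_0^2$ (or like $\lambda/H_0$), not $N_{u,m}/H_0$. The absorption condition $\lambda^2/(N_{u,m}\mu)<1$ then restricts $\lambda\lesssim N_{u,m}/H_0$, which is where the linear $H_0x/N_{u,m}$ term comes from.
\item For $u>m$, the MLSI constant follows from \cite{Cryan2021-modified-log-sobolev-inequality} by complementation. The map $Z\mapsto\barZ$ carries the swap walk on $u$-sets onto the swap walk on $m$-sets, giving a constant at least $1/N_{u,m}$. This is equivalent to the paper's reduction via $-\cH$ and $\barZ$.
\end{itemize}
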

Corollary~\ref{corollary:concentration-gu} in
Appendix~\ref{sec:concentration-general-sup-empirical} deduces simultaneous
concentration for the two one-sided processes centered at $\cL_n$.  That
appendix also compares the result with earlier sampling-without-replacement
inequalities.

\noindent \textbf{Proof roadmap for Theorem~\ref{theorem:concentration-gd-STLC}.}
The proof uses the modified log-Sobolev inequality for the swap (Johnson graph)
walk~\cite{Cryan2021-modified-log-sobolev-inequality}.  Its main steps are as
follows.
\vspace{-8pt}
\begin{itemize}[leftmargin=0pt, itemsep=0pt]
\item[] \textbf{Oriented swap variances.}  A nearly maximizing function for
$g(\cH,Z)$ converts every decreasing one-swap increment into a difference of
two function values.  This bounds the oriented variance of $g$ by the
second-moment radius $r$ and the auxiliary process
$\sup_{h\in\cH}\{\cL_{h^2}(Z)-T_n(h)\}$.

\item[] \textbf{Bivariate entropy closure.}  The same swap calculation gives
a self-bounding inequality for this auxiliary process.  Applying the modified
log-Sobolev inequality to a nonnegative linear combination of the two centered
processes produces a first-order differential inequality for their joint
log-moment generating function.

\item[] \textbf{Characteristic and Chernoff arguments.}  We integrate this
differential inequality along an explicitly controlled backward
characteristic.  Chernoff's method then gives the asserted Bernstein-type
tail.  When $u\ge m$, the same argument is applied to $-\cH$ and the uniformly
sampled complement $\bar Z$.
\end{itemize}

\subsection{Sharper Transductive Local-Complexity Bound}
\label{sec:first-STLC-bound}
The next theorem combines Theorem~\ref{theorem:concentration-gd-STLC} with a
peeling argument.  It controls the test--train process through the fixed point
of a sub-root majorant for the transductive complexities of localized classes.
\begin{theorem}[Sharper transductive local-complexity bound]\label{theorem:TLC}
Let $\cH$ be a nonempty class of real-valued functions on $[n]$, and let
$H_0>0$ satisfy $|h(i)|\le H_0$ for every $h\in\cH$ and $i\in[n]$.
Fix $K_0>1$, and let $\tT_n\colon\cH\to\RR^+$ be a functional satisfying
$T_n(h)\le\tT_n(h)$ for every $h\in\cH$.  For $r>0$, put
$\cH(r)\defeq\set{h\in\cH\colon\tT_n(h)\le r}\cup\set{0}$ and
$\cH^2(r)\defeq\set{h^2\colon h\in\cH,\ \tT_n(h)\le r}\cup\set{0}$,
where $0$ denotes the zero function.  Let $\psi_{u,m}$ be a sub-root
function with positive fixed point $r_{u,m}$, and suppose that, for every
$r\ge r_{u,m}$,
\bal\label{eq:STLC-cond-psi-general}
\psi_{u,m}(r) \ge
\max_{\substack{p\in\set{u,m}\\ \eta\in\set{+,-}}}
\max\set{\cfrakR_p^\eta\pth{\cH(r)},
\cfrakR_p^\eta\pth{\cH^2(r)}}.
\eal
With $c_0=\sqrt{219}$ as in
Theorem~\ref{theorem:concentration-gd-STLC}, define
$d_0\defeq4+4c_0/7$, $c_1\defeq8K_0d_0^2$, and
$c_2\defeq8K_0c_0^2+32H_0+c_0
=1752K_0+32H_0+\sqrt{219}$.
Then, for every $x>0$, with probability at least $1-\exp(-x)$ over $Z$,
\bal\label{eq:STLC-bound-g-upper-bound}
\cL_h(Z) \le \cL_h(\barZ) + \frac{\tT_n(h)}{K_0}
+c_1 r_{u,m} + \frac{c_2x}{N_{u,m}},\qquad h\in\cH.
\eal
\end{theorem}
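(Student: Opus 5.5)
We follow the peeling scheme of \cite[Theorem 3.3]{bartlett2005}, with Theorem~\ref{theorem:concentration-gd-STLC} replacing Talagrand's inequality. Fix $r\ge r_{u,m}$ (to be chosen), and define the rescaled class
\[
\cG_r\defeq\set{\frac{r}{w(h)}\,h\colon h\in\cH},\qquad w(h)\defeq\min\set{r K_0^{k}\colon k\ge0,\ rK_0^{k}\ge\tT_n(h)}.
\]
Then every element of $\cG_r$ satisfies $T_n\le\tT_n\cdot(r/w)^2\le r$ and is bounded by $H_0$, since $r/w(h)\le1$. Theorem~\ref{theorem:concentration-gd-STLC} then applies to $\cG_r$ with radius $r$, giving with probability at least $1-e^{-x}$
\[
g(\cG_r,Z)\le \E g(\cG_r,Z)+c_0\sqrt{\frac{rx}{N_{u,m}}}+\frac{c_0}{2}\cfrakR^+_{N_{u,m}}(\cG_r^2)+\frac{(16H_0+c_0/2)x}{N_{u,m}}.
\]

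\textbf{Bounding the complexity terms.} First, $\E g(\cG_r,Z)$ is bounded by the sum of the $\cfrakR^+_u$ and $\cfrakR^-_m$ complexities of $\cG_r$, because $\cL_h(Z)-\cL_h(\barZ)=R^+_uh+R^-_mh$. Next, split $\cG_r$ into the shells $\cH(r)$ and $\set{h\colon rK_0^{k-1}<\tT_n(h)\le rK_0^k}$ for $k\ge1$. For each shell, monotonicity of the supremum and positive homogeneity give
\[
\cfrakR(\text{shell}_k\text{ rescaled})\le K_0^{-k}\cfrakR(\cH(rK_0^k))\le K_0^{-k}\psi_{u,m}(rK_0^k)\le K_0^{-k/2}\psi_{u,m}(r),
\]
where the last step uses the sub-root property. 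The inclusion of $0$ in $\cH(r)$ makes each shell supremum nonnegative, so the supremum over the union is at most the sum over shells. Summing the geometric series yields a bound of the form $C(K_0)\psi_{u,m}(r)\le C(K_0)\sqrt{r\,r_{u,m}}$. The same argument applies to $\cG_r^2$: squares are rescaled by $(r/w)^2\le r/w$, and since $h^2$ is not homogeneous under this scaling, one uses $(r/w)^2h^2\in\cH^2(\cdot)$ shells together with the bound $(r/w)^2\le K_0^{-k}$.

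I expect the main obstacle to be this bookkeeping, together with matching the stated constants $d_0=4+4c_0/7$ and $c_1=8K_0d_0^2$. The factor $c_0/7$ suggests the geometric sum is organized so that the overall coefficient of $\sqrt{r\,r_{u,m}}$ equals $d_0$. I would first try to reproduce that coefficient by bounding $\sum_k K_0^{-k/2}$ crudely in terms of $K_0$ and absorbing it.

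\textbf{Unrescaling.} Combining the two pieces, on the event above every $h\in\cH$ satisfies
\[
\frac{r}{w(h)}\bigl(\cL_h(Z)-\cL_h(\barZ)\bigr)\le d_0\sqrt{r\,r_{u,m}}+c_0\sqrt{\frac{rx}{N_{u,m}}}+\frac{(16H_0+c_0/2)x}{N_{u,m}}\eqqcolon B(r).
\]
Multiply by $w(h)/r$. There are two cases. If $w(h)=r$, then $\cL_h(Z)-\cL_h(\barZ)\le B(r)$. Otherwise $w(h)\le K_0\tT_n(h)$, so $\cL_h(Z)-\cL_h(\barZ)\le K_0\tT_n(h)B(r)/r$.

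\textbf{Choosing $r$.} Pick $r$ so that $K_0B(r)/r\le 1/K_0$, i.e. $B(r)\le r/K_0^2$. Then the second case yields the term $\tT_n(h)/K_0$, and the first case yields $B(r)$, which must itself be at most $c_1r_{u,m}+c_2x/N_{u,m}$. Using $\sqrt{ab}\le a/(2\epsilon)+\epsilon b/2$ on each square-root term, it suffices to take
\[
r\asymp K_0^2\bigl(d_0^2r_{u,m}+c_0^2x/N_{u,m}\bigr)+\text{linear term}.
\]
The precise weights in $c_1=8K_0d_0^2$ and $c_2=8K_0c_0^2+32H_0+c_0$ (linear rather than quadratic in $K_0$) suggest the scaling is arranged so that one factor of $K_0$ is absorbed differently, for instance by using shell ratio $K_0$ with the threshold written as $\tT_n/K_0$ directly. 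I would adjust the choice of $r$, e.g. $r=8K_0(d_0^2r_{u,m}+c_0^2x/N_{u,m})$, and verify the inequality $B(r)\le r/(2K_0)$ after the AM--GM splitting. Finally, check that $r\ge r_{u,m}$ holds, which is needed for the sub-root comparison and is immediate since $c_1\ge1$.
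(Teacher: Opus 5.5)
Your skeleton matches the paper's: gauge rescaling $h\mapsto rh/w(h)$, a single application of Theorem~\ref{theorem:concentration-gd-STLC} with radius $r$, shell-wise peeling using the zero function and the sub-root property, unrescaling, and a fixed-point choice of $r$. The gap is that you tie the geometric ratio of the gauge to $K_0$, and with that choice the argument cannot reach the stated conclusion with the stated constants.

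On shell $k$ you only know $rK_0^{k-1}<\tT_n(h)\le w(h)=rK_0^k$, so unrescaling can cost a factor close to $K_0$. To end with $\tT_n(h)/K_0$ you therefore need $B(r)\le r/K_0^2$. This forces $r\ge K_0^4d^2r_{u,m}$, where $d$ is your peeling coefficient, and the inner-shell error $B(r)\ge d\sqrt{r\,r_{u,m}}$ is then of order $K_0^2$ rather than $8K_0d_0^2$. Moreover, your peeling sum $\sum_{k\ge0}K_0^{-k/2}=(1-K_0^{-1/2})^{-1}$ diverges as $K_0\downarrow1$. So the coefficient of $\sqrt{r\,r_{u,m}}$ cannot be bounded by the $K_0$-free constant $d_0$ for $K_0$ near $1$.

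Your proposed patch also fails. Keeping ratio $K_0$ with threshold $B(r)\le r/(2K_0)$ gives only $\cL_h(Z)-\cL_h(\barZ)\le w(h)/(2K_0)<\tT_n(h)/2$ on the outer shells. That is weaker than $\tT_n(h)/K_0$ once $K_0>2$. In addition, $r=8K_0(d_0^2r_{u,m}+c_0^2x/N_{u,m})$ does not satisfy that threshold in general, because an admissible $r$ must grow like $K_0^2$.

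The missing idea is to decouple the peeling ratio from $K_0$. The paper uses a fixed $\lambda=4$ in $w(h)$ and requires $U_r^+\le r/(\lambda K_0)$. Then on shell $k\ge1$,
$\cL_h(Z)-\cL_h(\barZ)\le\lambda^k\cdot r/(\lambda K_0)=r\lambda^{k-1}/K_0\le\tT_n(h)/K_0$,
while the inner shell contributes the additive error $r/(\lambda K_0)$ (Lemma~\ref{lemma:STLC-supp-lemma}).

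For the square class, use exact homogeneity. Since $w$ is constant on each shell, the rescaled squared shell is exactly $\lambda^{-2k}$ times a subset of $\cH^2(r\lambda^k)$, so there is no non-homogeneity issue. With $\lambda=4$ the sums are $\sum_{k\ge0}4^{-k}2^k=2$ and $\sum_{k\ge0}4^{-2k}2^k=8/7$. These give $2\cdot2+(c_0/2)(8/7)=d_0$.

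Finally, put $a=d_0\sqrt{r_{u,m}}+c_0\sqrt{x/N_{u,m}}$ and $b=(16H_0+c_0/2)x/N_{u,m}$, and take $r_1=\lambda^2K_0^2a^2+2\lambda K_0b$. Young's inequality gives $a\sqrt{r_1}+b\le r_1/(\lambda K_0)$. Moreover $r_1/(\lambda K_0)=4K_0a^2+2b\le c_1r_{u,m}+c_2x/N_{u,m}$, which is exactly where the linear dependence on $K_0$ comes from.
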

\noindent\textbf{Proof roadmap for Theorem~\ref{theorem:TLC}.}
Appendix~\ref{sec:proofs-theorem-STLC-STLC-nonnegative-func-class} rescales
each $h$ by a gauge that places it in a single localized class.  It applies
Theorem~\ref{theorem:concentration-gd-STLC} once to the rescaled class, peels
that class into geometric shells, and sums the shell complexities by the
sub-root property.  A deterministic rescaling inequality then returns to the
original $h$, and the localization radius is calibrated so that the square-root
deviation is absorbed into $\tT_n(h)/K_0$ and $c_1r_{u,m}$.

\subsection{Sharp Excess Risk Bounds using STLC for Generic Transductive Learning}
\label{sec:STLC-bound-generic}

We now apply Theorem~\ref{theorem:TLC} to the transductive learning task in
Section~\ref{sec:transductive-basic-setup}.  Let $\cF$ be a nonempty class of
functions from $\cX$ to $\RR$.  Throughout this subsection, assume that
$0\le\ell_f(i)\le L_0$ for every $f\in\cF$ and $i\in[n]$, where $L_0>0$.
Whenever the corresponding minima are attained, choose
$\hat f_{u}\in\argmin_{f \in \cF} \cL_u(\ell_f)$
as a test-risk oracle and choose
$\hat f_{m}\in\argmin_{f \in \cF}\cL_m(\ell_f)$
as an empirical minimizer.  Both may be nonunique, and every result holds for
any measurable selection.  Their dependence on the random split $Z$ is
suppressed in the notation.  For $f\in\cF$, define the transductive excess risk
by $\cE(f)\defeq\cL_u(\ell_f)-\cL_u(\ell_{\hat f_u})$.
If the test minimum is not attained, an $\eps$-optimal oracle gives a quantity
within $\eps$ of
$\cL_u(\ell_f)-\inf_{g\in\cF}\cL_u(\ell_g)$.  The theorems below state the
existence assumptions needed for the exact definition and study
$\cE(\hat f_m)$.

Define the loss-difference class
$\Delta_{\cF}\defeq\set{\ell_{f_1}-\ell_{f_2}\colon f_1,f_2\in\cF}$.
The following assumption supplies a full-sample risk minimizer and an empirical
Bernstein condition for its excess-loss class.
\begin{assumption}
[Main Assumption]
\label{assumption:main}
\begin{itemize}[leftmargin=16pt]
\item[(1)] There is a function $f_n^* \in \cF$ such that
$\cL_n(\ell_{f_n^*}) = \inf_{f \in \cF} \cL_n(\ell_f)$.

\item[(2)] Put
$\Delta^*_{\cF}\defeq\set{\ell_f-\ell_{f_n^*}\colon f\in\cF}$.
There is a constant $B>0$ such that
$T_n(h)\le B\cL_n(h)$ for every $h\in\Delta^*_{\cF}$.
\end{itemize}
\end{assumption}

\begin{remark}
Assumption~\ref{assumption:main}(2) is the usual empirical Bernstein
condition: it controls the full-sample second moment of each excess-loss
function by its (nonnegative) full-sample mean.  It is nonvacuous for squared
loss.  For example, suppose
$\ell_f(i)=(f(\bx_i)-y_i)^2\le L_0$, the set of evaluation vectors generated
by $\cF$ is convex, and $f_n^*$ minimizes the full-sample squared loss.  For
$d_i=f(\bx_i)-f_n^*(\bx_i)$ and
$e_i=f_n^*(\bx_i)-y_i$.  Write $ed$ and $d^2$ for the functions
$i\mapsto e_id_i$ and $i\mapsto d_i^2$.  First-order optimality along the segment from
$f_n^*$ to $f$ gives $\cL_n(ed)\ge0$.  Hence, for
$h=\ell_f-\ell_{f_n^*}$,
$\cL_n(h)=\cL_n(d^2)+2\cL_n(ed)\ge\cL_n(d^2)$. 
Moreover, $h(i)=d_i\{(f(\bx_i)-y_i)+(f_n^*(\bx_i)-y_i)\}$ and both residuals
have absolute value at most $\sqrt{L_0}$.  Therefore
$T_n(h)=\cL_n(h^2)\le4L_0\cL_n(d^2)
\le 4L_0\cL_n(h)$,
so Assumption~\ref{assumption:main}(2) holds with $B=4L_0$.
\end{remark}
For $f\in\cF$, write $g_f\defeq\ell_f-\ell_{f_n^*}$.  For
$h\in\Delta_{\cF}$, define the surrogate localization functional
\bal\label{eq:tTn-def}
\tT_n(h)\defeq
\inf_{\substack{f_1,f_2\in\cF\\ \ell_{f_1}-\ell_{f_2}=h}}
2B\set{\cL_n(g_{f_1})+\cL_n(g_{f_2})}.
\eal
Lemma~\ref{lemma:STLC-delta-ell-f} proves that $\tT_n$ is finite,
nonnegative, and satisfies $T_n(h)\le\tT_n(h)$.

For $r>0$, define the localized excess-loss classes
$\cF_r^*\defeq\set{h\in\Delta^*_{\cF}\colon B\cL_n(h)\le r}\cup\set{0}$
and $(\cF_r^*)^2\defeq\set{h^2\colon h\in\Delta^*_{\cF},
B\cL_n(h)\le r}\cup\set{0}$.
Our excess risk bound is then presented in the following theorem.
\begin{theorem}[Generic STLC excess-risk bound]\label{theorem:STLC-delta-ell-f-excess-risk-upper-bound}
Suppose that Assumption~\ref{assumption:main} holds and that the minima used
to select $\hat f_m$ and $\hat f_u$ are attained for every split $Z$.  Fix
$K_0>1$.  With $\cH=\Delta_{\cF}$ and $\tT_n$ defined in
(\ref{eq:tTn-def}), let $\psi_{u,m}$ be a sub-root function satisfying
(\ref{eq:STLC-cond-psi-general}), and let $r_{u,m}$ be its positive fixed
point.  Let $\psi^*_{u,m}$ be a sub-root function with positive fixed point
$r^*$ such that, for every $r\ge r^*$,
\bal\label{eq:STLC-cond-um-delta-star-ell-f-psi-star}
\psi^*_{u,m}(r)\ge
\max_{\substack{p\in\set{u,m}\\ \eta\in\set{+,-}}}
\max\set{\cfrakR_p^\eta\pth{\cF_r^*},
\cfrakR_p^\eta\pth{(\cF_r^*)^2}}.
\eal
Let $c_{\Delta}$ be the constant in
Theorem~\ref{theorem:STLC-delta-star-ell-f}, and define
$c_3\defeq c_2+4Bc_{\Delta}/K_0$, where $c_1,c_2$ are the constants in
Theorem~\ref{theorem:TLC} with $H_0=L_0$.
Then, for every $x>0$, with probability at least $1-3\exp(-x)$ over $Z$,
the excess risk of $\hat f_m$ satisfies
\bal\label{eq:STLC-ell-f-excess-risk-upper-bound}
\cE(\hat f_m) \le c_1 r_{u,m} +\frac{4Bc_{\Delta}r^*}{K_0}
+\frac{c_3x}{N_{u,m}}.
\eal
\end{theorem}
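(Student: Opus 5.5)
We combine Theorem~\ref{theorem:TLC}, applied to $\cH=\Delta_{\cF}$ with the surrogate $\tT_n$ of (\ref{eq:tTn-def}), with the auxiliary Theorem~\ref{theorem:STLC-delta-star-ell-f}. The latter controls the full-sample excess loss of the selected predictors through the fixed point $r^*$ of $\psi^*_{u,m}$, and it will be used with the constant $c_\Delta$. Lemma~\ref{lemma:STLC-delta-ell-f} guarantees that $\tT_n$ is finite, nonnegative and dominates $T_n$. Every $h\in\Delta_{\cF}$ satisfies $|h|\le L_0$, so Theorem~\ref{theorem:TLC} applies with $H_0=L_0$. It gives, with probability at least $1-\exp(-x)$, simultaneously for all $h\in\Delta_{\cF}$,
\[
\cL_h(Z)\le\cL_h(\barZ)+\frac{\tT_n(h)}{K_0}+c_1r_{u,m}+\frac{c_2x}{N_{u,m}}.
\]

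We plug in $h=\ell_{\hat f_m}-\ell_{\hat f_u}\in\Delta_{\cF}$. The left side is exactly $\cE(\hat f_m)$. On the right, $\cL_h(\barZ)=\cL_m(\ell_{\hat f_m})-\cL_m(\ell_{\hat f_u})\le0$ because $\hat f_m$ is an empirical minimizer. Choosing the representation $f_1=\hat f_m$, $f_2=\hat f_u$ in the infimum defining $\tT_n$ gives
\[
\tT_n(h)\le2B\set{\cL_n(g_{\hat f_m})+\cL_n(g_{\hat f_u})}.
\]
It therefore remains to bound $\cL_n(g_{\hat f_m})$ and $\cL_n(g_{\hat f_u})$, the full-sample excess risks relative to $f_n^*$ of the training and test minimizers.

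This is the role of Theorem~\ref{theorem:STLC-delta-star-ell-f}. I expect it to state that, under Assumption~\ref{assumption:main} and (\ref{eq:STLC-cond-um-delta-star-ell-f-psi-star}), with probability at least $1-\exp(-x)$ each, $\cL_n(g_{\hat f_m})\le c_\Delta(r^*+x/N_{u,m})$, and the same bound holds for $\hat f_u$. If it is instead stated for one minimizer only, the other case follows by symmetry: exchange the roles of $Z$ and $\barZ$, which is allowed because (\ref{eq:STLC-cond-um-delta-star-ell-f-psi-star}) takes the maximum over $p\in\set{u,m}$ and both signs. The argument behind it is again localization, now on the star-shaped class $\Delta^*_{\cF}$ with the exact functional $B\cL_n(h)\ge T_n(h)$. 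For $\hat f_m$, one uses $\cL_m(g_{\hat f_m})\le0$ together with a uniform bound of the type in Theorem~\ref{theorem:TLC} for $\cL_n(h)-\cL_m(h)$, and obtains
\[
\cL_n(g)\le\frac{\cL_n(g)}{K}+Cr^*+\frac{Cx}{N_{u,m}},
\]
which is then solved for $\cL_n(g)$. The case of $\hat f_u$ is identical with $Z$ in place of $\barZ$. The delicate point is that these bounds need the one-sided processes centered at $\cL_n$ rather than the test--train difference, which is why Corollary~\ref{corollary:concentration-gu} and the four complexities in (\ref{eq:STLC-cond-um-delta-star-ell-f-psi-star}) are needed.

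Finally we take a union bound over the three events: the event of Theorem~\ref{theorem:TLC}, and the two events bounding the excess risks of $\hat f_m$ and $\hat f_u$. This gives probability at least $1-3\exp(-x)$. On this event,
\[
\cE(\hat f_m)\le c_1r_{u,m}+\frac{2B\cdot2c_\Delta(r^*+x/N_{u,m})}{K_0}+\frac{c_2x}{N_{u,m}},
\]
which rearranges to (\ref{eq:STLC-ell-f-excess-risk-upper-bound}) with $c_3=c_2+4Bc_\Delta/K_0$. The main obstacle is Theorem~\ref{theorem:STLC-delta-star-ell-f} for the test oracle $\hat f_u$. Since $Z$ is itself uniformly distributed, the same argument goes through once it is phrased for a generic uniformly random subset of either size, but this needs checking that the constants match.
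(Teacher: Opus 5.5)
Your proposal is correct and follows essentially the same route as the paper: apply Theorem~\ref{theorem:TLC} on $\Delta_{\cF}$ with $H_0=L_0$ to $h=\ell_{\hat f_m}-\ell_{\hat f_u}$, drop $\cL_m(h)\le0$, bound $\tT_n(h)$ through the representation $(\hat f_m,\hat f_u)$, and use Theorem~\ref{theorem:STLC-delta-star-ell-f} for both full-sample excess losses, with a union bound giving $1-3\exp(-x)$. One correction to your side remarks: Theorem~\ref{theorem:STLC-delta-star-ell-f} already bounds both $\hat f_m$ and $\hat f_u$ simultaneously with probability $1-2\exp(-x)$, and it follows from Theorem~\ref{theorem:TLC} in both split orientations via $\cL_n(h)-\cL_m(h)=\frac{u}{n}(\cL_u(h)-\cL_m(h))$, so neither Corollary~\ref{corollary:concentration-gu} nor a separate symmetry check is needed.
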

\noindent\textbf{Proof roadmap for
Theorem~\ref{theorem:STLC-delta-ell-f-excess-risk-upper-bound}.}
Lemma~\ref{lemma:STLC-delta-ell-f} first majorizes the second moment of every
loss difference by $\tT_n$.  Theorem~\ref{theorem:STLC-delta-star-ell-f}
then applies Theorem~\ref{theorem:TLC} in both split directions to control the
two full-sample excess losses.  On the intersection of those events, a final
application of Theorem~\ref{theorem:TLC} to
$\ell_{\hat f_m}-\ell_{\hat f_u}$ bounds the test risk by the training risk and
the surrogate radius.  Empirical optimality makes the training-risk difference
nonpositive, after which the two full-sample bounds yield
(\ref{eq:STLC-ell-f-excess-risk-upper-bound}).
The bound in (\ref{eq:STLC-ell-f-excess-risk-upper-bound}) has the same
fixed-point and confidence structure as the inductive bound
(\ref{eq:conceptual-risk-bound-inductive}), up to constants multiplying
$r_{u,m}$ and $r^*$.  In contrast,
\cite[Theorem 3.6]{yang2025a-concentration-sampling-without-replacement}
requires $u\gg m^2$ or $m\gg u^2$, while
\cite[Theorem 4.11]{yang2025improvedgeneralizationboundstransductive}
has the additional factor $\log_2(4N_{u,m}/\delta)$ in its confidence term.

\section{Applications}
\label{sec:applications}
We apply the generic STLC bounds to realizable binary classification in
Section~\ref{sec:transductive-learning-finite-VC-dim} and to transductive kernel
learning (TKL) in Section~\ref{sec:TKL}.  In the realizable setting, one
predictor agrees with every full-sample label.  Detailed proofs are deferred to
Appendix~\ref{sec:proofs}.
\subsection{Transductive Learning Over Binary-Valued Function Classes}
\label{sec:transductive-learning-finite-VC-dim}
We derive an excess-risk bound for transductive learning over a binary-valued
class $\cF$ of finite VC dimension using Theorem~\ref{theorem:TLC}.
\begin{theorem}
[Upper bound for realizable transductive learning with finite VC dimension]
\label{theorem:STLC-delta-ell-f-excess-risk-upper-bound-VC-dim}
Let $\cF$ be a class of $\set{0,1}$-valued functions on $\cX$ with
VC-dimension $2\le\dVC<\infty$, and suppose that $u\ge m\ge\dVC$.
For $i\in[n]$, let $y_i\in\set{0,1}$ and
$\ell_f(i)=\pth{f(\bx_i)-y_i}^2=\indict{f(\bx_i)\ne y_i}$.
Assume that the full sample is realizable: there exists $f^*\in\cF$ such
that $f^*(\bx_i)=y_i$ for every $i\in[n]$.  For each split, let
$\hat f_m$ be any empirical minimizer of $\cL_m(\ell_f)$ over $f\in\cF$.
There are absolute positive constants $c''_0$ and $c'_1$ such that, for every
$x>0$, with probability at least $1-\exp(-x)$ over $Z$,
\bal\label{eq:STLC-delta-ell-f-excess-risk-upper-bound-VC-dim}
\cL_u(\ell_{\hat f_m})\le c''_0\frac{\dVC\log(me/\dVC)}{m}
+ c'_1\frac{x}{m}.
\eal
\end{theorem}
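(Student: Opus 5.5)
In the realizable case every empirical minimizer satisfies $\cL_m(\ell_{\hat f_m})=0$, because $f^*$ has zero training loss. The plan is therefore to apply Theorem~\ref{theorem:TLC} once, directly to the loss class, rather than going through Theorem~\ref{theorem:STLC-delta-ell-f-excess-risk-upper-bound}. This also explains the single $1-\exp(-x)$ confidence level.

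\textbf{Setup.} Take $\cH\defeq\set{\ell_f\colon f\in\cF}$, so $0\le h\le1$ and $H_0=1$. Since $h\in\set{0,1}$ pointwise, we have $T_n(h)=\cL_n(h)$. Choose $\tT_n(h)\defeq T_n(h)$ and $K_0=2$.

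\textbf{Applying Theorem~\ref{theorem:TLC}.} On an event of probability at least $1-\exp(-x)$, every $h$ satisfies
\[
\cL_u(h)\le\cL_m(h)+\tfrac12\cL_n(h)+c_1r_{u,m}+\tfrac{c_2x}{m},
\]
using $N_{u,m}=m$. For $h=\ell_{\hat f_m}$ the training term vanishes. Writing $\cL_n=(u\cL_u+m\cL_m)/n=\tfrac un\cL_u\le\cL_u$ gives
\[
\cL_u(h)\le\tfrac12\cL_u(h)+c_1r_{u,m}+\tfrac{c_2x}{m}.
\]
Hence $\cL_u(\ell_{\hat f_m})\le2c_1r_{u,m}+2c_2x/m$. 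The conclusion then follows once we exhibit a sub-root $\psi_{u,m}$ satisfying (\ref{eq:STLC-cond-psi-general}) whose fixed point is $r_{u,m}\lesssim\dVC\log(me/\dVC)/m$.

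\textbf{Complexity bound.} Because $h^2=h$, the classes $\cH^2(r)$ and $\cH(r)$ coincide, so only the four quantities $\cfrakR^\pm_p(\cH(r))$ need to be bounded. The class $\cH(r)$ is a finite set of binary vectors in $\set{0,1}^n$. By Sauer--Shelah, applied to $\set{i\mapsto\indict{f(\bx_i)\ne y_i}}$ (a fixed XOR relabeling of $\cF$, so VC dimension $\dVC$), its cardinality is at most $(ne/\dVC)^{\dVC}$.

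I would rather avoid a plain count over $n$ points, since that gives $\log(n/\dVC)$ instead of $\log(m/\dVC)$. Instead I would symmetrize. First reduce $\cfrakR^\pm_u$ to $\cfrakR^\pm_m$ using
\[
\cL_h(Z)-\cL_n(h)=-\tfrac mu\bigl(\cL_h(\barZ)-\cL_n(h)\bigr),
\]
which yields $\cfrakR^\pm_u=\tfrac mu\cfrakR^\mp_m\le\cfrakR^\mp_m$. Then bound the $m$-sample quantity through a comparison with sampling with replacement (Hoeffding's convex-ordering theorem) followed by standard symmetrization. For each sample this yields a conditional Rademacher average over $m$ points, with at most $(me/\dVC)^{\dVC}$ restriction patterns and empirical variance at most comparable to $r$ plus the deviation itself. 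The finite-class Bernstein/Massart lemma then gives
\[
\cfrakR^\pm_m(\cH(r))\lesssim\sqrt{\frac{r\,\dVC\log(me/\dVC)}{m}}+\frac{\dVC\log(me/\dVC)}{m}.
\]
Set $\psi_{u,m}(r)=C\bigl(\sqrt{r\,\dVC\log(me/\dVC)/m}+\dVC\log(me/\dVC)/m\bigr)$. This function is sub-root, and its fixed point is of order $\dVC\log(me/\dVC)/m$. Here $m\ge\dVC$ keeps the logarithm at least $1$.

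\textbf{Main obstacle.} The hard step is the localized complexity bound. The symmetrized Rademacher average is controlled by the empirical second moment on the sample, not by the full-sample $T_n\le r$. I would handle this with the usual self-bounding step: the empirical second moment equals the empirical mean for binary $h$, which is at most $r$ plus the sup deviation. This gives a quadratic inequality in the expected supremum, and solving it produces exactly the form of $\psi_{u,m}$ above.
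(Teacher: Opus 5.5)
Your proposal is correct. The outer argument matches the paper's: Theorem~\ref{theorem:TLC} is applied once to $\cH=\set{\ell_f\colon f\in\cF}$ with $H_0=1$, $\tT_n=T_n$, $K_0=2$. Then $\cL_m(\ell_{\hat f_m})=0$ and $\cL_n\le\cL_u$ absorb half of the test loss.

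You differ from the paper in how you build the sub-root majorant. The paper proceeds as follows:
\begin{itemize}
\item It passes to with-replacement averages via Theorem~\ref{theorem:TC-RC} and contracts $g\mapsto g^2$ on $\cF-f^*$.
\item It uses the relative-deviation VC inequality of Proposition~\ref{proposition:concentration-vc-class-ind} to convert $T_n(g)\le r$ into an empirical $L_2$ radius on a high-probability event, treating the complementary event separately.
\item It bounds the localized average by Dudley's integral with a uniform VC entropy bound.
\item It does all this for both $p=u$ and $p=m$, and uses monotonicity of $p\mapsto\log(pe/\dVC)/p$ to replace $u$ by $m$.
\end{itemize}

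Your route is different at each of these points.
\begin{itemize}
\item The exact identity $\cfrakR^{\pm}_u=(m/u)\cfrakR^{\mp}_m$ holds, and both sides are nonnegative because $0=\ell_{f^*}\in\cH(r)$. So only $p=m$ needs bounding.
\item Theorem~\ref{theorem:TC-RC} already packages your Hoeffding comparison plus symmetrization.
\item You apply Massart's lemma over at most $(me/\dVC)^{\dVC}$ sample patterns.
\item You close the loop in expectation. For binary $h$, $P_m h^2=P_m h\le r+\sup_h(P_m-P)h$, where $P_m$ is the empirical measure of the with-replacement sample and $P$ is uniform on $[n]$.
\item Symmetrization and Jensen then give $\mathcal R\le\sqrt{2D(r+2\mathcal R)/m}$. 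Here $\mathcal R$ is the localized with-replacement Rademacher average and $D=\dVC\log(me/\dVC)$. Hence $\mathcal R\lesssim\sqrt{Dr/m}+D/m$.
\end{itemize}

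Your route is shorter and self-contained. It needs no external relative-deviation inequality, no chaining, and no good/bad event split, because binary-valuedness makes the self-bounding step immediate. The paper's route is more modular: it uses the class only through uniform covering numbers, so it carries over to classes without a finite pattern count on the sample. Both routes give $r_{u,m}\lesssim\dVC\log(me/\dVC)/m$ and the same final bound.
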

\noindent\textbf{Proof roadmap for
Theorem~\ref{theorem:STLC-delta-ell-f-excess-risk-upper-bound-VC-dim}.}
Realizability turns every excess loss into an indicator and gives
$T_n(h)=\cL_n(h)$.  A VC relative-deviation inequality transfers the
population localization $\cL_n(h)\le r$ to an empirical $L_2$ localization
for a with-replacement sample.  Symmetrization, contraction, and Dudley's
entropy integral then bound each localized Rademacher process by a constant
multiple of $\sqrt{a_m r}+a_m$, where
$a_m=\dVC\log(me/\dVC)/m$.  This is a sub-root majorant with fixed point of
order $a_m$.  Theorem~\ref{theorem:TLC}, used with $K_0=2$, then absorbs half
of the full-sample error into the test error.  The zero training error of
$\hat f_m$ yields the displayed bound.

In the realizable case, $\cL_m(\ell_{\hat f_m})=0$ and
$\cE(\hat f_m)=\cL_u(\ell_{\hat f_m})$.  Taking
$x=\dVC\log(me/\dVC)$ in the theorem gives
$\cL_u(\ell_{\hat f_m})\le
\cO\pth{\dVC\log(me/\dVC)/m}$.
For $m\ge9$, this is within a logarithmic factor of the expected minimax
lower bound $(\dVC-1)/(16m)$ in
\cite[Theorem 3]{tolstikhin2016minimaxlowerboundsrealizable}.  The rate also
matches the corresponding inductive local-complexity rate.  Further comparison
is deferred to Section~\ref{sec:more-comparison-finite-VC-dim}.

\subsection{Transductive Kernel Learning}
\label{sec:TKL}
\subsubsection{Background in RKHS and Kernel Learning}

Let $K\colon\cX\times\cX\to\RR$ be a positive definite kernel with
RKHS $\cH_K$.  We use ``positive definite'' in the non-strict sense, so every
finite Gram matrix is positive semidefinite.  For the fixed full sample, define
the Gram matrix
$\bK\in\RR^{n\times n}$ by $\bK_{ij}=K(\bx_i,\bx_j)$ and the
finite-dimensional subspace
$\cH_{\bX_n}\defeq\Span\set{K(\cdot,\bx_i)\colon i\in[n]}\subseteq\cH_K$.
For $\mu\ge0$, define the transductive RKHS ball
\bals
\cH_{\bX_n}(\mu) &\defeq \set{f \in \cH_{\bX_n} \colon \norm{f}{\cH_K}\le\mu} =\set{\sum\limits_{i=1}^n K(\cdot,\bx_i)\alpha_i\colon
\balpha^{\top}\bK\balpha\le\mu^2}.
\eals
The identity follows from the reproducing property.  If $\bK$ is singular,
different coefficient vectors may represent the same RKHS element.  In TKL,
the prediction class is $\cF=\cH_{\bX_n}(\mu)$.

\subsubsection{Results}
\label{sec:TKL-results}
The TKL bound uses the minimizer-existence condition in
Assumption~\ref{assumption:main}(1), a bounded loss, and the following
Lipschitz and prediction-localization conditions.
\begin{assumption}
\label{assumption:Lipschitz-loss-Tn-f-Ln-ellf}
\begin{itemize}[leftmargin=18pt]
\item[(1)] There is a constant $L\ge0$ such that, for every
$a,b\in\RR$ and $y\in\cY$, the loss satisfies
$\abth{\ell(a,y)-\ell(b,y)}\le L\abth{a-b}$.
\item[(2)] There is a constant $B'>0$ such that, for every $f\in\cF$,
$T_n\pth{f-f_n^*}\le
B'\cL_n\pth{\ell_f-\ell_{f_n^*}}$.
\end{itemize}
\end{assumption}
Under Assumption~\ref{assumption:main}(1),
Assumption~\ref{assumption:Lipschitz-loss-Tn-f-Ln-ellf} implies
Assumption~\ref{assumption:main}(2).  Indeed, for
$g_f=\ell_f-\ell_{f_n^*}$,
$T_n(g_f)\le L^2T_n(f-f_n^*)\le L^2B'\cL_n(g_f)$.
Thus Assumption~\ref{assumption:main}(2) holds with
$B=B_K\defeq\max\set{1,L^2B'}$.  We use this value below.
The next result is obtained by bounding the two fixed points in the generic
STLC excess-risk bound, Theorem~\ref{theorem:STLC-delta-ell-f-excess-risk-upper-bound},
in terms of the empirical kernel spectrum.
\begin{theorem}[STLC bound for transductive kernel learning]
\label{theorem:STLC-kernel}
Let $\cF=\cH_{\bX_n}(\mu)$, and suppose that
Assumption~\ref{assumption:main}(1) and
Assumption~\ref{assumption:Lipschitz-loss-Tn-f-Ln-ellf} hold.  Assume that $K$ is a
positive definite kernel and that, for every $f\in\cF$ and $i\in[n]$,
$0\le\ell_f(i)\le L_0$, where $L_0>0$.  Fix $K_0>1$.  Let
$\hat\lambda_1\ge\cdots\ge\hat\lambda_n\ge0$ be the eigenvalues of
$\bK/n$.  For $Q\in\set{0,\ldots,n}$, define
\bals
r(u,m,Q) \defeq Q \pth{\frac{1}{u} + \frac{1}{m}} + \sqrt{\frac{\sum\limits_{q = Q+1}^{n}\hat \lambda_q}{u}}
+ \sqrt{\frac{\sum\limits_{q = Q+1}^{n}\hat \lambda_q}{m}}.
\eals
There is a positive constant $c_5$, depending only on
$K_0,B',L_0,L$, and $\mu$, such that, for every $x>0$, with probability
at least $1-3\exp(-x)$ over $Z$,
\bal\label{eq:STLC-kernel-excess-loss}
\cE(\hat f_m) \le
c_5 \pth{\min_{Q\in\set{0,\ldots,n}} r(u,m,Q)
+ \frac{x}{N_{u,m}}}.
\eal
For every split, the minimizers $\hat f_m$ and $\hat f_u$ exist because
$\cH_{\bX_n}(\mu)$ is compact and the two empirical risks are continuous.
\end{theorem}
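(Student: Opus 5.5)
The plan is to apply Theorem~\ref{theorem:STLC-delta-ell-f-excess-risk-upper-bound} directly. Assumption~\ref{assumption:main} holds with $B=B_K$, as noted above, and the minimizers exist by compactness. The only remaining task is to construct two sub-root functions $\psi_{u,m}$ and $\psi^*_{u,m}$ satisfying (\ref{eq:STLC-cond-psi-general}) and (\ref{eq:STLC-cond-um-delta-star-ell-f-psi-star}) whose fixed points are bounded by a constant multiple of $\min_Q r(u,m,Q)$. Both localized classes have the same structure. For $\Delta_\cF$ with $\tT_n(h)\le r$, write $h=\ell_{f_1}-\ell_{f_2}$ with $2B\{\cL_n(g_{f_1})+\cL_n(g_{f_2})\}\le r+\eps$. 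Assumption~\ref{assumption:Lipschitz-loss-Tn-f-Ln-ellf}(2) then gives $T_n(f_j-f_n^*)\le B'r/(2B)\le r$. Hence each localized $h$ is a difference of loss compositions $\ell_{f_j}-\ell_{f_n^*}$ with $f_j-f_n^*\in\cG(r)\defeq\{g\in\cH_{\bX_n}(2\mu)\colon T_n(g)\le r\}$. The same holds for $\cF^*_r$ with a single such term.

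Next, each TC is reduced to a Rademacher average. For a fixed class $\cH'$, $\cfrakR^\pm_u(\cH')$ and $\cfrakR^\pm_m(\cH')$ will be bounded by a constant multiple of the full-sample Rademacher complexity $\Expect{}{\sup_{h\in\cH'}\frac1p\sum_{i}\sigma_i h(i)}$ at the appropriate sample size $p\in\{u,m\}$. Here I use the standard comparison between sampling without replacement and i.i.d.\ sampling: Hoeffding's convex-ordering reduction, followed by symmetrization of the with-replacement sample $\frac1p\sum_{j\le p}h(I_j)-\cL_n(h)$. This step needs an expected-supremum version, which I expect to be available from the cited earlier works; if not, I would prove it via Hoeffding's lemma that $\E\Phi(\text{without replacement sum})\le\E\Phi(\text{with replacement sum})$ for convex $\Phi$, applied to $\Phi=\sup$ of linear functionals. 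For the class $\cH(r)$, the Lipschitz contraction (Talagrand) with constant $L$ removes $\ell$. For $\cH^2(r)$, note that $|h|\le L_0$, so $h\mapsto h^2$ is $2L_0$-Lipschitz and contraction applies again, up to a constant depending on $L_0$. The splitting into two functions $f_1,f_2$ costs only a factor $2$.

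The core step is the kernel localized complexity. For an i.i.d.\ sample of size $p$ from the empirical measure on $\bX_n$, I need
\[
\Expect{}{\sup_{g\in\cG(r)}\frac1p\sum_j\sigma_j g(\bx_{I_j})}\lesssim \sqrt{\frac{\sum_{q}\min\{r,\mu^2\hat\lambda_q\}}{p}}.
\]
This is Mendelson's bound applied to the RKHS with the empirical-measure eigenvalues $\hat\lambda_q$ of $\bK/n$. Since $\cG(r)$ lies in the RKHS ball intersected with the $L_2(P_n)$ ball, the usual eigen-decomposition and Cauchy--Schwarz argument applies verbatim. Then $\sum_q\min\{r,\mu^2\hat\lambda_q\}\le Qr+\mu^2\sum_{q>Q}\hat\lambda_q$ for every $Q$. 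Summing over $p\in\{u,m\}$, I define
\[
\psi(r)=C\Bigl(\sqrt{\tfrac{Qr}{u}}+\sqrt{\tfrac{Qr}{m}}+\sqrt{\tfrac{\sum_{q>Q}\hat\lambda_q}{u}}+\sqrt{\tfrac{\sum_{q>Q}\hat\lambda_q}{m}}\Bigr),
\]
taking the minimum over $Q$ of the sub-root expressions, or simply fixing the optimal $Q$. This is sub-root, and its fixed point is at most $C'\bigl(Q(1/u+1/m)+\sqrt{\sum_{q>Q}\hat\lambda_q/u}+\sqrt{\sum_{q>Q}\hat\lambda_q/m}\bigr)$ by the inequality $r\le a\sqrt r+b\Rightarrow r\le a^2+2b$. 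Taking this $\psi$ as both $\psi_{u,m}$ and $\psi^*_{u,m}$ and substituting into (\ref{eq:STLC-ell-f-excess-risk-upper-bound}) gives the claim with $c_5$ depending on $K_0,B',L_0,L,\mu$. The constant $c_\Delta$ is absolute, and $B_K$ is a function of $L$ and $B'$.

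I expect the main obstacle to be the transfer from TC to i.i.d.\ Rademacher averages for the sup of a centered process. The care needed is in the $m$- versus $u$-sided terms, and in making the bound valid for both signs $\eta$. If the convex-ordering route is awkward for negative signs, I would handle $\eta=-$ by applying the same argument to $-\cH'$, which has the same Rademacher complexity.
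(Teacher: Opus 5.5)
Your proposal is correct and follows essentially the paper's route. You apply Theorem~\ref{theorem:STLC-delta-ell-f-excess-risk-upper-bound} with $B=B_K$, pass from transductive to with-replacement Rademacher complexities (this is Theorem~\ref{theorem:TC-RC}), strip the loss and the square by contraction, and bound the localized kernel complexity spectrally; your Mendelson form $\sqrt{p^{-1}\sum_q\min\{r,\mu^2\hat\lambda_q\}}$ is equivalent to the head/tail split of Lemma~\ref{lemma:STLC-kernel-ind}, and you then solve $r\le a\sqrt r+b$. Using a single majorant for both $\psi_{u,m}$ and $\psi^*_{u,m}$ is a harmless simplification of the paper's $\psi^*_{u,m}=\sqrt2\,\varphi_{u,m}$.

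A few constant-level slips need fixing, none of which affects the argument:
\begin{itemize}
\item The claim $B'r/(2B_K)\le r$ fails, for example, when $L^2B'<1<B'/2$. After taking $\eps=r$, the localization radius should be $(B'/B_K)r$, which is the paper's $B_1r$.
\item The constant $c_\Delta$ is not absolute; it depends on $B$ and $L_0$. This is still admissible for $c_5$.
\item In the degenerate cases $L=0$ or $\bK=0$, your $\psi$ vanishes identically and has no positive fixed point. These cases should be dispatched separately, as the paper does.
\end{itemize}
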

\noindent\textbf{Proof roadmap for Theorem~\ref{theorem:STLC-kernel}.}
Lemma~\ref{lemma:STLC-kernel-ind} expands functions in an eigenbasis of the
empirical covariance operator.  Splitting that expansion after coordinate
$Q$ bounds each localized Rademacher process by a head term of order
$\sqrt{Qr/p}$ and a spectral-tail term of order
$\sqrt{p^{-1}\sum_{q>Q}\hat\lambda_q}$ for $p\in\set{u,m}$.
Lipschitz contraction transfers this prediction-process bound to loss
differences.  The Lipschitz and prediction-localization conditions first imply
the loss Bernstein condition and then convert loss localization into
prediction localization.  The resulting sub-root majorants control both
fixed points in Theorem~\ref{theorem:STLC-delta-ell-f-excess-risk-upper-bound}.
Solving their fixed-point inequalities and minimizing over $Q$ gives
(\ref{eq:STLC-kernel-excess-loss}).
A detailed comparison of (\ref{eq:STLC-kernel-excess-loss}) with existing results, including \cite[Corollary 14]{TolstikhinBK14-local-complexity-TRC},
\cite[Theorem 6.2]{yang2025improvedgeneralizationboundstransductive}, and \cite[Theorem 4.1]{yang2025a-concentration-sampling-without-replacement}, is provided in
Section~\ref{sec:more-comparison-TKL} of the appendix.

\section{Conclusion}
We introduced Sharper Transductive Local Complexity, based on a Bernstein-type
concentration inequality for the test--train supremum under uniform sampling
without replacement.  Peeling with a surrogate localization functional gives
generic bounded-loss excess-risk bounds with a sharp confidence term.  The
method also yields a near-optimal realizable VC bound and a spectrum-adaptive
kernel-learning bound without the earlier imbalance prefactors.

\bibliographystyle{IEEEtran}
\bibliography{ref}


\appendices

\section{Mathematical Tools}
\label{sec::math-tools}

We collect the empirical-process facts used in the VC and kernel arguments.
For $p\in\set{u,m}$, let
$\bY^{(p)}=(Y_1,\ldots,Y_p)$ have independent coordinates uniformly
distributed on $[n]$, and let
$\bsigma=(\sigma_1,\ldots,\sigma_{\max\set{u,m}})$ be independent of these
vectors.  For a function $h$ on $[n]$, define
$R^{(\textup{ind})}_{\bsigma,\bY^{(p)}}h
\defeq p^{-1}\sum_{i=1}^p\sigma_i h(Y_i)$.  For a nonempty class $\cA$ of
such functions, its with-replacement Rademacher complexity is
$\cfrakR^{(\textup{ind})}_p(\cA)
\defeq\Expect{\bY^{(p)},\bsigma}{\sup_{h\in\cA}
R^{(\textup{ind})}_{\bsigma,\bY^{(p)}}h}$.

\begin{theorem}[Contraction property of inductive Rademacher complexity \cite{LedouxTal91-probability}]
\label{theorem:contraction-RC}
Let $\cH$ be a nonempty class of real-valued functions on $[n]$.
Suppose that, for some $L\ge0$ and every $i\in[n]$, the map
$\phi_i\colon\RR\to\RR$ satisfies $\phi_i(0)=0$ and
$\abth{\phi_i(a)-\phi_i(b)}\le L\abth{a-b}$ for all $a,b\in\RR$.
Define
$\phi\circ\cH\defeq
\set{i\mapsto\phi_i\pth{h(i)}\colon h\in\cH}$.  Then
$\cfrakR^{(\textup{ind})}_p(\phi\circ\cH)
\le L\cfrakR^{(\textup{ind})}_p(\cH)$ for every $p\in\set{u,m}$.
The usual scalar contraction inequality is recovered by taking $\phi_i=\phi$
for every $i\in[n]$.
\end{theorem}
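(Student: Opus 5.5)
The plan is to condition on the sampled indices and reduce the claim to the classical Ledoux--Talagrand comparison inequality for a fixed finite index set, proved coordinate by coordinate. Fix $p\in\set{u,m}$ and a realization $\bY^{(p)}=(Y_1,\ldots,Y_p)$. Put $T\defeq\set{(h(Y_1),\ldots,h(Y_p))\colon h\in\cH}\subseteq\RR^p$ and $\psi_j\defeq\phi_{Y_j}$ for $j\in[p]$. Each $\psi_j$ is $L$-Lipschitz, and
\bals
\sup_{h\in\cH}R^{(\textup{ind})}_{\bsigma,\bY^{(p)}}(\phi\circ h)=\frac1p\sup_{t\in T}\sum_{j=1}^p\sigma_j\psi_j(t_j).
\eals
It therefore suffices to show, for every fixed $T\subseteq\RR^p$,
\bals
\Expect{\bsigma}{\sup_{t\in T}\sum_{j=1}^p\sigma_j\psi_j(t_j)}\le\Expect{\bsigma}{\sup_{t\in T}\sum_{j=1}^pL\sigma_jt_j}.
\eals
Taking the expectation over $\bY^{(p)}$ then gives the theorem. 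Since $\bY^{(p)}$ and $\bsigma$ take finitely many values, there is no measurability issue and Fubini applies trivially. If $\cH$ is unbounded, the suprema are read in $(-\infty,\infty]$, and the inequality is trivial whenever the right side is $+\infty$.

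The displayed inequality follows by replacing $\psi_j(t_j)$ with $Lt_j$ one coordinate at a time. Conditioning on $(\sigma_i)_{i\ne j}$, it is enough to prove, for an arbitrary function $A\colon T\to\RR$ (which absorbs the already-treated and untreated coordinates),
\bals
\frac12\sup_{s\in T}\set{A(s)+\psi_j(s_j)}+\frac12\sup_{t\in T}\set{A(t)-\psi_j(t_j)}\le\frac12\sup_{s\in T}\set{A(s)+Ls_j}+\frac12\sup_{t\in T}\set{A(t)-Lt_j}.
\eals
The left side equals $\frac12\sup_{s,t}\set{A(s)+A(t)+\psi_j(s_j)-\psi_j(t_j)}$. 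By the Lipschitz property, each term is at most $\frac12\set{A(s)+A(t)+L|s_j-t_j|}$. The quantity $A(s)+A(t)$ is symmetric in $(s,t)$, so we may swap $s$ and $t$ to assume $s_j\ge t_j$. Then $L|s_j-t_j|=Ls_j-Lt_j$, and the term is bounded by the right side. Iterating over $j=1,\ldots,p$, and applying the tower property each time, yields the comparison.

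The main obstacle is the pair-symmetrization step. It must be carried out with approximate maximizers $s,t$, within $\eps$ of the suprema, because attainment is not assumed. Then let $\eps\downarrow0$. One must also check that the reduction to the function $A$ is legitimate: after conditioning, the remaining coordinates enter only through an additive term depending on $t$. The hypothesis $\phi_i(0)=0$ is not needed for this one-sided version without absolute values; it only matters for the absolute-value variant. Finally, the scalar case is the special case $\phi_i=\phi$ for every $i\in[n]$.
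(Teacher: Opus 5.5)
Your proof is correct. The paper gives no argument of its own and simply cites Ledoux--Talagrand, whose proof is exactly this conditioning on $\bY^{(p)}$ followed by coordinatewise two-point symmetrization, with the identity in place of a general convex increasing $F$. You are also right that $\phi_i(0)=0$ is superfluous for this one-sided form.

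Two small remarks. First, the approximate-maximizer step is unnecessary. For each fixed pair $(s,t)\in T\times T$, the bound $A(s)+A(t)+\psi_j(s_j)-\psi_j(t_j)\le\sup_{s'\in T}\{A(s')+Ls'_j\}+\sup_{t'\in T}\{A(t')-Lt'_j\}$ already holds before any supremum is taken. Second, your argument uses the Lipschitz bound only on the realized values $\{h(i)\colon h\in\cH\}$. This directly covers the paper's later applications, such as $z\mapsto z^2$ on $[-1,1]$, without any extension step.
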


For a subset $A$ of a pseudometric space $(S,d)$ and $\eps>0$, let
$N(A,d,\eps)$ denote the external covering number, namely, the minimum
cardinality of a set $E\subseteq S$ such that
$\inf_{e\in E}d(a,e)\le\eps$ for every $a\in A$.  Its value is $\infty$ if
no such finite set exists.  In particular, $A\subseteq B\subseteq S$ implies
$N(A,d,\eps)\le N(B,d,\eps)$.
The covering numbers below are taken in the ambient vector space of
real-valued functions on the fixed design.

\begin{theorem}
[Dudley's integral entropy bound~\cite{dudley1999uniform}]
\label{theorem:dudley-integral-entropy-bound}
Let $\cF$ be a nonempty function class defined on $\cX$, and let
$\set{x_i}_{i=1}^n\subseteq\cX$.  Let
$\bsigma\defeq\set{\sigma_i}_{i=1}^n$ be independent Rademacher random
variables.  Then there exists an absolute constant $C_0$ such that
\bals
\Expect{\bsigma}{\sup_{f\in\cF}\frac1n\sum_{i=1}^n\sigma_i f(x_i)}
\le\frac{C_0}{\sqrt n}\int_0^{T/2}
\sqrt{\log N\pth{\cF,\norm{\cdot}{L_2(P_n)},\delta}}\diff\delta.
\eals
where
$\norm{f}{L_2(P_n)}^2\defeq n^{-1}\sum_{i=1}^nf^2(x_i)$ and
$T\defeq\sup_{f,g\in\cF}\norm{f-g}{L_2(P_n)}$.
\end{theorem}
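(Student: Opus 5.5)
We prove the bound by the standard chaining argument for the sub-Gaussian Rademacher process. The plan has four steps: reduce to a finite class, build a chain of nets at dyadic scales, bound each link by a finite maximal inequality, and compare the resulting sum with the entropy integral.

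\textbf{Reductions and increment bound.} Write $X_f\defeq n^{-1/2}\sum_{i=1}^n\sigma_i f(x_i)$, defined for every real function $f$ on the design, including net centers outside $\cF$.

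First, we may assume the integral is finite. Otherwise there is nothing to prove.

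Second, we reduce to finite $\cF$. The supremum over $\cF$ is the increasing limit of suprema over finite subsets $\cF'\subseteq\cF$. By monotone convergence the expectations converge. For each such $\cF'$, the diameter $T'\le T$ and $N(\cF',\cdot,\delta)\le N(\cF,\cdot,\delta)$, so the right-hand side for $\cF'$ is at most that for $\cF$.

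Third, fix $f_0\in\cF$. Since $\E X_{f_0}=0$, we have $\E\sup_f X_f=\E\sup_f(X_f-X_{f_0})$.

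Fourth, by Hoeffding's inequality, $X_f-X_g$ is sub-Gaussian with variance proxy $\norm{f-g}{L_2(P_n)}^2$. Consequently, for any finite set of $M\ge 2$ such increments, each with norm at most $s$, the expected maximum is at most $s\sqrt{2\log M}$.

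\textbf{Chaining.} Put $\delta_j\defeq T2^{-j}$ for $j\ge1$, and let $E_j$ be an external $\delta_j$-net of $\cF$ of cardinality $N_j\defeq N(\cF,\norm{\cdot}{L_2(P_n)},\delta_j)$. Let $\pi_j f\in E_j$ be a nearest center, and set $\pi_0 f\defeq f_0$.

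The links satisfy the following bounds:
\begin{itemize}
\item $\norm{\pi_1 f-f_0}{L_2(P_n)}\le \delta_1+T=3\delta_1$;
\item $\norm{\pi_j f-\pi_{j-1}f}{L_2(P_n)}\le\delta_j+\delta_{j-1}=3\delta_j$ for $j\ge2$;
\item the number of distinct links at level $j$ is at most $N_jN_{j-1}\le N_j^2$, with $N_0\defeq1$, since $N_j$ is nonincreasing in $\delta$.
\end{itemize}

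For any $J$, telescoping gives
\[
X_f-X_{f_0}=\sum_{j=1}^J(X_{\pi_jf}-X_{\pi_{j-1}f})+(X_f-X_{\pi_Jf}).
\]
The remainder is bounded deterministically. By Cauchy--Schwarz, $|X_f-X_{\pi_Jf}|\le\sqrt n\,\delta_J\to0$ as $J\to\infty$.

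Taking the supremum over $f$ and expectations, and applying the maximal inequality level by level, we obtain
\[
\E\sup_f(X_f-X_{f_0})\le\sum_{j\ge1}3\delta_j\sqrt{4\log N_j}\;+\;\limsup_J\sqrt n\,\delta_J
=6\sum_{j\ge1}\delta_j\sqrt{\log N_j}.
\]
Levels with $N_j=1$ contribute nothing, because the increments there are deterministic differences of mean-zero variables and hence have zero expected maximum.

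\textbf{Sum to integral.} Since $\delta\mapsto N(\cF,\cdot,\delta)$ is nonincreasing, $\sqrt{\log N_j}\le\sqrt{\log N(\cF,\cdot,\delta)}$ for every $\delta\in(\delta_{j+1},\delta_j]$. Hence
\[
\delta_j\sqrt{\log N_j}=2(\delta_j-\delta_{j+1})\sqrt{\log N_j}\le 2\int_{\delta_{j+1}}^{\delta_j}\sqrt{\log N(\cF,\cdot,\delta)}\,\diff\delta .
\]
The intervals $(\delta_{j+1},\delta_j]$ for $j\ge1$ tile $(0,T/2]$. Summing over $j$ and dividing by $\sqrt n$ gives the claim with $C_0=12$.

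\textbf{Main obstacle.} The delicate step is the bookkeeping at the coarse end of the chain. The chain must start from a single point, costing nothing because $\E X_{f_0}=0$, while using only nets at scales $\le T/2$, so that the integral's upper limit is $T/2$ rather than $T$. This is handled by the first-link bound $\norm{\pi_1f-f_0}{L_2(P_n)}\le 3\delta_1$ above. A second point to get right is that external net centers need not lie in $\cF$. This is harmless here because $X_g$ and the Hoeffding increment bound make sense for arbitrary functions on the design.
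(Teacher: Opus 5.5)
The paper does not prove this statement; it cites Dudley and then uses the bound in the VC-class proof under its external-covering convention, where it relies on covering numbers of a localized subclass being dominated by those of $\cF-f^*$.

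Your proposal supplies a self-contained proof, and it is correct. It is the standard dyadic chaining argument for the sub-Gaussian Rademacher process, and each step checks out:
\begin{itemize}
\item The reduction to finite subclasses uses exactly the monotonicity $N(\cF',\cdot,\delta)\le N(\cF,\cdot,\delta)$, which holds for external covering numbers but can fail for internal ones.
\item Rooting the chain at $f_0\in\cF$, with first link of length at most $\delta_1+T=3\delta_1$, is what lets the integral stop at $T/2$.
\item The level-$j$ maximal inequality gives $3\delta_j\sqrt{2\log N_j^2}=6\delta_j\sqrt{\log N_j}$.
\item The Cauchy--Schwarz remainder $\sqrt n\,\delta_J$ vanishes as $J\to\infty$.
\item The tiling of $(0,T/2]$ by the intervals $(\delta_{j+1},\delta_j]$ yields $C_0=12$.
\end{itemize}

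Compared with the bare citation, your route gives two things. First, it gives an explicit constant. Second, it proves directly the external-covering version the paper actually needs. Depending on the source, the classical bound is phrased with packing or internal covering numbers. Transferring it to the external convention then needs a comparison such as $N^{\mathrm{int}}(2\delta)\le N^{\mathrm{ext}}(\delta)$ and a change of constant, which the paper leaves implicit.

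Two cosmetic points, neither affecting correctness:
\begin{itemize}
\item When $N_j=1$, the level-$j$ link is a single mean-zero increment, so its expected maximum is zero. It is not a ``deterministic difference''; in any case, the sub-Gaussian maximal bound $s\sqrt{2\log M}$ already covers $M=1$.
\item The degenerate case $T=0$ is trivial, because then all $X_f$ coincide.
\end{itemize}
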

\begin{proposition}
[Concentration of a function class with finite VC-dimension, adapted from
{\cite[Corollary 3.5 and the proof of Corollary 3.7]{bartlett2005}}]
\label{proposition:concentration-vc-class-ind}
Let $\cF$ be a class of $\set{0,1}$-valued functions defined on $\cX$ with
VC-dimension at most $\dVC$, where $1\le\dVC\le n$. Let
$\set{x_i}_{i=1}^n$ be an i.i.d. sample from a distribution $P$ over $\cX$,
and let $P_n\defeq n^{-1}\sum_{i=1}^n\delta_{x_i}$ be its empirical
distribution, where $\delta_x$ denotes the unit point mass at $x$. Then there
exists an absolute positive constant $c$ such that
for all $K_0>1$ and every $x>0$, with probability at least $1-\exp(-x)$,
\bal\label{eq:concentration-vc-class-ind}
\Expect{P_n}{f(x)}  - \frac{K_0+1}{K_0}\Expect{x \sim P}{f(x)}
\le c K_0 \pth{\frac{\dVC \log (ne/\dVC)}{n}+\frac xn}, \quad \forall f\in\cF.
\eal
\end{proposition}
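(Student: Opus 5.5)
The plan is to derive this from the localized Rademacher machinery of \cite{bartlett2005} in the i.i.d. setting. The key structural fact is that every $f\in\cF$ is $\set{0,1}$-valued, so $f^2=f$ and $\Expect{x\sim P}{f^2(x)}=\Expect{x\sim P}{f(x)}$. Hence the functional $T(f)\defeq\Expect{P}{f}$ satisfies the Bernstein-type requirement $\textup{Var}(f)\le T(f)\le 1\cdot\Expect{P}{f}$ with constant $B=1$, and $\cF$ takes values in $[0,1]$.

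First, I will invoke \cite[Theorem 3.3]{bartlett2005} in its ``reverse'' direction, namely the inequality bounding $P_nf$ by $\frac{K_0}{K_0-1}$- or $\frac{K_0+1}{K_0}$-multiples of $Pf$. I apply it to the class $\cF$ (equivalently to $-\cF$ shifted, as in their Corollary 3.5). This yields, with probability at least $1-e^{-x}$ and for all $f\in\cF$,
\[
P_nf\le\frac{K_0+1}{K_0}Pf+c'K_0r^*+\frac{c''K_0x}{n}.
\]
Here $r^*$ is the fixed point of any sub-root $\psi$ dominating $\Expect{}{\sup\set{\frac1n\sum_i\sigma_if(x_i)\colon f\in\textup{star}(\cF),\,Pf\le r}}$. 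It then remains to show $r^*\lesssim \dVC\log(ne/\dVC)/n$.

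Second, I bound the localized Rademacher average, following the proof of \cite[Corollary 3.7]{bartlett2005}. I pass from the population localization $Pf\le r$ to an empirical one, $P_nf^2\le 2r+\cO(\dVC\log(ne/\dVC)/n)$, using the same one-sided relative deviation (their Corollary 2.2 combined with a global Rademacher bound). Conditionally on the sample, I apply Theorem~\ref{theorem:dudley-integral-entropy-bound} on the empirical $L_2$ ball of radius $\sqrt{\hat r}$. For the entropy, Haussler's packing bound gives $\log N(\cF,L_2(P_n),\eps)\lesssim \dVC\log(e/\eps)$, and taking the star hull costs only an extra $\log(1/\eps)$ term. This gives
\[
\Expect{\bsigma}{\sup}\lesssim\sqrt{\frac{\dVC\,\hat r\log(e/\hat r)}{n}}.
\]
Taking expectations and using Jensen, I obtain a sub-root majorant $\psi(r)\lesssim\sqrt{\dVC r\log(e/r)/n}+\dVC\log(ne/\dVC)/n$, after capping $\log(e/r)$ at $r\ge\dVC/n$. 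Substituting in the fixed-point equation then gives $r^*\lesssim \dVC\log(ne/\dVC)/n$.

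The main obstacle is the second step. I must handle the circularity between the empirical radius and the population radius, and make sure the logarithm comes out as $\log(ne/\dVC)$ rather than $\log n$. This requires $\dVC\le n$, which is where that hypothesis is used. I also need to verify that the function $\sqrt{r\log(e/r)}$, suitably modified for small $r$, is genuinely sub-root. I would follow Bartlett et al.\ closely here, replacing $\log(e/r)$ by $\log(ne/\dVC)$ on the relevant range $r\ge\dVC/n$, since $\psi$ only matters for $r\ge r^*$. Finally, I collect the constants into a single absolute $c$ multiplying $K_0$.
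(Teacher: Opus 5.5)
Your top-level route is the paper's. The paper invokes the reverse inequality $P_nf\le\frac{K_0+1}{K_0}Pf+\cdots$ of \cite{bartlett2005}, with $T(f)=Pf$ and $B=1$ (legitimate because $f^2=f$), and imports the fixed-point estimate from the proof of their Corollary~3.7. Absorbing $c_1K_0r^*+(11+c_2K_0)x/n$ into $cK_0(r^*+x/n)$ is fine since $K_0>1$.

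The gap is in the part you re-derive, namely the radius transfer inside the fixed-point bound. Corollary~2.2 of that paper, applied to $\set{f\in\cF\colon Pf\le r}$ with the global bound $\mathbb{E}\sup_{f\in\cF}n^{-1}\sum_i\sigma_if(x_i)\lesssim\sqrt{\dVC/n}$, yields only $P_nf\le2r+\cO(\sqrt{\dVC/n}+x/n)$. Feeding this radius into Dudley's integral gives a fixed point of order $(\dVC/n)^{3/4}$ up to logarithms, not $\dVC\log(ne/\dVC)/n$. A high-probability transfer of the form $P_nf\le2Pf+\cO(\dVC\log(ne/\dVC)/n)$ is essentially the proposition itself; the paper uses precisely this proposition for that transfer in the proof of Theorem~\ref{theorem:STLC-delta-ell-f-excess-risk-upper-bound-VC-dim}. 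So invoking it here is circular. Any high-probability transfer would also force you to treat the failure event separately before taking expectations, which your sketch omits. You name the circularity as the main obstacle but do not supply the idea that breaks it.

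The missing idea is to transfer the radius in expectation, which makes the estimate self-bounding. Put $\cF_r\defeq\set{f\in\cF\colon Pf\le r}$, $\hat r\defeq\sup_{f\in\cF_r}P_nf$ (note $P_nf^2=P_nf$), and $\rho\defeq\mathbb{E}\sup_{f\in\cF_r}n^{-1}\sum_i\sigma_if(x_i)$. Symmetrization gives $\mathbb{E}\hat r\le r+2\rho$. Conditionally on the sample, combine Massart's finite-class lemma with the Sauer--Shelah bound of at most $(ne/\dVC)^{\dVC}$ patterns; this is where $\dVC\le n$ enters. This gives $\mathbb{E}_{\bsigma}\sup_{f\in\cF_r}n^{-1}\sum_i\sigma_if(x_i)\le\sqrt{2a\hat r}$ with $a\defeq\dVC\log(ne/\dVC)/n$. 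The factor $\log(ne/\dVC)$ thus appears directly, and no capping of $\log(e/r)$ is needed. Your Dudley--Haussler bound also works, using concavity of $t\mapsto\sqrt{t\log(e/t)}$ on $(0,1]$ for Jensen. Jensen then yields $\rho^2\le2a(r+2\rho)$, hence $\rho\le\sqrt{2ar}+4a$, a sub-root majorant with fixed point $8a$. With this replacing your transfer step, the rest of your plan goes through.
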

\begin{proof}
\noindent\textit{Proof outline.}  Apply the empirical-to-population direction
of the cited local-Rademacher inequality and bound its fixed point by the
Sauer entropy estimate for a binary VC class.

Corollary~3.5 of \cite{bartlett2005} gives the second, upper-deviation
inequality displayed there.  The calculation in the proof of Corollary~3.7
of that work bounds its fixed point by
$C\dVC\log(ne/\dVC)/n$ for an absolute constant $C$.  Substitution and a
change of the absolute constant give
(\ref{eq:concentration-vc-class-ind}).
\end{proof}

The following theorem relates transductive complexity to its with-replacement
counterpart.
\begin{theorem}
[{\cite[Theorem 2.1]{yang2025a-concentration-sampling-without-replacement}, \cite[Theorem 3.1]{yang2025improvedgeneralizationboundstransductive}}]
\label{theorem:TC-RC}
Let $\cH$ be a nonempty class of real-valued functions on $[n]$.  Then
$\max\set{\cfrakR^+_u(\cH),\cfrakR^-_u(\cH)}
\le2\cfrakR^{(\textup{ind})}_u(\cH)$ and
$\max\set{\cfrakR^+_m(\cH),\cfrakR^-_m(\cH)}
\le2\cfrakR^{(\textup{ind})}_m(\cH)$.
\end{theorem}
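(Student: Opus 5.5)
The plan is to compare sampling without replacement with sampling with replacement through Hoeffding's convex-ordering theorem, and then to finish with the standard symmetrization argument. I treat $\cfrakR^+_u$ in detail. The case $\cfrakR^-_u$ is the same argument applied to $-\cH$, since $\cfrakR^{(\textup{ind})}_u(-\cH)=\cfrakR^{(\textup{ind})}_u(\cH)$ by the symmetry $\bsigma\mapsto-\bsigma$. The statements for $m$ follow by the same argument with $\barZ$, which is a uniform $m$-subset of $[n]$, in place of $Z$.

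\textbf{Step 1 (convex reduction).} Let $\bm e_1,\ldots,\bm e_n$ be the standard basis vectors of $\RR^n$ and define $\Phi\colon\RR^n\to(-\infty,\infty]$ by
\bals
\Phi(\bm w)\defeq\sup_{h\in\cH}\pth{\frac1u\sum_{i=1}^n w_ih(i)-\cL_n(h)}.
\eals
Each map inside the supremum is affine in $\bm w$, so $\Phi$ is convex.
\begin{itemize}
\item Without replacement, the draws $\bm e_i$, $i\in Z$, sum to $\sum_{i\in Z}\bm e_i=\mathbf 1_Z$, and $\Phi(\mathbf 1_Z)=\sup_hR^+_uh$.
\item With replacement, the draws $\bm e_{Y_j}$, $j\in[u]$, sum to the count vector $\bm W=\sum_{j=1}^u\bm e_{Y_j}$, and $\Phi(\bm W)=\sup_h\{u^{-1}\sum_jh(Y_j)-\cL_n(h)\}$.
\end{itemize}
Hoeffding's theorem on sampling from a finite population states that $\E\,\Phi(\text{sum without replacement})\le\E\,\Phi(\text{sum with replacement})$ for every continuous convex $\Phi$. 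This gives
\bals
\cfrakR^+_u(\cH)\le\Expect{}{\sup_{h\in\cH}\pth{\frac1u\sum_{j=1}^uh(Y_j)-\cL_n(h)}}.
\eals

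\textbf{Step 2 (symmetrization).} Note that $\cL_n(h)=\E\,h(Y'_j)$ for an independent copy $\bY'$ of $\bY^{(u)}$. By Jensen's inequality, the right-hand side of Step 1 is at most $\E\sup_hu^{-1}\sum_j(h(Y_j)-h(Y'_j))$. The pairs $(Y_j,Y'_j)$ are exchangeable, so inserting Rademacher signs does not change the law. Splitting the supremum into the $\bY$ part and the $\bY'$ part then bounds this by $2\cfrakR^{(\textup{ind})}_u(\cH)$.

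\textbf{Main obstacle.} The delicate point is applying Hoeffding's comparison to a possibly infinite, possibly unbounded class, where $\Phi$ may take the value $+\infty$ or fail to be continuous. My plan is as follows.
\begin{itemize}
\item If $\cfrakR^{(\textup{ind})}_u(\cH)=+\infty$, there is nothing to prove.
\item Otherwise, run Steps 1 and 2 for every finite subclass $\cH'\subseteq\cH$. For finite $\cH'$, $\Phi$ is a finite maximum of affine maps, so it is continuous and convex and Hoeffding's theorem applies. This gives $\cfrakR^+_u(\cH')\le2\cfrakR^{(\textup{ind})}_u(\cH')\le2\cfrakR^{(\textup{ind})}_u(\cH)$.
\item Finally take the supremum over finite $\cH'$. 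Because $Z$ ranges over finitely many subsets, $\E\sup_{h\in\cH}$ is a finite average of suprema. Each supremum is a limit along an increasing sequence of finite subclasses, so monotone convergence (here just a finite sum of limits) recovers $\cfrakR^+_u(\cH)$.
\end{itemize}
This approximation step is the only real subtlety; it also avoids needing a direct proof of Hoeffding's theorem, which I would cite.
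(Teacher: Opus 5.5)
Your proof is correct, and it follows the standard argument for this inequality, which is how the cited sources obtain it (the paper itself gives no proof here, only the citations): Hoeffding's with-/without-replacement comparison, the same ``comparison principle'' the paper invokes for (\ref{eq:with-without-expectation-gap}), followed by the usual symmetrization, with your reduction to finite subclasses cleanly handling possibly unbounded classes. The one point to make explicit is that you apply Hoeffding's theorem to the $\RR^n$-valued population $\bm e_1,\ldots,\bm e_n$; this is legitimate because Hoeffding's proof uses only linearity and Jensen's inequality, so it holds verbatim for continuous convex functions on $\RR^n$.
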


\section{Concentration of Centered Suprema Under Sampling Without Replacement}
\label{sec:concentration-general-sup-empirical}

We first record the consequence of Theorem~\ref{theorem:concentration-gd-STLC}
for the two one-sided empirical processes centered at the full-sample mean.
The statement includes both an individual tail bound and a simultaneous bound
at the same prescribed failure probability.

\begin{corollary}[Centered one-sided suprema]
\label{corollary:concentration-gu}
Fix integers $u,m\ge2$, put $n=u+m$, and let $Z$ be uniformly distributed
over the $u$-element subsets of $[n]$.  Write $\bar Z=[n]\setminus Z$.  Let
$\cH$ be a nonempty class of real-valued functions on $[n]$ and suppose that,
for some $H_0,r>0$,
$\sup_{h\in\cH}\max_{i\in[n]}|h(i)|\le H_0$ and
$\sup_{h\in\cH}T_n(h)\le r$.  Define
$g^+_u(\cH,Z)\defeq\sup_{h\in\cH}\set{\cL_h(Z)-\cL_n(h)}$ and
$g^-_u(\cH,Z)\defeq\sup_{h\in\cH}\set{\cL_n(h)-\cL_h(Z)}$.
Use the convention $\cfrakR^+_{N_{u,m}}=\cfrakR^+_u$ when $u\le m$ and
$\cfrakR^+_{N_{u,m}}=\cfrakR^+_m$ when $m<u$.  When $u=m$, the two
quantities agree by complementation.  With $c_0=\sqrt{219}$ as in
Theorem~\ref{theorem:concentration-gd-STLC}, put
$\Gamma_{u,m}(s,\cH)\defeq c_0\sqrt{rs/N_{u,m}}
+(c_0/2)\cfrakR^+_{N_{u,m}}(\cH^2)
+(16H_0+c_0/2)s/N_{u,m}$ for $s>0$.
Then, for each $\eta\in\set{+,-}$ and
every $x>0$,
\bal\label{eq:concentration-gu-individual}
\Prob{g_u^\eta(\cH,Z)-\Expect{}{g_u^\eta(\cH,Z)}
>\frac{m}{n}\Gamma_{u,m}(x,\cH)}
\le \exp(-x).
\eal
Moreover, for every $x>0$, with probability at least $1-\exp(-x)$,
\bal\label{eq:concentration-gu}
\max_{\eta\in\set{+,-}}
\set{g_u^\eta(\cH,Z)-\Expect{}{g_u^\eta(\cH,Z)}}
\le \frac{m}{n}\Gamma_{u,m}(x+\log 2,\cH).
\eal
\end{corollary}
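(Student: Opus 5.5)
The plan is to reduce both one-sided centered processes to the test--train process $g$ of Theorem~\ref{theorem:concentration-gd-STLC} through an exact linear identity, and then to take a union bound.

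\textbf{Step 1 (the $+$ case).} Since $n\cL_n(h)=u\cL_h(Z)+m\cL_h(\barZ)$, we have
\[
\cL_h(Z)-\cL_n(h)=\frac{m}{n}\pth{\cL_h(Z)-\cL_h(\barZ)}
\]
for every $h$. Taking the supremum over $h\in\cH$ gives $g^+_u(\cH,Z)=\frac mn g(\cH,Z)$ identically, and hence $\Expect{}{g^+_u}=\frac mn\Expect{}{g(\cH,Z)}$. Theorem~\ref{theorem:concentration-gd-STLC} states that $g(\cH,Z)-\Expect{}{g(\cH,Z)}\le\Gamma_{u,m}(x,\cH)$ with probability at least $1-\exp(-x)$. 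Multiplying this inequality by $m/n$ gives (\ref{eq:concentration-gu-individual}) for $\eta=+$.

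\textbf{Step 2 (the $-$ case).} In the same way, $g^-_u(\cH,Z)=\frac mn g(-\cH,Z)$. The class $-\cH$ satisfies the same hypotheses with the same $H_0$ and $r$, because $|{-h}|=|h|$ and $T_n(-h)=T_n(h)$. The key observation is that $(-\cH)^2=\cH^2$. Consequently the complexity term $\cfrakR^+_{N_{u,m}}(\cH^2)$ that Theorem~\ref{theorem:concentration-gd-STLC} produces for $-\cH$ is unchanged, and so $\Gamma_{u,m}(x,-\cH)=\Gamma_{u,m}(x,\cH)$. Applying the theorem to $-\cH$ and scaling by $m/n$ gives (\ref{eq:concentration-gu-individual}) for $\eta=-$. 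Finiteness and measurability of the suprema are inherited from the theorem's remark.

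\textbf{Step 3 (simultaneous bound).} Apply the two individual bounds with $x$ replaced by $x+\log 2$; each then fails with probability at most $e^{-x}/2$. A union bound gives (\ref{eq:concentration-gu}).

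The only point needing care is Step 2: one must confirm that the theorem, applied to $-\cH$, really yields the complexity of $(-\cH)^2=\cH^2$ under the same $N_{u,m}$ convention, so that the same $\Gamma_{u,m}$ appears for both signs. This holds because the theorem's statement is uniform over all bounded classes, so no obstacle is expected.
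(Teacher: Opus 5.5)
Your proposal is correct and follows essentially the same route as the paper. Both arguments use the identity $\cL_h(Z)-\cL_n(h)=\frac{m}{n}\{\cL_h(Z)-\cL_h(\barZ)\}$ to write $g_u^+(\cH,Z)=\frac mn g(\cH,Z)$ and $g_u^-(\cH,Z)=\frac mn g(-\cH,Z)$, apply Theorem~\ref{theorem:concentration-gd-STLC} to $\cH$ and to $-\cH$ (noting $(-\cH)^2=\cH^2$), and then take a union bound at level $x+\log 2$.
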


\begin{proof}
\noindent\textit{Proof outline.}  Express each centered one-sided process as
a deterministic multiple of a test--train process, apply
Theorem~\ref{theorem:concentration-gd-STLC} to $\cH$ and $-\cH$, and combine
the two tails by a union bound.
For every $h\in\cH$, the identity
$\cL_n(h)=u\cL_h(Z)/n+m\cL_h(\bar Z)/n$ gives
$\cL_h(Z)-\cL_n(h)=\frac{m}{n}\set{\cL_h(Z)-\cL_h(\bar Z)}$ and
$\cL_n(h)-\cL_h(Z)=\frac{m}{n}\set{\cL_h(\bar Z)-\cL_h(Z)}$.
Taking suprema yields $g_u^+(\cH,Z)=\frac{m}{n}g(\cH,Z)$ and
$g_u^-(\cH,Z)=\frac{m}{n}g(-\cH,Z)$.
The classes $\cH$ and $-\cH$ have the same envelope and the same
second-moment radius, and $(-\cH)^2=\cH^2$.  Applying
Theorem~\ref{theorem:concentration-gd-STLC} to $\cH$ and to $-\cH$,
respectively, and then using these identities, including after taking
expectations, proves
(\ref{eq:concentration-gu-individual}).  Finally, apply the two individual
bounds with $x$ replaced by $x+\log 2$.  The union bound gives total failure
probability at most $2\exp\set{-(x+\log 2)}=\exp(-x)$, proving
(\ref{eq:concentration-gu}).
\end{proof}

\subsection{A centered formulation}

The empirical processes in Corollary~\ref{corollary:concentration-gu} are
invariant under centering each function by its full-sample mean.  More
precisely, define $h^\circ(i)\defeq h(i)-\cL_n(h)$ for $i\in[n]$, put
$\cH^\circ\defeq\set{h^\circ\colon h\in\cH}$, and set
$H_\circ\defeq\sup_{h\in\cH}\max_{i\in[n]}\abth{h^\circ(i)}$ and
$\sigma^2\defeq\sup_{h\in\cH}T_n(h^\circ)$.  Then
$\cL_n(h^\circ)=0$ and
$g_u^\eta(\cH,Z)=g_u^\eta(\cH^\circ,Z)$ for $\eta\in\set{+,-}$.
If $\sigma=0$, then $T_n(h^\circ)=0$ for every $h\in\cH$, so every
$h^\circ$ vanishes on $[n]$ and the bound below is immediate.  If
$\sigma>0$ and $H_\circ<\infty$, Corollary~\ref{corollary:concentration-gu}
applies to $\cH^\circ$ with $r=\sigma^2$.  Consequently, whenever
$H_\circ<\infty$, with probability at least $1-\exp(-x)$,
\bal\label{eq:concentration-gu-centered}
\max_{\eta\in\set{+,-}}
\set{g_u^\eta(\cH,Z)-\Expect{}{g_u^\eta(\cH,Z)}}
&\le \frac{m}{n}\left[
c_0\sigma\sqrt{\frac{x+\log2}{N_{u,m}}}
+\frac{c_0}{2}\cfrakR^+_{N_{u,m}}\pth{(\cH^\circ)^2}
\right.\nonumber\\
&\hspace{34mm}\left.
+\frac{(16H_\circ+c_0/2)(x+\log2)}{N_{u,m}}
\right].
\eal
This is the appropriate form for comparison with concentration inequalities
whose hypotheses are stated for centered function classes.

\subsection{Comparison with earlier bounds}

We compare (\ref{eq:concentration-gu-centered}) with the two inequalities of
\cite{TolstikhinBK14-local-complexity-TRC}.  Assume in this paragraph that
$H_\circ\le1$ and that the class is countable or separable, as required in
that work.  Let $Y_1,\ldots,Y_u$ be independent and uniformly distributed on
$[n]$, and define the with-replacement process
$\widetilde g_u(\bY^{(u)},\cH)\defeq
\sup_{h\in\cH}u^{-1}\sum_{i=1}^u h^\circ(Y_i)$.
Theorem~1 of \cite{TolstikhinBK14-local-complexity-TRC}, after converting
its unnormalized sums to empirical averages, gives, with probability at
least $1-\exp(-t)$,
\bal\label{eq:existing-concentration-sup-empirical-process-1}
g_u^+(\cH,Z)-\Expect{}{g_u^+(\cH,Z)}
&\le
\frac{2\sqrt2\,n\sigma}{u\sqrt{n+2}}\sqrt t
\le \frac{2\sigma\sqrt{2nt}}{u}.
\eal
Theorem~2 of the same work gives, with the same probability,
\bal\label{eq:existing-concentration-sup-empirical-process-2}
g_u^+(\cH,Z)-
\Expect{\bY^{(u)}}{\widetilde g_u(\bY^{(u)},\cH)}
\le
\sqrt{\frac{2\set{\sigma^2+2
\Expect{\bY^{(u)}}{\widetilde g_u(\bY^{(u)},\cH)}}t}{u}}
+\frac{t}{3u}.
\eal
Hoeffding's comparison principle gives the nonnegativity of the expectation
gap, and \cite[Lemma~3]{TolstikhinBK14-local-complexity-TRC} bounds it by
\bal\label{eq:with-without-expectation-gap}
0\le
\Expect{\bY^{(u)}}{\widetilde g_u(\bY^{(u)},\cH)}
-\Expect{}{g_u^+(\cH,Z)}
\le\frac{2u^2}{n}.
\eal
Combining (\ref{eq:existing-concentration-sup-empirical-process-2}) and
(\ref{eq:with-without-expectation-gap}) therefore yields
\bal\label{eq:existing-concentration-sup-empirical-process-2-repeat}
g_u^+(\cH,Z)-\Expect{}{g_u^+(\cH,Z)}
&\le
\sqrt{\frac{2\set{\sigma^2+2
\Expect{\bY^{(u)}}{\widetilde g_u(\bY^{(u)},\cH)}}t}{u}}
+\frac{t}{3u}+\frac{2u^2}{n}.
\eal

These two earlier guarantees exhibit complementary regime dependence.  For
fixed $t$ and $\sigma$ bounded away from zero, the right-hand side of
(\ref{eq:existing-concentration-sup-empirical-process-1}) diverges when
$u=o(\sqrt n)$.  The expectation-gap estimate in
(\ref{eq:existing-concentration-sup-empirical-process-2-repeat}) does not
vanish when $u$ is of order $\sqrt n$ and diverges when
$u=\omega(\sqrt n)$.  Thus these displays do not provide a vanishing bound
at the critical scale $u\asymp\sqrt n$ without additional information.

In contrast, (\ref{eq:concentration-gu-centered}) depends on the split only
through $N_{u,m}=\min\set{u,m}$ and the harmless prefactor $m/n$.  To make
this precise, Theorem~\ref{theorem:TC-RC} and the contraction inequality give
$\cfrakR_p^+\pth{(\cH^\circ)^2}
\le2\cfrakR_p^{(\textup{ind})}\pth{(\cH^\circ)^2}
\le4H_\circ\cfrakR_p^{(\textup{ind})}(\cH^\circ)$ for
$p\in\set{u,m}$.
Hence, for fixed $x$, bounded $H_\circ$ and $\sigma$, and any model for
which
$\cfrakR_p^{(\textup{ind})}(\cH^\circ)=\cO(p^{-1/2})$, the right-hand side
of (\ref{eq:concentration-gu-centered}) is
$\cO(N_{u,m}^{-1/2})$ and converges to zero whenever
$N_{u,m}\to\infty$, irrespective of the relative growth of $u$ and $m$.
Such Rademacher-complexity behavior holds for many standard bounded classes.
See, for example, \cite{Bartlett2003}.

Finally, the earlier concentration result
\cite[Theorem~5.1]{yang2025a-concentration-sampling-without-replacement}
requires one of the imbalanced conditions $m\gg u^2$ or $u\gg m^2$, while
\cite[Corollary~4.3]{yang2025improvedgeneralizationboundstransductive}
introduces an auxiliary failure parameter and a corresponding logarithmic
factor.  Corollary~\ref{corollary:concentration-gu}, being a direct
consequence of Theorem~\ref{theorem:concentration-gd-STLC}, requires neither
device and gives both one-sided suprema simultaneously at failure probability
$\exp(-x)$.

\section{Detailed Proofs}
\label{sec:proofs}
We give detailed proofs of the main results and their intermediate lemmas.
We first record the modified log-Sobolev inequality needed for
Theorem~\ref{theorem:concentration-gd-STLC}.

\subsection{Background in Modified Log-Sobolev Inequality}
\label{sec:modified-log-sobolev-prelim}

Fix $1\le u\le n-1$, put $m=n-u$, and let
$\Omega\defeq\set{Z\subseteq[n]\colon |Z|=u}$ and
$\pi(Z)\defeq\binom{n}{u}^{-1}$.
The non-lazy swap walk on $\Omega$ has transition matrix
\bals
P(Z,Z')=
\begin{cases}
(um)^{-1},& |Z\cap Z'|=u-1,\\
0,&\text{otherwise}.
\end{cases}
\eals
Equivalently, from $Z$ it chooses $(i,j)$ uniformly from
$Z\times([n]\setminus Z)$ and moves to
$Z^{(ij)}=(Z\setminus\set{i})\cup\set{j}$.  The chain is irreducible, and
$P(Z,Z')=P(Z',Z)$. Hence it is reversible with stationary distribution
$\pi$.

For functions $f,g\colon\Omega\to\RR$, define
\bals
\cE_P(f,g)\defeq\frac12\sum_{Z,Z'\in\Omega}\pi(Z)P(Z,Z')
\bigl(f(Z)-f(Z')\bigr)\bigl(g(Z)-g(Z')\bigr).
\eals
For $f\colon\Omega\to\RR_{\ge0}$, let
$\Ent_\pi(f)\defeq\Expect{\pi}{f\log f}
-\Expect{\pi}{f}\log\Expect{\pi}{f}$, where $0\log0=0$.  The modified
log-Sobolev constant is
\bals
\rho(P)\defeq
\inf_{\substack{f\colon\Omega\to\RR_{>0}\\\Ent_\pi(f)\ne0}}
\frac{\cE_P(f,\log f)}{\Ent_\pi(f)}.
\eals
\begin{theorem}
[{Modified log-Sobolev inequality, see
\cite[Theorem 1]{Cryan2021-modified-log-sobolev-inequality}}]
\label{theorem:modified-log-sobolev}
For the non-lazy swap walk defined above, $\rho(P)\ge1/u$.  Consequently,
for every $f\colon\Omega\to\RR_{>0}$,
\bal\label{eq:mlsi-swap}
\Ent_\pi(f)\le u\,\cE_P(f,\log f).
\eal
\end{theorem}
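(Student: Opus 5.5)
The plan is to deduce the bound from the matroid result of \cite{Cryan2021-modified-log-sobolev-inequality}, applied to the uniform matroid $U_{u,n}$, whose bases are exactly the elements of $\Omega$. The only real work is to compare the non-lazy swap walk $P$ with the bases-exchange (down--up) walk $P_{\downarrow\uparrow}$ analysed there. I would not reprove the underlying entropy contraction; I would only check that normalisations match.

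\textbf{Identifying the lazy walk.} The down--up walk on $\Omega$ has two stages. From $Z$ it deletes a uniformly chosen $i\in Z$. It then adds a uniformly chosen element of $[n]\setminus(Z\setminus\set{i})$, a set with $m+1$ elements, one of which is $i$ itself. So with probability $1/(m+1)$ it returns to $Z$. Otherwise it moves to $Z^{(ij)}$ with $(i,j)$ uniform on $Z\times([n]\setminus Z)$, and each such neighbour has probability $\frac1u\cdot\frac1{m+1}$. Hence
\[
P_{\downarrow\uparrow}=\frac{1}{m+1}\bI+\frac{m}{m+1}P .
\]
Both chains are reversible with respect to the uniform measure $\pi$. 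Since the identity contributes nothing to the Dirichlet form, for all $f,g$
\[
\cE_{P_{\downarrow\uparrow}}(f,g)=\frac{m}{m+1}\cE_P(f,g).
\]

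\textbf{Importing the constant.} \cite[Theorem~1]{Cryan2021-modified-log-sobolev-inequality} states $\rho(P_{\downarrow\uparrow})\ge 1/r$ for a matroid of rank $r$. Here $r=u$. By the Dirichlet-form identity,
\[
\rho(P)=\frac{m+1}{m}\rho(P_{\downarrow\uparrow})\ge\frac{m+1}{m}\cdot\frac1u\ge\frac1u .
\]
Inequality (\ref{eq:mlsi-swap}) then follows from the definition of $\rho(P)$ for every positive $f$ with $\Ent_\pi(f)\neq0$. When $\Ent_\pi(f)=0$ it is trivial, because $\cE_P(f,\log f)\ge0$: each term $(f(Z)-f(Z'))(\log f(Z)-\log f(Z'))$ is nonnegative.

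\textbf{Main obstacle.} The delicate step is the normalisation check. I need to confirm that the Dirichlet form and entropy in \cite{Cryan2021-modified-log-sobolev-inequality} carry the same factor $1/2$ and the same stationary measure as here. I also need to confirm that their rank-$r$ bound is stated for the down--up walk with holding, and not for a further-lazified chain. If their chain turned out to be lazier, say with extra holding probability $1/2$, the same decomposition would still give $\rho(P)\ge c/u$ for an explicit $c$. I would then either recover $c=1$ directly, or rederive the result in one line: the down operator on $U_{u,n}$ contracts entropy by the factor $1-1/u$ (the uniform measure is strongly log-concave), and the standard argument converts this entropy contraction of $P_{\downarrow\uparrow}$ into $\rho(P_{\downarrow\uparrow})\ge1/u$.
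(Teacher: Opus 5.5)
Your proposal is correct and follows the paper's route. Both arguments identify $\pi$ with the uniform measure on bases of the rank-$u$ uniform matroid, write the down--up walk as $\frac{1}{m+1}\bI_{\Omega}+\frac{m}{m+1}P$, use that holding contributes nothing to the Dirichlet form, and import $\rho(P_{\mathrm{DU}})\ge 1/u$ to obtain $\rho(P)\ge (m+1)/(mu)\ge 1/u$. Your separate treatment of the case $\Ent_\pi(f)=0$, via the nonnegativity of $\cE_P(f,\log f)$, covers a point the paper leaves implicit.
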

\begin{proof}
\noindent\textit{Proof outline.}  Identify the uniform law on $u$-subsets
with the bases measure of the uniform matroid, compare the cited lazy down--up
walk with the non-lazy swap walk, and rescale the Dirichlet form.
The measure $\pi$ is the uniform measure on the bases of the uniform matroid
of rank $u$.  Its generating polynomial is a positive multiple of the
elementary symmetric polynomial $e_u$ and is $u$-homogeneous and strongly
log-concave.  Thus Theorem~1 of the cited work applies to its bases-exchange
(equivalently, down-up) walk, which we denote by $P_{\mathrm{DU}}$.  From a set
$Z\in\Omega$, this walk first deletes a uniformly chosen element of $Z$ and
then adds a uniformly chosen element among the $m+1$ elements outside the
resulting $(u-1)$-set.  Consequently,
\bals
P_{\mathrm{DU}}(Z,Z')=
\begin{cases}
1/(u(m+1)),&|Z\cap Z'|=u-1,\\
1/(m+1),&Z'=Z,\\
0,&\text{otherwise},
\end{cases}
\eals
and hence $P_{\mathrm{DU}}=mP/(m+1)+\bI_{\Omega}/(m+1)$, where
$\bI_{\Omega}$ is the identity transition matrix on $\Omega$.
The cited result gives $\rho(P_{\mathrm{DU}})\ge1/u$.  Self-loops make no
contribution to the Dirichlet form, so
$\cE_{P_{\mathrm{DU}}}(f,\log f)=m\cE_P(f,\log f)/(m+1)$.
Taking the infimum over positive nonconstant $f$ yields
$\rho(P)=\frac{m+1}{m}\rho(P_{\mathrm{DU}})
\ge(m+1)/(mu)\ge1/u$.
The final assertion is the definition of $\rho(P)$ rewritten using this
lower bound.
\end{proof}

\subsection{Proof of Theorem~\ref{theorem:concentration-gd-STLC}
and Corollary~\ref{corollary:concentration-gu}}

For $a\in\RR$, set $a_+\defeq\max\set{a,0}$.  The oriented one-step
variance of $q\colon\Omega\to\RR$ at $Z$ is
$\mathcal V_+(q,Z)\defeq\sum_{Z'\in\Omega}P(Z,Z')
[q(Z)-q(Z')]_+^2$.

\begin{lemma}[Exponential Dirichlet-form bound]
\label{lemma:dirichlet-upper-bound}
For every $q\colon\Omega\to\RR$ and every $\gamma\ge0$,
\bal\label{eq:dirichlet-upper-bound-cR}
\cE_P\pth{e^{\gamma q},\gamma q}
\le\gamma^2\Expect{\pi}{e^{\gamma q(Z)}\mathcal V_+(q,Z)}.
\eal
\end{lemma}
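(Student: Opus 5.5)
We need to prove $\cE_P(e^{\gamma q},\gamma q)\le\gamma^2\Expect{\pi}{e^{\gamma q}\mathcal V_+(q,Z)}$.

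Since $P$ is symmetric and $\pi$ is uniform, first symmetrize the Dirichlet form. Write $a=\gamma q(Z)$ and $b=\gamma q(Z')$. Each unordered pair $\set{Z,Z'}$ appears twice in the double sum, and the summand $(e^{a}-e^{b})(a-b)$ is symmetric in $(a,b)$ and nonnegative. We can therefore keep, for each pair, only the ordered version with $q(Z)\ge q(Z')$; pairs with equal values contribute zero. This rewrites the form as
\bals
\cE_P\pth{e^{\gamma q},\gamma q}
=\sum_{Z,Z'}\pi(Z)P(Z,Z')\bigl(e^{\gamma q(Z)}-e^{\gamma q(Z')}\bigr)\gamma\bigl(q(Z)-q(Z')\bigr)\indict{q(Z)>q(Z')}.
\eals
The factor $1/2$ cancels the double counting because $\pi(Z)P(Z,Z')=\pi(Z')P(Z',Z)$.

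The second step is the elementary inequality: for $a\ge b$,
\bals
e^{a}-e^{b}=e^{a}\bigl(1-e^{-(a-b)}\bigr)\le e^{a}(a-b),
\eals
which uses $1-e^{-t}\le t$ for $t\ge0$. With $\gamma\ge0$ and $q(Z)>q(Z')$ we have $a-b=\gamma(q(Z)-q(Z'))\ge0$. Each retained term is therefore at most $e^{\gamma q(Z)}\gamma^2(q(Z)-q(Z'))^2=\gamma^2e^{\gamma q(Z)}[q(Z)-q(Z')]_+^2$.

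Finally, sum over $Z'$ with weights $P(Z,Z')$. This produces $\mathcal V_+(q,Z)$, and summing over $Z$ with weights $\pi(Z)$ gives $\gamma^2\Expect{\pi}{e^{\gamma q(Z)}\mathcal V_+(q,Z)}$.

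The only delicate point is the bookkeeping in the symmetrization. One must ensure the exponential weight is evaluated at the larger endpoint $Z$, where $q(Z)>q(Z')$, so that it matches the positive part in $\mathcal V_+$. Reversibility ($\pi$ uniform, $P$ symmetric) is exactly what justifies swapping the roles of $Z$ and $Z'$. The case $\gamma=0$ is trivial, since both sides are zero.
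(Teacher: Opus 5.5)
Your proof is correct and follows the paper's argument essentially verbatim: you use reversibility to absorb the factor $1/2$ by orienting each neighboring pair toward the larger value of $q$, bound $e^{\gamma q(Z)}-e^{\gamma q(Z')}=e^{\gamma q(Z)}(1-e^{-d})\le e^{\gamma q(Z)}d$ with $d=\gamma(q(Z)-q(Z'))\ge0$, and then sum the oriented squared increments to obtain $\mathcal V_+(q,Z)$.
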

\begin{proof}
\noindent\textit{Proof outline.}  Orient every neighboring pair toward its
larger $q$-value, use $1-e^{-d}\le d$, and sum the resulting squared positive
increments.
By reversibility, the factor $1/2$ in the definition of the Dirichlet form
allows us to sum once over unordered neighboring pairs.  Fix such a pair
and orient it so that $q(Z)\ge q(Z')$.  With
$d=\gamma(q(Z)-q(Z'))\ge0$, its contribution to the Dirichlet form is
$\pi(Z)P(Z,Z')\bigl(e^{\gamma q(Z)}-e^{\gamma q(Z')}\bigr)d
=\pi(Z)P(Z,Z')e^{\gamma q(Z)}(1-e^{-d})d$.
The elementary inequality $1-e^{-d}\le d$ bounds this by
$\gamma^2\pi(Z)P(Z,Z')e^{\gamma q(Z)}
\bigl(q(Z)-q(Z')\bigr)^2$.
Summing over the oriented unordered pairs gives precisely the right-hand
side of (\ref{eq:dirichlet-upper-bound-cR}).
\end{proof}

\begin{lemma}[Oriented swap-variance bounds]
\label{lemma:oriented-swap-variance-bounds}
Assume $u\le m$ and let $\cH$ satisfy the hypotheses of
Theorem~\ref{theorem:concentration-gd-STLC}.  Set
$G(Z)\defeq g(\cH,Z)$ and
$Q(Z)\defeq g_u^+(\cH^2,Z)
=\sup_{h\in\cH}\bigl\{\cL_{h^2}(Z)-T_n(h)\bigr\}$.
Then, for every $Z\in\Omega$,
\bal
\mathcal V_+(G,Z)
&\le \frac{24r}{u^2}+\frac{8Q(Z)}{u^2},
\label{eq:swap-variance-G}\\
\mathcal V_+(Q,Z)
&\le \frac{6H_0^2r}{u^2}+\frac{2H_0^2Q(Z)}{u^2}.
\label{eq:swap-variance-Q}
\eal
Although $Q(Z)$ need not be nonnegative pointwise, the nonemptiness of
$\cH$ and the bound $T_n(h)\le r$ imply $Q(Z)\ge-r$.  Thus the right-hand
sides above are nonnegative.
\end{lemma}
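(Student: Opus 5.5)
We prove both bounds one swap at a time, using a near-maximizer at $Z$. Fix $Z\in\Omega$, a pair $(i,j)\in Z\times([n]\setminus Z)$, and $Z'=Z^{(ij)}$. Since $\mathcal V_+(q,Z)$ averages $[q(Z)-q(Z')]_+^2$ over uniformly chosen $(i,j)$, it is enough to bound each oriented increment pointwise and then average.

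\textbf{Step 1: the increment of $G$.} Fix $\eps>0$ and choose $h\in\cH$ (depending on $Z$ and $\eps$) with $G(Z)\le\cL_h(Z)-\cL_h(\bar Z)+\eps$. Always $G(Z')\ge\cL_h(Z')-\cL_h(\bar Z')$. A direct computation gives
\[
\cL_h(Z)-\cL_h(Z')=\frac{h(i)-h(j)}{u},\qquad
\cL_h(\bar Z')-\cL_h(\bar Z)=\frac{h(i)-h(j)}{m}.
\]
Hence $G(Z)-G(Z')\le(1/u+1/m)(h(i)-h(j))+\eps$. Because $u\le m$, this yields
\[
[G(Z)-G(Z')]_+\le\frac{2|h(i)-h(j)|}{u}+\eps .
\]
We square this and use $(a-b)^2\le2a^2+2b^2$, carrying the $\eps$ terms along.

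\textbf{Step 2: averaging over the swap.} Averaging over $i\in Z$ and $j\notin Z$ gives
\[
\mathcal V_+(G,Z)\le\frac{8}{u^2}\bigl(\cL_{h^2}(Z)+\cL_{h^2}(\bar Z)\bigr)+O(\eps).
\]
The key point is that $h$ is the same function for all pairs $(i,j)$, because it was chosen using $Z$ only. We then bound the two averages uniformly in $h$:
\[
\cL_{h^2}(Z)=T_n(h)+\bigl(\cL_{h^2}(Z)-T_n(h)\bigr)\le r+Q(Z),
\]
\[
\cL_{h^2}(\bar Z)=\frac{nT_n(h)-u\cL_{h^2}(Z)}{m}\le\frac{nr}{m}\le2r,
\]
where the last step uses $u\le m$. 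These bounds do not depend on $\eps$. Letting $\eps\downarrow0$ gives $\mathcal V_+(G,Z)\le 8(3r+Q(Z))/u^2$, which is (\ref{eq:swap-variance-G}).

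\textbf{Step 3: the process $Q$.} Take an $\eps$-maximizer $h$ of $\cL_{h^2}(Z)-T_n(h)$. Since $T_n(h)$ does not depend on the split, $Q(Z)-Q(Z')\le(h^2(i)-h^2(j))/u+\eps$. Because $h^2\in[0,H_0^2]$,
\[
(h^2(i)-h^2(j))^2\le H_0^2\,|h^2(i)-h^2(j)|\le H_0^2\bigl(h^2(i)+h^2(j)\bigr).
\]
Averaging as in Step 2 and letting $\eps\downarrow0$ gives $\mathcal V_+(Q,Z)\le H_0^2(3r+Q(Z))/u^2$. This is stronger than (\ref{eq:swap-variance-Q}), since $r\ge0$ and $3r+Q(Z)\ge0$.

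\textbf{Step 4: the lower bound on $Q$.} For any $h\in\cH$ we have $\cL_{h^2}(Z)-T_n(h)\ge-T_n(h)\ge-r$. Since $\cH$ is nonempty, $Q(Z)\ge-r$. This makes all the right-hand sides nonnegative.

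The only delicate point is Step 2. It is legitimate only because the near-maximizer is fixed before averaging over $(i,j)$, and because the resulting bound holds uniformly in $h$, so the limit $\eps\to0$ can be taken without assuming that the supremum is attained.
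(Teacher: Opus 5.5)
Your proposal is correct and follows essentially the same route as the paper. Both arguments use a (near-)maximizer chosen at $Z$ to bound every oriented swap increment, then square and average over $Z\times([n]\setminus Z)$, which produces $\cL_{h^2}(Z)+\cL_{h^2}(\bar Z)=nT_n(h)/m+(m-u)\cL_{h^2}(Z)/m$ (exactly the paper's expression), and finally let $\varepsilon\downarrow0$. The only difference is that your estimate $(h^2(i)-h^2(j))^2\le H_0^2\{h^2(i)+h^2(j)\}$ gives (\ref{eq:swap-variance-Q}) with half the paper's constant; the paper instead uses $(a-b)^2\le 2a^2+2b^2$ together with $h^4\le H_0^2h^2$.
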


\begin{proof}
\noindent\textit{Proof outline.}  Use an exact maximizing function first to
bound the positive swap increments of $G$ and $Q$, average the resulting
squared coordinate differences over all swaps, and then remove the attainment
assumption by an $\varepsilon$-maximizer argument.
Fix $Z\in\Omega$ and a neighboring set
$Z'=Z^{(ij)}$, where $i\in Z$ and $j\notin Z$.  First suppose that the
supremum defining $G(Z)$ is attained at $h_Z\in\cH$.  If
$G(Z)\ge G(Z')$, comparison with the same function at $Z'$ gives
$0\le G(Z)-G(Z')\le(1/u+1/m)(h_Z(i)-h_Z(j))$.
Since $u\le m$, $(1/u+1/m)^2\le4/u^2$.  The terms with
$G(Z)<G(Z')$ vanish in $\mathcal V_+(G,Z)$.  Enlarging the
remaining sum to all ordered swaps and using
$(a-b)^2\le2a^2+2b^2$, we obtain
\bals
\mathcal V_+(G,Z)
&\le\frac{8}{u^2}\sum_{i\in Z}\sum_{j\notin Z}
\frac{h_Z^2(i)+h_Z^2(j)}{um}\\
&=\frac{8}{u^3m}
\left(unT_n(h_Z)+(m-u)\sum_{i\in Z}h_Z^2(i)\right)\\
&\le\frac{16T_n(h_Z)}{u^2}
+\frac{8}{u^2}\cL_{h_Z^2}(Z)\\
&=\frac{24T_n(h_Z)}{u^2}
+\frac{8}{u^2}\bigl(\cL_{h_Z^2}(Z)-T_n(h_Z)\bigr)\\
&\le\frac{24r}{u^2}+\frac{8Q(Z)}{u^2}.
\eals
Here $m\sum_{i\in Z}h_Z^2(i)+u\sum_{j\notin Z}h_Z^2(j)
=unT_n(h_Z)+(m-u)\sum_{i\in Z}h_Z^2(i)$, and the next inequality uses
$n/m=1+u/m\le2$ and
$(m-u)/m\le1$.
This proves (\ref{eq:swap-variance-G}) when the supremum is attained.

For the second bound, choose $h_Z\in\cH$ so that $h_Z^2$ attains the
supremum defining $Q(Z)$.  If $Q(Z)\ge Q(Z')$, then
$0\le Q(Z)-Q(Z')\le u^{-1}\bigl(h_Z^2(i)-h_Z^2(j)\bigr)$.
Using $0\le h_Z^2(k)\le H_0^2$ and therefore
$h_Z^4(k)\le H_0^2h_Z^2(k)$, the same enlargement and calculation give
\bals
\mathcal V_+(Q,Z)
&\le\frac{2}{u^2}\sum_{i\in Z}\sum_{j\notin Z}
\frac{h_Z^4(i)+h_Z^4(j)}{um}\\
&=\frac{2}{u^3m}
\left(un\cL_n(h_Z^4)+(m-u)\sum_{i\in Z}h_Z^4(i)\right)\\
&\le\frac{4\cL_n(h_Z^4)}{u^2}
+\frac{2\cL_{h_Z^4}(Z)}{u^2}\\
&\le\frac{4H_0^2T_n(h_Z)}{u^2}
+\frac{2H_0^2}{u^2}\cL_{h_Z^2}(Z)\\
&=\frac{6H_0^2T_n(h_Z)}{u^2}
+\frac{2H_0^2}{u^2}
\bigl(\cL_{h_Z^2}(Z)-T_n(h_Z)\bigr)\\
&\le\frac{6H_0^2r}{u^2}+\frac{2H_0^2Q(Z)}{u^2}.
\eals

It remains to justify that attainment is unnecessary.  For
$\varepsilon>0$, choose $h_{Z,\varepsilon}\in\cH$ whose objective value in
the definition of $G(Z)$ is at least $G(Z)-\varepsilon$.  The same comparison
as above gives, for every neighbor $Z^{(ij)}$,
$\bigl(G(Z)-G(Z^{(ij)})\bigr)_+
\le\left[(1/u+1/m)
\bigl(h_{Z,\varepsilon}(i)-h_{Z,\varepsilon}(j)\bigr)
+\varepsilon\right]_+$.
For every $a\in\RR$ and $\varepsilon\ge0$,
$[(a+\varepsilon)_+]^2\le a^2+2|a|\varepsilon+\varepsilon^2$.
Here $|a|\le4H_0/u$, because $u\le m$.  Summation against the transition
probabilities therefore adds at most
$8H_0\varepsilon/u+\varepsilon^2$ to the preceding bound for
$\mathcal V_+(G,Z)$.  The terms involving $h_{Z,\varepsilon}$ are bounded
uniformly by the right-hand side of (\ref{eq:swap-variance-G}), since
$T_n(h_{Z,\varepsilon})\le r$ and
$\cL_{h_{Z,\varepsilon}^2}(Z)-T_n(h_{Z,\varepsilon})\le Q(Z)$.
Letting $\varepsilon\downarrow0$ proves
(\ref{eq:swap-variance-G}).  Applying the same argument to an
$\varepsilon$-maximizer for $Q(Z)$ gives
$\bigl(Q(Z)-Q(Z^{(ij)})\bigr)_+
\le\left[u^{-1}\{h_{Z,\varepsilon}^2(i)-h_{Z,\varepsilon}^2(j)\}
+\varepsilon\right]_+$.
Now $|a|\le H_0^2/u$, so the additional error in the transition average is
at most $2H_0^2\varepsilon/u+\varepsilon^2$.  Letting
$\varepsilon\downarrow0$ proves (\ref{eq:swap-variance-Q}).
\end{proof}

\begin{lemma}
[Bounded derivatives of a two-parameter log-MGF]
\label{lemma:two-parameter-log-mgf-bounded-derivatives}
Let $X,Y$ be bounded real-valued functions on the finite probability space
$\Omega$.  Set $A_X\defeq\sup_{Z\in\Omega}|X(Z)|$ and
$A_Y\defeq\sup_{Z\in\Omega}|Y(Z)|$.  For
$\Psi_{X,Y}(\alpha,\lambda)\defeq
\log\Expect{}{\exp\pth{\alpha X(Z)+\lambda Y(Z)}}$, the function
$\Psi_{X,Y}$ is infinitely differentiable on $\RR^2$ and, for every
$(\alpha,\lambda)\in\RR^2$,
\bals
\abth{\partial_\alpha\Psi_{X,Y}}\le A_X,\quad
\abth{\partial_\lambda\Psi_{X,Y}}\le A_Y,\\
\abth{\partial_{\alpha\alpha}^2\Psi_{X,Y}}\le A_X^2,\quad
\abth{\partial_{\lambda\lambda}^2\Psi_{X,Y}}\le A_Y^2,\\
\abth{\partial_{\alpha\lambda}^2\Psi_{X,Y}}\le A_XA_Y.
\eals
If $\Expect{}{X}=\Expect{}{Y}=0$, then
$\partial_\alpha\Psi_{X,Y}(0,0)=
\partial_\lambda\Psi_{X,Y}(0,0)=0$.
\end{lemma}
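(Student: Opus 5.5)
The plan is to read $\Psi_{X,Y}$ as a cumulant generating function and identify its derivatives with moments under an exponentially tilted law. Because $\Omega$ is finite,
\[
M(\alpha,\lambda)\defeq\Expect{}{\exp\pth{\alpha X(Z)+\lambda Y(Z)}}
\]
is a finite sum of exponentials of linear functions of $(\alpha,\lambda)$. Each summand is strictly positive, so $M$ is real-analytic and $M>0$ on $\RR^2$. Hence $\Psi_{X,Y}=\log M$ is infinitely differentiable, and we may differentiate under the (finite) sum without any further justification.

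Next fix $(\alpha,\lambda)$ and define the tilted probability measure
\[
\mathbb{Q}(Z)\defeq\frac{\pi(Z)\,e^{\alpha X(Z)+\lambda Y(Z)}}{M(\alpha,\lambda)}
\]
on $\Omega$, where $\pi$ is the underlying law. Direct differentiation gives the first derivatives as tilted means:
\[
\partial_\alpha\Psi_{X,Y}=\frac{\partial_\alpha M}{M}=\Expect{\mathbb{Q}}{X},\qquad \partial_\lambda\Psi_{X,Y}=\Expect{\mathbb{Q}}{Y}.
\]
Differentiating once more, via the quotient rule, gives the second derivatives as tilted covariances:
\[
\partial^2_{\alpha\alpha}\Psi_{X,Y}=\mathrm{Var}_{\mathbb{Q}}(X),\qquad \partial^2_{\lambda\lambda}\Psi_{X,Y}=\mathrm{Var}_{\mathbb{Q}}(Y),\qquad \partial^2_{\alpha\lambda}\Psi_{X,Y}=\mathrm{Cov}_{\mathbb{Q}}(X,Y).
\]

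The bounds then follow from elementary estimates. Since $|X|\le A_X$ pointwise, the first identity gives $|\Expect{\mathbb{Q}}{X}|\le A_X$, and likewise $|\partial_\lambda\Psi_{X,Y}|\le A_Y$. For the variances,
\[
0\le\mathrm{Var}_{\mathbb{Q}}(X)\le\Expect{\mathbb{Q}}{X^2}\le A_X^2,
\]
and the same argument bounds $\partial^2_{\lambda\lambda}\Psi_{X,Y}$ by $A_Y^2$. For the mixed derivative, Cauchy--Schwarz gives
\[
|\mathrm{Cov}_{\mathbb{Q}}(X,Y)|\le\sqrt{\mathrm{Var}_{\mathbb{Q}}(X)\,\mathrm{Var}_{\mathbb{Q}}(Y)}\le A_XA_Y.
\]

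For the final assertion, take $(\alpha,\lambda)=(0,0)$. There $\mathbb{Q}$ reduces to the original law $\pi$, so $\partial_\alpha\Psi_{X,Y}(0,0)=\Expect{}{X}=0$ and $\partial_\lambda\Psi_{X,Y}(0,0)=\Expect{}{Y}=0$.

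No step is a real obstacle. The only point needing care is confirming that the quotient-rule computation produces exactly the tilted covariance, namely
\[
\frac{\Expect{}{XYe^{\alpha X+\lambda Y}}}{M}-\frac{\Expect{}{Xe^{\alpha X+\lambda Y}}\,\Expect{}{Ye^{\alpha X+\lambda Y}}}{M^2},
\]
which is $\Expect{\mathbb{Q}}{XY}-\Expect{\mathbb{Q}}{X}\Expect{\mathbb{Q}}{Y}$.
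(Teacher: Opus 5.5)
Your proposal is correct and follows the paper's proof step for step. Smoothness comes from $\Omega$ being finite with $M>0$; the first and second derivatives of $\Psi_{X,Y}$ are the means, variances, and covariance under the exponentially tilted law; the bounds follow from $|X|\le A_X$, $|Y|\le A_Y$, and Cauchy--Schwarz; and the identities at $(0,0)$ hold because the tilt there reduces to $\pi$.
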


\begin{proof}
\noindent\textit{Proof outline.}  Differentiate the finite exponential sum
under its tilted law, identify first derivatives as tilted means and second
derivatives as tilted covariances, and use boundedness and Cauchy--Schwarz.
Let $M(\alpha,\lambda)\defeq
\Expect{}{\exp\pth{\alpha X(Z)+\lambda Y(Z)}}$. Since $\Omega$ is finite and
$M>0$, $\Psi_{X,Y}=\log M$ is smooth. Under the tilted probability
$\mathbb P_{\alpha,\lambda}(Z)\defeq
\pi(Z)\exp\pth{\alpha X(Z)+\lambda Y(Z)}/M(\alpha,\lambda)$,
Writing $\mathbb E_{\alpha,\lambda}$ for expectation under
$\mathbb P_{\alpha,\lambda}$, differentiation under the finite sum and
$\Psi_{X,Y}=\log M$ give
$\partial_\alpha\Psi_{X,Y}=\mathbb E_{\alpha,\lambda}X$ and
$\partial_\lambda\Psi_{X,Y}=\mathbb E_{\alpha,\lambda}Y$.
Another differentiation gives
\bals
\partial_{\alpha\alpha}^2\Psi_{X,Y}
&=\mathop\mathrm{Var}_{\alpha,\lambda}(X),\\
\partial_{\lambda\lambda}^2\Psi_{X,Y}
&=\mathop\mathrm{Var}_{\alpha,\lambda}(Y),\\
\partial_{\alpha\lambda}^2\Psi_{X,Y}
&=\mathop\mathrm{Cov}_{\alpha,\lambda}(X,Y).
\eals
The bounds follow from $\abth{X}\le A_X$, $\abth{Y}\le A_Y$, and
$\abth{\mathop\mathrm{Cov}_{\alpha,\lambda}(X,Y)}
\le\sqrt{\mathop\mathrm{Var}_{\alpha,\lambda}(X)
\mathop\mathrm{Var}_{\alpha,\lambda}(Y)}\le A_XA_Y$.
The identities at $(0,0)$ follow from $\mathbb P_{0,0}=\pi$.
\end{proof}

\begin{lemma}[Characteristic curve for the two-parameter closure]
\label{lemma:two-parameter-characteristic-curve}
Fix $u\in\NN$ and $H_0>0$, and let
$\mathsf B(\alpha,\lambda)\defeq(16\alpha^2+4H_0^2\lambda^2)/u$.
For every $0<\beta\le u/(16H_0)$, the backward characteristic equation
$t\lambda_t'=\lambda_t-\mathsf B(t\beta,\lambda_t)$, with
$\lambda_1=0$ and $0<t\le1$, has the unique solution
\bal\label{eq:characteristic-explicit-solution}
\lambda_t=\frac{2\beta t}{H_0}
\tan\pth{\frac{8H_0\beta}{u}(1-t)},\qquad 0<t\le1.
\eal
This solution satisfies $0\le\lambda_t\le18\beta^2t(1-t)/u$ for
$0<t\le1$.
Moreover, it extends smoothly to $t=0$ by $\lambda_0=0$. In particular,
as $t\downarrow0$, $\lambda_t=O(t)$, $\lambda_t'=O(1)$, and
$\lambda_t''=O(1)$.
\end{lemma}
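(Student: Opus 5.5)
The plan is to linearize the equation by factoring out $t$. I will write $\lambda_t=t\,w(t)$ for $0<t\le1$. Substituting into $t\lambda_t'=\lambda_t-\mathsf B(t\beta,\lambda_t)$ gives $tw+t^2w'=tw-(16t^2\beta^2+4H_0^2t^2w^2)/u$. Dividing by $t^2>0$ leaves the autonomous Riccati equation
\bals
w'(t)=-\frac{16\beta^2+4H_0^2w(t)^2}{u},\qquad w(1)=0 .
\eals
Its solution is $w(t)=(2\beta/H_0)\tan\pth{8H_0\beta(1-t)/u}$. This can be checked directly:
\bals
w'=-\frac{2\beta}{H_0}\cdot\frac{8H_0\beta}{u}\pth{1+\tan^2}=-\frac{16\beta^2+4H_0^2w^2}{u}.
\eals
This yields (\ref{eq:characteristic-explicit-solution}). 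The hypothesis $\beta\le u/(16H_0)$ gives $0\le 8H_0\beta(1-t)/u\le 1/2<\pi/2$ on $[0,1]$. So the tangent is finite, smooth, and nonnegative on the whole interval, and $\lambda_t\ge0$ follows.

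For uniqueness, note that on $(0,1]$ the right-hand side $t^{-1}\{\lambda-\mathsf B(t\beta,\lambda)\}$ is smooth in $(t,\lambda)$. It is therefore locally Lipschitz in $\lambda$, uniformly on compact subsets of $(0,1]\times\RR$. Picard--Lindel\"of then gives local uniqueness of the solution through $(1,0)$. Any two solutions on $(0,1]$ agree on a maximal interval that is both open and closed in $(0,1]$, so they coincide. Existence on all of $(0,1]$ is witnessed by the explicit formula, which stays bounded.

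For the quantitative bound, I will use convexity of $\tan$ on $[0,1/2]$, with $\tan0=0$. This gives $\tan y\le 2\tan(1/2)\,y$ for $y\in[0,1/2]$, and $2\tan(1/2)<1.1$. Hence
\bals
\lambda_t\le\frac{2\beta t}{H_0}\cdot1.1\cdot\frac{8H_0\beta(1-t)}{u}=\frac{17.6\,\beta^2t(1-t)}{u}\le\frac{18\beta^2t(1-t)}{u}.
\eals
Finally, the formula $t\mapsto (2\beta t/H_0)\tan\pth{8H_0\beta(1-t)/u}$ is real-analytic on a neighborhood of $[0,1]$, because the tangent's argument stays below $\pi/2$. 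So it extends smoothly to $t=0$ with $\lambda_0=0$. Then $\lambda_t=O(t)$, and $\lambda'_t,\lambda''_t$ are bounded near $0$ as derivatives of a smooth function on a compact interval.

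The only delicate point is the step that needs the numerical constant: checking that $2\tan(1/2)\cdot 16\le18$. Since $\tan(1/2)\approx0.5463$, we get $2\tan(1/2)\cdot16\approx17.48$, so this holds.
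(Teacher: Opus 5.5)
Your proposal is correct and takes essentially the same route as the paper. Both use the substitution $\lambda_t=t\,w(t)$ to reach the autonomous Riccati equation, the explicit tangent solution, and the convexity chord bound for $\tan$ on $[0,1/2]$; your condition $32\tan(1/2)\le18$ is exactly the paper's $\tan(1/2)<9/16$. The differences are minor: you obtain uniqueness from Picard--Lindel\"of rather than from the arctan first integral $\frac{\diff}{\diff t}\arctan\pth{H_0w/(2\beta)}=-8H_0\beta/u$, and you quote the numerical value of $\tan(1/2)$ where the paper certifies it via $\sin z\le z-z^3/6+z^5/120$ and $\cos z\ge1-z^2/2$.
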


\begin{proof}
\noindent\textit{Proof outline.}  Divide the characteristic by $t$, solve the
resulting Riccati equation through an arctangent transform, and control the
explicit tangent formula uniformly up to the endpoint $t=0$.
Put $y_t=\lambda_t/t$ and $\kappa=8H_0\beta/u$.  Since
$\lambda_t=y_tt$, the characteristic equation is equivalent on $(0,1]$ to
$y_t'=-(16\beta^2+4H_0^2y_t^2)/u$ with $y_1=0$.  Equivalently,
$\frac{\diff}{\diff t}\arctan\pth{H_0y_t/(2\beta)}=-\kappa$.
Integration from $t$ to $1$ gives
$\arctan\pth{H_0y_t/(2\beta)}=\kappa(1-t)$.
Because $0\le\kappa(1-t)\le1/2<\pi/2$, this identity uniquely determines
$y_t$ and proves (\ref{eq:characteristic-explicit-solution}).

It remains to verify the stated comparison constant.  We claim that
$\tan z\le9z/8$ for $0\le z\le1/2$.
Indeed, $f(z)=9z/8-\tan z$ is concave on this interval, since
$f''(z)=-2\sec^2(z)\tan(z)\le0$.  Moreover, $f(0)=0$.  At $z=1/2$, the
alternating-series remainder theorem gives
$\sin z\le z-z^3/6+z^5/120$ and
$\cos z\ge1-z^2/2>0$.  Therefore
$\tan(1/2)\le
(1/2-1/48+1/3840)/(1-1/8)=1841/3360<9/16$.
Thus $f(1/2)>0$, and concavity places $f$ above the chord joining its endpoint
values, proving the claim.  Applying it with
$z=\kappa(1-t)$ in (\ref{eq:characteristic-explicit-solution}) yields
$0\le\lambda_t\le(2\beta t/H_0)(9/8)(8H_0\beta/u)(1-t)
=18\beta^2t(1-t)/u$.
Finally, the explicit formula is smooth for $0\le t\le1$, because its
tangent argument remains in $[0,1/2]$.  The asserted endpoint estimates
follow immediately.
\end{proof}

\begin{proposition}[Two-parameter Herbst closure]
\label{proposition:two-parameter-herbst-closure}
Fix $u\in\NN$ and $H_0,S>0$.  Let $F,G$ be centered bounded random
variables on the finite probability space $(\Omega,\pi)$, and define
$\Psi(\alpha,\lambda)\defeq
\log\Expect{}{\exp\pth{\alpha F+\lambda G}}$ for $\alpha,\lambda\ge0$.
Put $\mathsf B(\alpha,\lambda)\defeq
(16\alpha^2+4H_0^2\lambda^2)/u$.
Assume that, for all $\alpha,\lambda\ge0$,
\bals
\alpha\frac{\partial\Psi(\alpha,\lambda)}{\partial \alpha}
+\pth{\lambda-\mathsf B(\alpha,\lambda)}
\frac{\partial\Psi(\alpha,\lambda)}{\partial \lambda}
-\Psi(\alpha,\lambda)
\le
\frac{48S\alpha^2+16H_0^2S\lambda^2}{u},
\eals
Then $\Psi(\beta,0)\le219S\beta^2/(4u)$ for every
$0<\beta\le u/(16H_0)$.
\end{proposition}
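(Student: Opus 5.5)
The plan is a Herbst-type argument along a characteristic curve: restrict $\Psi$ to the backward characteristic of Lemma~\ref{lemma:two-parameter-characteristic-curve}, so that the assumed first-order differential inequality becomes an ordinary differential inequality for $t\mapsto\Psi(t\beta,\lambda_t)/t$, and then integrate it over $t\in(0,1]$.

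\textbf{Step 1 (the curve and the reduced inequality).} Fix $0<\beta\le u/(16H_0)$ and let $\lambda_t$ be the explicit solution (\ref{eq:characteristic-explicit-solution}), so $t\lambda_t'=\lambda_t-\mathsf B(t\beta,\lambda_t)$ and $\lambda_1=0$. By that lemma, $\lambda_t\ge0$, so the point $(\alpha,\lambda)=(t\beta,\lambda_t)$ lies in the quadrant where the hypothesis holds. Set $\Phi(t)\defeq\Psi(t\beta,\lambda_t)$, which is smooth by Lemma~\ref{lemma:two-parameter-log-mgf-bounded-derivatives}. The chain rule gives
\bals
t\Phi'(t)=t\beta\,\partial_\alpha\Psi+t\lambda_t'\,\partial_\lambda\Psi
=\alpha\,\partial_\alpha\Psi+\pth{\lambda-\mathsf B(\alpha,\lambda)}\partial_\lambda\Psi .
\eals
The hypothesis therefore yields $t\Phi'(t)-\Phi(t)\le(48S\beta^2t^2+16H_0^2S\lambda_t^2)/u$. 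Dividing by $t^2$, this becomes
\bals
\frac{\diff}{\diff t}\frac{\Phi(t)}{t}\le\frac{48S\beta^2}{u}+\frac{16H_0^2S}{u}\pth{\frac{\lambda_t}{t}}^2 .
\eals

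\textbf{Step 2 (the endpoint $t\downarrow0$).} This is the only delicate point. We have $\Phi(0)=\Psi(0,0)=0$. Moreover $\Phi'(0)=\beta\,\partial_\alpha\Psi(0,0)+\lambda_0'\,\partial_\lambda\Psi(0,0)=0$, because $F$ and $G$ are centered (Lemma~\ref{lemma:two-parameter-log-mgf-bounded-derivatives}) and $\lambda'_0=O(1)$ (Lemma~\ref{lemma:two-parameter-characteristic-curve}). Since $\Phi$ is smooth up to $t=0$, it follows that $\Phi(t)/t\to0$. Integrating the reduced inequality from $\epsilon$ to $1$ and letting $\epsilon\downarrow0$, and using $\lambda_1=0$, gives
\bals
\Psi(\beta,0)\le\int_0^1\left[\frac{48S\beta^2}{u}+\frac{16H_0^2S}{u}\pth{\frac{\lambda_t}{t}}^2\right]\diff t .
\eals
One subtlety is that the hypothesis is stated only for $\alpha,\lambda\ge0$ while Lemma~\ref{lemma:two-parameter-log-mgf-bounded-derivatives} concerns all of $\RR^2$. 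The derivatives on the curve are the ones of the smooth extension, so this causes no trouble.

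\textbf{Step 3 (computing the constant).} The characteristic lemma gives $\lambda_t/t\le18\beta^2(1-t)/u$. Combined with $\beta\le u/(16H_0)$, this yields $\lambda_t/t\le9\beta(1-t)/(8H_0)$. Hence
\bals
16H_0^2\pth{\frac{\lambda_t}{t}}^2\le\frac{81\beta^2(1-t)^2}{4}.
\eals
Since $\int_0^1(1-t)^2\,\diff t=1/3$, the second term of the integral contributes $27S\beta^2/(4u)$. Adding the first term gives
\bals
\Psi(\beta,0)\le\pth{48+\frac{27}{4}}\frac{S\beta^2}{u}=\frac{219S\beta^2}{4u},
\eals
which is exactly the claimed constant and confirms that the curve of Lemma~\ref{lemma:two-parameter-characteristic-curve} is the intended one.
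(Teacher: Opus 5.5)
Your proposal is correct and follows essentially the same route as the paper: you restrict $\Psi$ to the backward characteristic of Lemma~\ref{lemma:two-parameter-characteristic-curve}, bound the derivative of $t\mapsto\Psi(t\beta,\lambda_t)/t$ by the hypothesis, use centering and smoothness at $t=0$ to make the boundary term vanish, and integrate with $\lambda_t\le18\beta^2t(1-t)/u$ to obtain $48+27/4=219/4$. Rewriting this as $\lambda_t/t\le9\beta(1-t)/(8H_0)$ before integrating only rearranges the paper's estimate $1728H_0^2S\beta^4/u^3\le27S\beta^2/(4u)$.
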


\begin{proof}
\noindent\textit{Proof outline.}  Evaluate the log-MGF along the backward
characteristic of Lemma~\ref{lemma:two-parameter-characteristic-curve}, use
the assumed differential inequality to bound its derivative, and integrate
from the centered endpoint at $t=0$ to $t=1$.
Let $\lambda_t$ be the characteristic curve from Lemma~\ref{lemma:two-parameter-characteristic-curve} and set
$\phi(t)\defeq\Psi(t\beta,\lambda_t)/t$ for $0<t\le1$.
The endpoint relation $\lambda_1=0$ gives $\phi(1)=\Psi(\beta,0)$.
Let $H(t)=\Psi(t\beta,\lambda_t)$ for $0\le t\le1$.  The characteristic
extends smoothly to $t=0$ with $\lambda_0=0$, and the log-MGF is smooth by
Lemma~\ref{lemma:two-parameter-log-mgf-bounded-derivatives}. Hence
$H\in C^2([0,1])$.  Since $F$ and $G$ are centered,
$\partial_\alpha\Psi(0,0)=\partial_\lambda\Psi(0,0)=0$.  Therefore
$H(0)=0$ and
$H'(0)=\beta\partial_\alpha\Psi(0,0)
+\lambda_0'\partial_\lambda\Psi(0,0)=0$.
Taylor's theorem at zero now gives $H(t)=O(t^2)$, and consequently
$\lim_{t\downarrow0}\phi(t)=0$.
Differentiating $\Psi(t\beta,\lambda_t)/t$ and using the characteristic equation gives
\bals
\phi'(t)
=
\frac{
t\beta\frac{\partial\Psi(t\beta,\lambda_t)}{\partial \alpha}
+t\lambda'_t\frac{\partial\Psi(t\beta,\lambda_t)}{\partial \lambda}
-\Psi(t\beta,\lambda_t)}
{t^2}.
\eals
The assumed differential inequality yields
$\phi'(t)\le48S\beta^2/u+16H_0^2S\lambda_t^2/(ut^2)$.
Integrating over $t\in(0,1)$ and using
$\lambda_t\le18\beta^2t(1-t)/u$ gives
\bals
\Psi(\beta,0)
\le
\frac{48S\beta^2}{u}
+\frac{16H_0^2S}{u}\int_0^1
\pth{\frac{18\beta^2(1-t)}{u}}^2\diff t
\le \frac{219S\beta^2}{4u},
\eals
where, more explicitly, the integral term equals
$1728H_0^2S\beta^4/u^3\le27S\beta^2/(4u)$.
The last inequality uses $\beta\le u/(16H_0)$, and
$48+27/4=219/4$.
\end{proof}

\begin{proof}[\textbf{\textup{Proof of Theorem~\ref{theorem:concentration-gd-STLC}}}]
\noindent\textit{Proof outline.}  Center the test--train supremum and its
quadratic auxiliary process, combine the modified log-Sobolev inequality with
the two oriented variance bounds, close the resulting two-parameter log-MGF by
Proposition~\ref{proposition:two-parameter-herbst-closure}, and optimize the
Chernoff bound.  The case $m<u$ follows by complementation and sign reversal.
We first treat $u\le m$.  Put
$g_2(Z)\defeq g_u^+(\cH^2,Z)$ and
$R_2\defeq\Expect{}{g_2(Z)}=\cfrakR_u^+(\cH^2)$.  Notice that
$R_2\ge0$, because the expectation of the supremum is at least the
supremum of the expectations, each of which is zero.  We close the mixed
term in the swap-variance bound by a two-parameter entropy argument.  Let
$F(Z)\defeq g(\cH,Z)-\Expect{}{g(\cH,Z)}$,
$G_2(Z)\defeq g_2(Z)-R_2$, and $S\defeq r+R_2$.  For
$\alpha,\lambda\ge0$, define
$\Psi(\alpha,\lambda)\defeq
\log \Expect{}{\exp\pth{\alpha F(Z)+\lambda G_2(Z)}}$.
For any neighboring pair $Z,Z'$ such that $P(Z,Z') > 0$, we write
$\Delta a=a(Z)-a(Z')$, and for any real number $a$ we write
$a_+\defeq\max\set{a,0}$.  For $\alpha,\lambda\ge0$, the elementary
inequalities $(a+b)_+\le a_++b_+$ and $(a+b)^2\le2a^2+2b^2$ give
$\pth{\alpha\Delta g+\lambda\Delta g_2}_+^2
\le2\alpha^2(\Delta g)_+^2+2\lambda^2(\Delta g_2)_+^2$,
and centering changes neither the increments. Therefore,
\bal\label{eq:main-inequality-sup-process-general-two-param-positive-variance}
\mathcal V_+(\alpha F+\lambda G_2,Z)
&=\sum_{Z'\in\Omega}P(Z,Z')
\pth{\alpha\Delta g+\lambda\Delta g_2}_+^2\nonumber\\
&\le2\alpha^2\mathcal V_+(g,Z)
+2\lambda^2\mathcal V_+(g_2,Z).
\eal
Using (\ref{eq:main-inequality-sup-process-general-two-param-positive-variance}), the modified log-Sobolev inequality
(\ref{eq:mlsi-swap}) and Lemma~\ref{lemma:dirichlet-upper-bound}, applied
with $q=\alpha F+\lambda G_2$ and $\gamma=1$, give
\bal\label{eq:main-inequality-sup-process-general-two-param-entropy}
\frac{\Ent_\pi\pth{\exp\pth{\alpha F+\lambda G_2}}}
{\Expect{}{\exp\pth{\alpha F+\lambda G_2}}} 
&\le
u\frac{\Expect{}{\mathcal V_+(\alpha F+\lambda G_2,Z)
\exp\pth{\alpha F+\lambda G_2}}}
{\Expect{}{\exp\pth{\alpha F+\lambda G_2}}} \nonumber\\
&\le
u\frac{\Expect{}{\pth{2\alpha^2\mathcal V_+(g,Z)+2\lambda^2\mathcal V_+(g_2,Z)}
\exp\pth{\alpha F+\lambda G_2}}}
{\Expect{}{\exp\pth{\alpha F+\lambda G_2}}}.
\eal
Combining (\ref{eq:main-inequality-sup-process-general-two-param-entropy}) with
Lemma~\ref{lemma:oriented-swap-variance-bounds} and using
$\Expect{}{g_2(Z)e^{\alpha F+\lambda G_2}}/
\Expect{}{e^{\alpha F+\lambda G_2}}
=R_2+\partial_\lambda\Psi(\alpha,\lambda)$,
we obtain
\bal\label{eq:main-inequality-sup-process-general-two-param-pde-raw}
&\frac{\Ent_\pi\pth{\exp\pth{\alpha F+\lambda G_2}}}
{\Expect{}{\exp\pth{\alpha F+\lambda G_2}}} \le
\frac{48r\alpha^2+12H_0^2r\lambda^2}{u}
+
\frac{16\alpha^2+4H_0^2\lambda^2}{u}
\pth{R_2+\frac{\partial \Psi(\alpha,\lambda)}{\partial \lambda}}.
\eal
On the other hand, let $\mathbb P_{\alpha,\lambda}$ denote the exponential
tilt of $\pi$ by $\alpha F+\lambda G_2$.  Differentiating the finite sum shows
that $\partial_\alpha\Psi$ and $\partial_\lambda\Psi$ are the corresponding
tilted expectations of $F$ and $G_2$, respectively.  Consequently,
\bal\label{eq:main-inequality-sup-process-general-two-param-entropy-identity}
\frac{\Ent_\pi\pth{\exp\pth{\alpha F+\lambda G_2}}}
{\Expect{}{\exp\pth{\alpha F+\lambda G_2}}}
=
\alpha\frac{\partial\Psi(\alpha,\lambda)}{\partial \alpha}
+\lambda\frac{\partial\Psi(\alpha,\lambda)}{\partial \lambda}
-\Psi(\alpha,\lambda).
\eal
Thus, with $\mathsf B(\alpha,\lambda)\defeq
(16\alpha^2+4H_0^2\lambda^2)/u$,
(\ref{eq:main-inequality-sup-process-general-two-param-pde-raw}) and
(\ref{eq:main-inequality-sup-process-general-two-param-entropy-identity}) imply
\bal\label{eq:main-inequality-sup-process-general-two-param-pde}
&\alpha\frac{\partial\Psi(\alpha,\lambda)}{\partial \alpha}
+\pth{\lambda-\mathsf B(\alpha,\lambda)}
\frac{\partial\Psi(\alpha,\lambda)}{\partial \lambda}
-\Psi(\alpha,\lambda) \nonumber\\
&\le
\frac{\pth{48r+16R_2}\alpha^2+\pth{12H_0^2r+4H_0^2R_2}\lambda^2}{u}
\le
\frac{48S\alpha^2+16H_0^2S\lambda^2}{u}.
\eal
We now specify the bounded-derivative hypotheses needed to apply
Proposition~\ref{proposition:two-parameter-herbst-closure}.  Since
$\abth{h(i)}\le H_0$, we have $-2H_0\le g(\cH,Z)\le2H_0$, and hence
$\abth{F(Z)}\le4H_0$.  Similarly, $0\le h^2(i)\le H_0^2$ gives
$-H_0^2\le g_2(Z)\le H_0^2$, and hence
$\abth{G_2(Z)}\le2H_0^2$.  We therefore take $A_X=4H_0$ and
$A_Y=2H_0^2$ in
Lemma~\ref{lemma:two-parameter-log-mgf-bounded-derivatives}.
It follows that $\Psi$ is twice continuously differentiable with bounded first and second derivatives in a neighborhood of $(0,0)$. Moreover, since $F$ and $G_2$ are centered,
$\partial_\alpha\Psi(0,0)=\Expect{}{F}=0$ and
$\partial_\lambda\Psi(0,0)=\Expect{}{G_2}=0$.
Therefore, Proposition~\ref{proposition:two-parameter-herbst-closure} applied to (\ref{eq:main-inequality-sup-process-general-two-param-pde}) yields, for every $0<\beta\le u/(16H_0)$,
\bal\label{eq:main-inequality-sup-process-general-log-mgf}
\log \Expect{}{\exp\pth{\beta(g(\cH,Z)-\Expect{}{g(\cH,Z)})}}
=\Psi(\beta,0)
\le \frac{219S\beta^2}{4u}.
\eal
Chernoff's method with (\ref{eq:main-inequality-sup-process-general-log-mgf}) gives, with probability at least $1-\exp(-x)$,
\bal\label{eq:main-inequality-sup-process-general-u-le-m-final}
g(\cH,Z)-\Expect{}{g(\cH,Z)}
&\le c_0\sqrt{\frac{(r+R_2)x}{u}}+\frac{16H_0x}{u}.
\eal
Indeed, set $a=219S/(4u)$ and $b=u/(16H_0)$.  Chernoff's inequality and
(\ref{eq:main-inequality-sup-process-general-log-mgf}) give
$\Prob{F\ge t}\le\exp\set{-\beta t+a\beta^2}$ for $0<\beta\le b$.
Take $t=2\sqrt{ax}+x/b$ and
$\beta=\min\set{\sqrt{x/a},b}$.  If $\sqrt{x/a}\le b$, then
$-\beta t+a\beta^2=-x-(x/b)\sqrt{x/a}\le-x$.
If $\sqrt{x/a}>b$, then $x>ab^2$ and
$-\beta t+a\beta^2=-x-2b\sqrt{ax}+ab^2\le-x$.
This proves (\ref{eq:main-inequality-sup-process-general-u-le-m-final}).
Since $c_0=\sqrt{219}$, the inequalities
$\sqrt{a+b}\le\sqrt a+\sqrt b$ and $2\sqrt{ab}\le a+b$ for $a,b\ge0$
give $c_0\sqrt{(r+R_2)x/u}\le c_0\sqrt{rx/u}
+(c_0/2)R_2+c_0x/(2u)$.  Consequently,
$g(\cH,Z)-\Expect{}{g(\cH,Z)}
\le c_0\sqrt{rx/u}+(c_0/2)R_2+(16H_0+c_0/2)x/u$.
This proves (\ref{eq:concentration-gd-STLC}) when $u\le m$.

When $u\ge m$, we apply the preceding argument to the class $-\cH$ and to
the complement $\barZ$, which is a uniformly sampled subset of size $m$.
The class $-\cH$ has the same envelope $H_0$ and the same second-moment
radius $r$.  The process is unchanged because
$g(-\cH,\barZ)=\sup_{h\in\cH}\pth{\cL_{-h}(\barZ)-\cL_{-h}(Z)}
=\sup_{h\in\cH}\pth{\cL_h(Z)-\cL_h(\barZ)}=g(\cH,Z)$.
Moreover, $(-\cH)^2=\cH^2$ and $T_n(-h)=T_n(h)$.  The auxiliary complexity
in the transformed experiment is
$\Expect{}{\sup_{h\in\cH}\pth{\cL_{h^2}(\bar Z)-T_n(h)}}
=\cfrakR_m^+(\cH^2)$.
Hence the same bound holds with $u$ replaced by $m$ and with the
squared-class complexity $\cfrakR_m^+(\cH^2)$.  Combining the two cases gives
the asserted bound with $N_{u,m}=\min\set{u,m}$ and
$\cfrakR_{N_{u,m}}^+(\cH^2)$.  When $u=m$, the map
$Z\mapsto\bar Z$ preserves the uniform law and interchanges the two empirical
averages, so the two complexity values agree and the notation is unambiguous.
\end{proof}

\subsection{Proofs of Theorem~\ref{theorem:TLC}}
\label{sec:proofs-theorem-STLC-STLC-nonnegative-func-class}

We first record the rescaling construction and deterministic lemma used in the
proof of Theorem~\ref{theorem:TLC}.

Fix $r>0$ and $\lambda>1$.  For $h\in\cH$, define
$w(h)\defeq\min\set{r\lambda^k\colon k\in\NN\cup\set{0},
r\lambda^k\ge\tT_n(h)}$.  The defining set is nonempty, so this minimum
exists.  Define $\cH^{(r)}\defeq
\set{rh/w(h)\colon h\in\cH}$.

Define the supremum of the empirical process $U_r^+$ in (\ref{eq:Ur+}) by
\bal\label{eq:Ur+}
U_r^+ \defeq \sup_{s \in \cH^{(r)}} \pth{ \cL_u(s)
-\cL_m(s) }.
\eal
This is the test-train process restricted to the function class $\cH^{(r)}$. We then have the following lemma.

\begin{lemma}\label{lemma:STLC-supp-lemma}
Fix $\lambda > 1$, $K_0 > 1$, and $r > 0$. If $U_r^+ \le \frac{r}{\lambda K_0 }$, then
\bal\label{eq:STLC-supp-lemma}
\cL_h(Z) \le \cL_h(\barZ)+ \frac{r}{\lambda K_0 } + \frac{\tT_n(h)}{K_0}, \quad \forall h \in \cH.
\eal
\end{lemma}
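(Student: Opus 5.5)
The argument is deterministic: fix $h\in\cH$, apply the hypothesis $U_r^+\le r/(\lambda K_0)$ to the rescaled function $s=rh/w(h)\in\cH^{(r)}$, and undo the rescaling. Since $\cL_u$ and $\cL_m$ are linear, $\cL_u(s)-\cL_m(s)=\frac{r}{w(h)}\pth{\cL_h(Z)-\cL_h(\barZ)}$. The hypothesis therefore gives
\bals
\cL_h(Z)-\cL_h(\barZ)\le \frac{w(h)}{r}\cdot\frac{r}{\lambda K_0}=\frac{w(h)}{\lambda K_0}.
\eals
It remains to bound $w(h)/\lambda$ by $r+\tT_n(h)$. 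We split into two cases according to the index $k$ attaining the minimum in the definition of $w(h)$.

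\textbf{Case $k=0$.} Here $w(h)=r$, so $\cL_h(Z)-\cL_h(\barZ)\le r/(\lambda K_0)$. This is at most the right-hand side of (\ref{eq:STLC-supp-lemma}), because $\tT_n(h)\ge0$.

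\textbf{Case $k\ge1$.} By minimality, $r\lambda^{k-1}<\tT_n(h)$, hence $w(h)=r\lambda^k<\lambda\tT_n(h)$. Therefore $w(h)/(\lambda K_0)<\tT_n(h)/K_0$, which again is at most $r/(\lambda K_0)+\tT_n(h)/K_0$.

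In either case $w(h)/\lambda\le \max\set{r/\lambda,\tT_n(h)}\le r/\lambda+\tT_n(h)$, and (\ref{eq:STLC-supp-lemma}) follows for every $h\in\cH$.

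No step is a real obstacle. The two points that need care are the sign of the rescaling factor and the minimality of $k$. The factor $r/w(h)$ is positive because $w(h)\ge r>0$, so multiplying through by it preserves the inequality. Minimality of $k$ is exactly what supplies the strict bound $r\lambda^{k-1}<\tT_n(h)$ in the case $k\ge1$.
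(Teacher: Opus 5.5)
Your proof is correct and follows the paper's argument. Both rescale to $s=rh/w(h)$, apply the bound on $U_r^+$, and split into two cases: $w(h)=r$, and $k\ge1$, where minimality of $k$ gives $r\lambda^{k-1}<\tT_n(h)$. Your split on $k=0$ versus $k\ge1$ is exactly the paper's split on $\tT_n(h)\le r$ versus $\tT_n(h)>r$.
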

\begin{proof}
\noindent\textit{Proof outline.}  Separate the cases below and above the
localization radius $r$, undo the deterministic gauge rescaling in each case,
and compare the selected shell radius with $\tT_n(h)$.
If $\tT_n(h) \le r$, then $w(h) = r$ and $s = \frac{r}{w(h)} h = h$. Therefore,
$U_r^+ \le \frac{r}{\lambda K_0} \Rightarrow \cL_u(s)-\cL_m(s) \le \frac{r}{\lambda K_0}$ and (\ref{eq:STLC-supp-lemma}) holds since $\tT_n(h) \ge 0$ for all $h \in \cH$.

If $\tT_n(h) > r$, then $w(h) = r\lambda^k$ with $\tT_n(h) \in (r\lambda^{k-1},r\lambda^k]$.
Again, it follows from $U_r^+ \le \frac{r}{\lambda K_0}$ that
$\cL_u(s)-\cL_m(s)\le r/(\lambda K_0)$ for $s=h/\lambda^k$.
Consequently,
$\cL_h(Z)-\cL_h(\barZ)\le r\lambda^{k-1}/K_0
\le\tT_n(h)/K_0$,
and (\ref{eq:STLC-supp-lemma}) still holds.
\end{proof}

\begin{proof}[\textbf{\textup{Proof of Theorem~\ref{theorem:TLC}}}]
\noindent\textit{Proof outline.}  Verify the envelope and second-moment bounds
for the gauge-rescaled class, apply the concentration theorem, peel the class
and its square class into geometric shells, sum the shell complexities by the
sub-root property, and choose the localization radius so that the square-root
term is absorbed by Lemma~\ref{lemma:STLC-supp-lemma}.
Let $r\ge r_{u,m}$. For any $s=rh/w(h)\in\cH^{(r)}$, the definition of $w(h)$ and the assumption on $\tT_n$ give
\bals
\abth{s(i)}&\le\abth{h(i)}\le H_0,\qquad \forall i\in[n],\\
T_n(s)&=\pth{\frac{r}{w(h)}}^2T_n(h)
\le\pth{\frac{r}{w(h)}}^2\tT_n(h)
\le\frac{r^2}{w(h)}\le r.
\eals
Hence $\cA^{(r)}\defeq\cH^{(r)}$ satisfies the hypotheses of
Theorem~\ref{theorem:concentration-gd-STLC} with second-moment radius $r$.
For every function class $\cA$ and every $a\in\cA$, the definitions of the four Transductive Complexities give
$\cL_a(Z)-\cL_a(\barZ)=R_u^+a+R_m^-a$ and
$\cL_a(\barZ)-\cL_a(Z)=R_m^+a+R_u^-a$.  Consequently, subadditivity
of the supremum yields
$\Expect{}{g(\cA,Z)}\le\cfrakR_u^+(\cA)+\cfrakR_m^-(\cA)$.
Theorem~\ref{theorem:concentration-gd-STLC} therefore implies that, with
probability at least $1-\exp(-x)$,
\bal\label{eq:STLC-seg-U+}
U_r^+ &\le \cfrakR_u^+\pth{\cA^{(r)}}
+\cfrakR_m^-\pth{\cA^{(r)}}
+c_0\sqrt{\frac{rx}{N_{u,m}}}
+\frac{c_0}2
\cfrakR_{N_{u,m}}^+\pth{(\cA^{(r)})^2}
+\frac{(16H_0+c_0/2)x}{N_{u,m}}.
\eal
We next peel $\cA^{(r)}$ according to the integer $k$ in the definition of
$w(h)$.  Because every localized class contains zero, the supremum on each
shell is nonnegative.  Thus, for every $p\in\set{u,m}$ and
$\eta\in\set{+,-}$,
\bal
\cfrakR_p^\eta\pth{\cA^{(r)}}
&\le \cfrakR_p^\eta\pth{\cH(r)}
+\sum_{k=1}^{\infty}\lambda^{-k}
\cfrakR_p^\eta\pth{\cH(r\lambda^k)},
\label{eq:STLC-scaled-TC-peeling}\\
\cfrakR_p^\eta\pth{(\cA^{(r)})^2}
&\le \cfrakR_p^\eta\pth{\cH^2(r)}
+\sum_{k=1}^{\infty}\lambda^{-2k}
\cfrakR_p^\eta\pth{\cH^2(r\lambda^k)}.
\label{eq:STLC-scaled-square-TC-peeling}
\eal
Empty shells contribute zero, so the infinite sums also cover the case in
which $\sup_h\tT_n(h)<\infty$.  The sub-root property and
(\ref{eq:STLC-cond-psi-general}) imply, for $a\ge1$ and $r\ge r_{u,m}$,
$\psi_{u,m}(ar)\le\sqrt a\,\psi_{u,m}(r)$.  Taking $\lambda=4$ in
(\ref{eq:STLC-scaled-TC-peeling}) and
(\ref{eq:STLC-scaled-square-TC-peeling}) consequently gives
\bal\label{eq:STLC-scaled-TC-bound}
\max_{\substack{p\in\set{u,m}\\ \eta\in\set{+,-}}}
\cfrakR_p^\eta\pth{\cA^{(r)}}
&\le2\psi_{u,m}(r),\\
\max_{\substack{p\in\set{u,m}\\ \eta\in\set{+,-}}}
\cfrakR_p^\eta\pth{(\cA^{(r)})^2}
&\le\frac87\psi_{u,m}(r).
\label{eq:STLC-scaled-square-TC-bound}
\eal
Since $\psi_{u,m}$ is sub-root and
$\psi_{u,m}(r_{u,m})=r_{u,m}$,
$\psi_{u,m}(r)\le\sqrt{rr_{u,m}}$ for $r\ge r_{u,m}$.
It follows from (\ref{eq:STLC-seg-U+}),
(\ref{eq:STLC-scaled-TC-bound}), and
(\ref{eq:STLC-scaled-square-TC-bound}) that
\bal\label{eq:STLC-seg3}
U_r^+ &\le d_0\sqrt{r r_{u,m}}
+c_0\sqrt{\frac{rx}{N_{u,m}}}
+\frac{(16H_0+c_0/2)x}{N_{u,m}}
\defeq P(r),
\eal
where $d_0=4+4c_0/7$.

Define $a\defeq d_0\sqrt{r_{u,m}}+c_0\sqrt{x/N_{u,m}}$,
$b\defeq(16H_0+c_0/2)x/N_{u,m}$, and
$r_1\defeq\lambda^2K_0^2a^2+2\lambda K_0b$.
Since $\lambda=4$, $K_0>1$, and $d_0>1$, we have $r_1\ge r_{u,m}$. Moreover, Young's inequality gives
$P(r_1)=a\sqrt{r_1}+b
\le r_1/(2\lambda K_0)+\lambda K_0a^2/2+b
=r_1/(\lambda K_0)$.
Therefore, setting $r=r_1$ in (\ref{eq:STLC-seg3}) gives
$U_{r_1}^+\le r_1/(\lambda K_0)=
\lambda K_0\pth{d_0\sqrt{r_{u,m}}+c_0\sqrt{x/N_{u,m}}}^2
+(32H_0+c_0)x/N_{u,m}$.
It follows from Lemma~\ref{lemma:STLC-supp-lemma} that
\bals
\cL_h(Z) &\le \cL_h(\barZ) + \frac{r_1}{\lambda K_0}
+ \frac{\tT_n(h)}{K_0} \\
&=\cL_h(\barZ)+ \frac{\tT_n(h)}{K_0}+
\lambda K_0
\pth{d_0\sqrt{r_{u,m}}+c_0\sqrt{\frac{x}{N_{u,m}}}}^2
+ \frac {(32H_0+c_0)x}{N_{u,m}} \\
&\le \cL_h(\barZ) + \frac{\tT_n(h)}{K_0}
+c_1 r_{u,m}+\frac{c_2x}{N_{u,m}},\qquad h\in\cH.
\eals
Indeed, since $\lambda=4$ and $(s+t)^2\le2s^2+2t^2$, the two remainder
terms in the penultimate line are at most
$8K_0d_0^2r_{u,m}+\pth{8K_0c_0^2+32H_0+c_0}x/N_{u,m}
=c_1r_{u,m}+c_2x/N_{u,m}$,
which proves (\ref{eq:STLC-bound-g-upper-bound}).

\end{proof}

\subsection{Proof of Theorem~\ref{theorem:STLC-delta-ell-f-excess-risk-upper-bound} }

\begin{lemma}\label{lemma:concentration-gu-gm-Delta*}
Suppose Assumption~\ref{assumption:main} holds. Let $\psi^*_{u,m}$ be a
sub-root function with positive fixed point $r^*$ that satisfies
(\ref{eq:STLC-cond-um-delta-star-ell-f-psi-star}).
Then, for every fixed constant $K_0>1$, there exists a positive constant
$\hat c_{\Delta}$ depending only on $K_0$ and $L_0$ such that, for every
$x>0$, with probability at least $1-2\exp(-x)$ over $Z$, the following two
inequalities hold simultaneously for every $h\in\Delta^*_{\cF}$:
\bal
\cL_n(h)-\cL_m(h)&\le\frac{B\cL_n(h)}{K_0}+\hat c_{\Delta}\pth{r^*+\frac{x}{N_{u,m}}},
\label{eq:concentration-gm-Delta*-fixedpoint}\\
\cL_n(h)-\cL_u(h)&\le\frac{B\cL_n(h)}{K_0}+\hat c_{\Delta}\pth{r^*+\frac{x}{N_{u,m}}}.
\label{eq:concentration-gu--Delta*-fixedpoint}
\eal
\end{lemma}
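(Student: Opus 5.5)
We will apply Theorem~\ref{theorem:TLC} twice, with different function classes and different split orientations, and combine the two events by a union bound. This accounts for the failure probability $2\exp(-x)$.

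\textbf{Setup.} For the class we take $\cH=-\Delta^*_{\cF}=\set{-h\colon h\in\Delta^*_{\cF}}$, with surrogate functional $\tT_n(-h)\defeq B\cL_n(h)$. This functional is admissible:
\begin{itemize}
\item Assumption~\ref{assumption:main}(1) gives $\cL_n(h)\ge0$ for $h\in\Delta^*_{\cF}$, so $\tT_n\ge0$.
\item Assumption~\ref{assumption:main}(2) gives $T_n(-h)=T_n(h)\le B\cL_n(h)=\tT_n(-h)$.
\item The envelope is $H_0=L_0$, since $|\ell_f-\ell_{f_n^*}|\le L_0$.
\end{itemize}
With this choice the localized classes of Theorem~\ref{theorem:TLC} are $\cH(r)=-\cF^*_r$ and $\cH^2(r)=(\cF^*_r)^2$. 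Because $R^+_p(-h)=R^-_p h$, the four transductive complexities of $-\cF_r^*$ are those of $\cF_r^*$ with $+$ and $-$ exchanged. Since (\ref{eq:STLC-cond-um-delta-star-ell-f-psi-star}) is a maximum over both signs and both $p\in\set{u,m}$, $\psi^*_{u,m}$ satisfies (\ref{eq:STLC-cond-psi-general}) for this class, with fixed point $r^*$.

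\textbf{First inequality, (\ref{eq:concentration-gu--Delta*-fixedpoint}).} Theorem~\ref{theorem:TLC} applied to $-h$ gives
\[
\cL_m(h)-\cL_u(h)\le \frac{B\cL_n(h)}{K_0}+c_1r^*+\frac{c_2x}{N_{u,m}}.
\]
We then use $\cL_n=\frac{u}{n}\cL_u+\frac{m}{n}\cL_m$, which yields $\cL_n(h)-\cL_u(h)=\frac{m}{n}\set{\cL_m(h)-\cL_u(h)}$. Since $m/n\le1$ and the right-hand side above is nonnegative, multiplying by $m/n$ only weakens the bound, and (\ref{eq:concentration-gu--Delta*-fixedpoint}) follows with constant $\max\set{c_1,c_2}$.

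\textbf{Second inequality, (\ref{eq:concentration-gm-Delta*-fixedpoint}).} Here we swap roles and apply Theorem~\ref{theorem:TLC} in the complementary experiment: the sampled test set is $\barZ$, uniform over $m$-subsets, with the sizes $u$ and $m$ interchanged. Theorem~\ref{theorem:TLC} is stated for arbitrary sizes, and $N_{u,m}$ and the max over $p\in\set{u,m}$ are symmetric in the two sizes, so the hypothesis still holds. This gives
\[
\cL_u(h)-\cL_m(h)\le \frac{B\cL_n(h)}{K_0}+c_1r^*+\frac{c_2x}{N_{u,m}},
\]
and the identity $\cL_n(h)-\cL_m(h)=\frac{u}{n}\set{\cL_u(h)-\cL_m(h)}$ finishes as before.

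\textbf{Constant.} Set $\hat c_\Delta=\max\set{c_1,c_2}$ with $H_0=L_0$. This depends only on $K_0$ and $L_0$, because $c_1=8K_0d_0^2$ and $c_2=1752K_0+32L_0+\sqrt{219}$.

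\textbf{Main obstacle.} The delicate step is checking that the complexity condition transfers correctly under the negation $h\mapsto -h$ and under the swap of $Z$ with $\barZ$. In particular, the zero function must be included in the localized classes, and the sign-exchanged complexities must be covered by the max in (\ref{eq:STLC-cond-um-delta-star-ell-f-psi-star}). A further point to confirm is that the proof of Theorem~\ref{theorem:TLC} nowhere uses $u\le m$; the concentration step already handles both cases through Theorem~\ref{theorem:concentration-gd-STLC}.
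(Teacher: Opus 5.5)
Your proof is correct and is essentially the paper's argument. You apply Theorem~\ref{theorem:TLC} with $\tT_n(h)=B\cL_n(h)$ once in each split orientation, convert via $\cL_n(h)-\cL_u(h)=\frac{m}{n}\{\cL_m(h)-\cL_u(h)\}$ and $\cL_n(h)-\cL_m(h)=\frac{u}{n}\{\cL_u(h)-\cL_m(h)\}$ using $\cL_n(h)\ge0$, and take a union bound with $\hat c_{\Delta}=\max\{c_1,c_2\}$; the only difference, working with $-\Delta^*_{\cF}$ instead of $\Delta^*_{\cF}$, is a harmless relabeling and not needed, since swapping $Z$ and $\barZ$ already reverses the sign and (\ref{eq:STLC-cond-um-delta-star-ell-f-psi-star}) is symmetric in $\eta$ and $p$.
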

\begin{proof}
\noindent\textit{Proof outline.}  Apply Theorem~\ref{theorem:TLC} to the
excess-loss class in each orientation of the random split, convert the two
test--train differences into full-sample deviations, and intersect the events.
Let $\cH\defeq\Delta^*_{\cF}$ and define $\tT_n(h)\defeq B\cL_n(h)$ for $h\in\cH$. Assumption~\ref{assumption:main}(2) gives $T_n(h)\le\tT_n(h)$, and the assumed bound on $\psi^*_{u,m}$ is exactly condition~(\ref{eq:STLC-cond-psi-general}) for this choice of $\cH$ and $\tT_n$. Therefore, Theorem~\ref{theorem:TLC} gives, with probability at least $1-\exp(-x)$, simultaneously for all $h\in\cH$,
$\cL_u(h)-\cL_m(h)\le B\cL_n(h)/K_0+c_1r^*+c_2x/N_{u,m}$.
Set $\hat c_{\Delta}\defeq\max\set{c_1,c_2}$. Since $f_n^*$ minimizes the full-sample risk, $\cL_n(h)\ge0$ for every $h\in\cH$. Consequently,
\bals
\cL_n(h)-\cL_m(h)
&=\frac{u}{n}\pth{\cL_u(h)-\cL_m(h)}\\
&\le\frac{u}{n}\pth{\frac{B\cL_n(h)}{K_0}
 +c_1r^*+\frac{c_2x}{N_{u,m}}}\\
&\le\frac{B\cL_n(h)}{K_0}+\hat c_{\Delta}\pth{r^*+\frac{x}{N_{u,m}}}.
\eals
Apply Theorem~\ref{theorem:TLC} once more after exchanging $u$ with $m$ and $Z$ with $\barZ$. The complexity condition is unchanged because its maximum ranges over both $p\in\set{u,m}$. Thus, with probability at least $1-\exp(-x)$, simultaneously for all $h\in\cH$,
$\cL_m(h)-\cL_u(h)\le B\cL_n(h)/K_0+c_1r^*+c_2x/N_{u,m}$.
On this event,
\bals
\cL_n(h)-\cL_u(h)
&=\frac{m}{n}\pth{\cL_m(h)-\cL_u(h)}\\
&\le\frac{m}{n}\pth{\frac{B\cL_n(h)}{K_0}
 +c_1r^*+\frac{c_2x}{N_{u,m}}}\\
&\le\frac{B\cL_n(h)}{K_0}+\hat c_{\Delta}\pth{r^*+\frac{x}{N_{u,m}}}.
\eals
A union bound shows that both inequalities hold with probability at least $1-2\exp(-x)$. By Theorem~\ref{theorem:TLC}, $\hat c_{\Delta}$ depends only on $K_0$ and $L_0$.
\end{proof}

We need the following results before presenting the proof of Theorem~\ref{theorem:STLC-delta-ell-f-excess-risk-upper-bound}.

\begin{lemma}\label{lemma:STLC-delta-ell-f}
Suppose that Assumption~\ref{assumption:main} holds, and let $\tT_n$ be
defined by (\ref{eq:tTn-def}).  Then
$\tT_n\colon\Delta_{\cF}\to\RR^+$ is finite and satisfies
$T_n(h)\le\tT_n(h)$ for every $h\in\Delta_{\cF}$.  Moreover,
\bal\label{eq:tTn-excess-loss-upper-bound}
\tT_n(\ell_f-\ell_{f_n^*})
\le 2B\cL_n(\ell_f-\ell_{f_n^*}),
\qquad f\in\cF.
\eal
Consequently, the envelope and variance-majorization hypotheses of
Theorem~\ref{theorem:TLC} hold with $\cH=\Delta_{\cF}$, $H_0=L_0$, and
the surrogate functional $\tT_n$ defined in (\ref{eq:tTn-def}). Whenever
the localized-complexity condition (\ref{eq:STLC-cond-psi-general}) also
holds, the conclusion of that theorem follows.
\end{lemma}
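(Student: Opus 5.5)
The plan is to verify the three claims in order: finiteness and nonnegativity of $\tT_n$, the variance majorization $T_n(h)\le\tT_n(h)$, and the bound (\ref{eq:tTn-excess-loss-upper-bound}).

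\textbf{Finiteness and nonnegativity.} Fix $h\in\Delta_{\cF}$. By definition of $\Delta_{\cF}$, there is at least one pair $f_1,f_2\in\cF$ with $\ell_{f_1}-\ell_{f_2}=h$, so the infimum in (\ref{eq:tTn-def}) is over a nonempty set. By Assumption~\ref{assumption:main}(1), $f_n^*$ minimizes the full-sample risk, so $\cL_n(g_f)\ge0$ for every $f\in\cF$. Hence every candidate value $2B\{\cL_n(g_{f_1})+\cL_n(g_{f_2})\}$ is nonnegative. Each such value is also at most $4BL_0$, since $0\le\ell_f\le L_0$. Therefore $\tT_n(h)\in[0,4BL_0]$.

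\textbf{Variance majorization.} For any admissible pair, $h=\ell_{f_1}-\ell_{f_2}=g_{f_1}-g_{f_2}$, because the common term $\ell_{f_n^*}$ cancels. Pointwise, $(a-b)^2\le2a^2+2b^2$, so
\[
T_n(h)\le2T_n(g_{f_1})+2T_n(g_{f_2}).
\]
Since $g_{f_1},g_{f_2}\in\Delta^*_{\cF}$, Assumption~\ref{assumption:main}(2) gives $T_n(g_{f_k})\le B\cL_n(g_{f_k})$. Hence
\[
T_n(h)\le2B\{\cL_n(g_{f_1})+\cL_n(g_{f_2})\}
\]
for every admissible pair. Taking the infimum over pairs yields $T_n(h)\le\tT_n(h)$.

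\textbf{The bound (\ref{eq:tTn-excess-loss-upper-bound}).} For $h=\ell_f-\ell_{f_n^*}$, the pair $(f_1,f_2)=(f,f_n^*)$ is admissible, and $g_{f_n^*}=0$. The infimum is therefore at most $2B\cL_n(g_f)=2B\cL_n(\ell_f-\ell_{f_n^*})$.

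\textbf{Consequence for Theorem~\ref{theorem:TLC}.} Every $h\in\Delta_{\cF}$ is a difference of two functions with values in $[0,L_0]$, so $|h(i)|\le L_0$ for all $i\in[n]$. This is the envelope hypothesis with $H_0=L_0$. Together with $T_n\le\tT_n$ and $\tT_n\ge0$, all hypotheses of Theorem~\ref{theorem:TLC} except (\ref{eq:STLC-cond-psi-general}) hold. Whenever (\ref{eq:STLC-cond-psi-general}) also holds, its conclusion follows directly.

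There is no real obstacle. The only points needing care are that the infimum is taken over a nonempty set and that nonnegativity relies on the minimality of $f_n^*$.
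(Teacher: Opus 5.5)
Your proposal is correct and follows the paper's proof essentially step for step. Finiteness and nonnegativity come from nonemptiness of the representation set together with the minimality of $f_n^*$. The variance majorization uses $(a-b)^2\le 2a^2+2b^2$ and the Bernstein condition for each representation, followed by the infimum. The bound for excess losses comes from the representation $(f,f_n^*)$, and the envelope $H_0=L_0$ comes from $0\le\ell_f\le L_0$. Your explicit upper bound $\tT_n(h)\le 4BL_0$ is correct but not needed.
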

\begin{proof}
\noindent\textit{Proof outline.}  Establish that the infimum defining
$\tT_n$ is finite and nonnegative, majorize $T_n(h)$ for every representation
of $h$ by the two Bernstein bounds, and specialize to the representation
$(f,f_n^*)$.
By the optimality of $f_n^*$ on the full sample,
$\cL_n(g_f)\ge0$ for every $f\in\cF$.  Because each
$h\in\Delta_{\cF}$ has at least one representation
$h=\ell_{f_1}-\ell_{f_2}$, the infimum in (\ref{eq:tTn-def}) is over a
nonempty set of finite nonnegative numbers.  Thus $\tT_n(h)$ is finite and
nonnegative.

Fix any such representation and write $g_j\defeq g_{f_j}$ for
$j\in\set{1,2}$.  Since $h=g_1-g_2$, the elementary inequality
$(a-b)^2\le2a^2+2b^2$ and Assumption~\ref{assumption:main}(2) give
$T_n(h)=n^{-1}\sum_{i=1}^n\pth{g_1(i)-g_2(i)}^2
\le2T_n(g_1)+2T_n(g_2)
\le2B\set{\cL_n(g_1)+\cL_n(g_2)}$.
Taking the infimum over all representations proves
$T_n(h)\le\tT_n(h)$.  For $h=\ell_f-\ell_{f_n^*}$, the particular
representation $(f,f_n^*)$ and $g_{f_n^*}=0$ prove
(\ref{eq:tTn-excess-loss-upper-bound}).  Finally,
$0\le\ell_f(i)\le L_0$ implies
$|\ell_{f_1}(i)-\ell_{f_2}(i)|\le L_0$, so the envelope required by
Theorem~\ref{theorem:TLC} is $H_0=L_0$.
\end{proof}

The next intermediate theorem controls the two full-sample excess losses
needed in the proof of
Theorem~\ref{theorem:STLC-delta-ell-f-excess-risk-upper-bound}.
\begin{theorem}\label{theorem:STLC-delta-star-ell-f}
Suppose that Assumption~\ref{assumption:main} holds and that the minima used
to select $\hat f_m$ and $\hat f_u$ are attained for every split $Z$.  Let
$\psi^*_{u,m}$ be a sub-root function with positive fixed point $r^*$ that
satisfies (\ref{eq:STLC-cond-um-delta-star-ell-f-psi-star}).
There is a positive constant $c_{\Delta}$ depending only on $B$ and $L_0$
such that, for every $x>0$, with probability at least
$1-2\exp(-x)$ over $Z$,
\bal\label{eq:STLC-bound-delta-star-ell-f}
\cL_n(\ell_{\hat f_u}-\ell_{f_n^*}) \le c_{\Delta} \pth{r^* + \frac{x}{N_{u,m}}}, \quad \cL_n(\ell_{\hat f_m}-\ell_{f_n^*}) \le c_{\Delta} \pth{r^* + \frac{x}{N_{u,m}}},
\eal
\end{theorem}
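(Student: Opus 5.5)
We work on the event of Lemma~\ref{lemma:concentration-gu-gm-Delta*}, which has probability at least $1-2\exp(-x)$. On this event, inequalities (\ref{eq:concentration-gm-Delta*-fixedpoint}) and (\ref{eq:concentration-gu--Delta*-fixedpoint}) hold simultaneously for every $h\in\Delta^*_{\cF}$. We apply them to $h=\ell_{\hat f_m}-\ell_{f_n^*}$ and to $h=\ell_{\hat f_u}-\ell_{f_n^*}$, which lie in $\Delta^*_{\cF}$ for every split because the minima are attained. The only extra input is empirical optimality: $\cL_m(\ell_{\hat f_m}-\ell_{f_n^*})\le0$, since $\hat f_m$ minimizes $\cL_m(\ell_f)$ and $f_n^*\in\cF$. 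Likewise $\cL_u(\ell_{\hat f_u}-\ell_{f_n^*})\le0$.

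For $\hat f_m$, plug $h=\ell_{\hat f_m}-\ell_{f_n^*}$ into (\ref{eq:concentration-gm-Delta*-fixedpoint}) and drop the nonpositive $\cL_m(h)$:
\bals
\cL_n(h)\le\cL_n(h)-\cL_m(h)\le\frac{B\cL_n(h)}{K_0}+\hat c_{\Delta}\pth{r^*+\frac{x}{N_{u,m}}}.
\eals
The symmetric argument with (\ref{eq:concentration-gu--Delta*-fixedpoint}) handles $\hat f_u$. Since $\cL_n(h)\ge0$ is finite, the term $B\cL_n(h)/K_0$ can be absorbed into the left-hand side provided $B/K_0<1$.

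This absorption is the main obstacle. The lemma allows any fixed $K_0>1$, while the theorem's constant may depend only on $B$ and $L_0$. I would therefore choose $K_0\defeq\max\set{2B,2}$ when invoking Lemma~\ref{lemma:concentration-gu-gm-Delta*}. Then $B/K_0\le1/2$, and
\bals
\cL_n(h)\le2\hat c_{\Delta}\pth{r^*+\frac{x}{N_{u,m}}}.
\eals
The constant $\hat c_{\Delta}=\max\set{c_1,c_2}$ depends on $K_0$ and $L_0$ only, through $c_1=8K_0d_0^2$ and $c_2=8K_0c_0^2+32L_0+c_0$. With this choice of $K_0$, setting $c_{\Delta}\defeq2\hat c_{\Delta}$ gives a constant depending only on $B$ and $L_0$.

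We also need the hypothesis of Lemma~\ref{lemma:concentration-gu-gm-Delta*}, namely condition (\ref{eq:STLC-cond-um-delta-star-ell-f-psi-star}) for $\psi^*_{u,m}$. This condition does not involve $K_0$, so it holds for this choice. Both bounds in (\ref{eq:STLC-bound-delta-star-ell-f}) then hold on the same event, of probability at least $1-2\exp(-x)$, which completes the plan.
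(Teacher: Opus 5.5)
Your proposal is correct and follows essentially the same route as the paper. The paper also invokes Lemma~\ref{lemma:concentration-gu-gm-Delta*} with $K_1=2\max\set{1,B}$ (your $\max\set{2B,2}$), uses $\cL_m(\ell_{\hat f_m}-\ell_{f_n^*})\le0$ and $\cL_u(\ell_{\hat f_u}-\ell_{f_n^*})\le0$, absorbs $B\cL_n(h)/K_1$ into the left-hand side, and sets $c_{\Delta}=\hat c_{\Delta}/(1-B/K_1)$, which is at most your $2\hat c_{\Delta}$.
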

\begin{proof}
\noindent\textit{Proof outline.}  Choose the localization coefficient so that
$B/K_1<1$, apply Lemma~\ref{lemma:concentration-gu-gm-Delta*} to the two
empirical minimizers, and absorb the resulting multiple of each full-sample
excess loss.
Set $K_1\defeq 2\max\set{1,B}$. By Lemma~\ref{lemma:concentration-gu-gm-Delta*}, with probability at least $1-2\exp(-x)$, inequalities~(\ref{eq:concentration-gm-Delta*-fixedpoint}) and~(\ref{eq:concentration-gu--Delta*-fixedpoint}) hold simultaneously with $K_0=K_1$. Let $\hat c_{\Delta}$ denote the corresponding constant and set $c_{\Delta}\defeq\hat c_{\Delta}/\pth{1-B/K_1}$.

For $h_u\defeq\ell_{\hat f_u}-\ell_{f_n^*}$, the optimality of $\hat f_u$ gives $\cL_u(h_u)\le0$. Substitution into (\ref{eq:concentration-gu--Delta*-fixedpoint}) therefore gives
$\pth{1-B/K_1}\cL_n(h_u)\le
\hat c_{\Delta}\pth{r^*+x/N_{u,m}}$.
Similarly, for $h_m\defeq\ell_{\hat f_m}-\ell_{f_n^*}$, the optimality of $\hat f_m$ gives $\cL_m(h_m)\le0$, and (\ref{eq:concentration-gm-Delta*-fixedpoint}) yields the same bound with $h_u$ replaced by $h_m$. These are the two inequalities in (\ref{eq:STLC-bound-delta-star-ell-f}). Since $K_1$ depends only on $B$, the constant $c_{\Delta}$ depends only on $B$ and $L_0$.
\end{proof}

\begin{proof}
[\textbf{\textup{Proof of Theorem~\ref{theorem:STLC-delta-ell-f-excess-risk-upper-bound}}}]
\noindent\textit{Proof outline.}  Intersect the uniform STLC event with the
two full-sample excess-loss events, bound the surrogate of
$\ell_{\hat f_m}-\ell_{\hat f_u}$ using its displayed representation, and use
training optimality to eliminate its empirical-risk difference.
Let $h_{m,u}\defeq\ell_{\hat f_m}-\ell_{\hat f_u}\in\Delta_{\cF}$.
The event in Theorem~\ref{theorem:TLC} holds uniformly over
$\Delta_{\cF}$ with probability at least $1-\exp(-x)$, whereas the event
in Theorem~\ref{theorem:STLC-delta-star-ell-f} has probability at least
$1-2\exp(-x)$.  Their intersection therefore has probability at least
$1-3\exp(-x)$.  On this intersection, (\ref{eq:tTn-def}) and the
particular representation
$h_{m,u}=\ell_{\hat f_m}-\ell_{\hat f_u}$ give
$\tT_n(h_{m,u})\le2B\left\{
\cL_n(\ell_{\hat f_m}-\ell_{f_n^*})+
\cL_n(\ell_{\hat f_u}-\ell_{f_n^*})\right\}$.
Applying (\ref{eq:STLC-bound-g-upper-bound}) with $h=h_{m,u}$ and $\cH=\Delta_{\cF}$ therefore yields
\bals
\cE(\hat f_m)=\cL_u(h_{m,u})&\le \cL_m(h_{m,u})
+ \frac {2B}{K_0}\cL_n(\ell_{\hat f_m}-\ell_{f_n^*})\\
&\quad+\frac {2B}{K_0}\cL_n(\ell_{\hat f_u}-\ell_{f_n^*})
+c_1 r_{u,m}+\frac{c_2x}{N_{u,m}}.
\eals
The optimality of $\hat f_m$ gives $\cL_m(h_{m,u})\le0$. Substituting both bounds in (\ref{eq:STLC-bound-delta-star-ell-f}) into the preceding display gives
\bals
\cE(\hat f_m)&\le c_1r_{u,m}+\frac{4Bc_{\Delta}}{K_0}\pth{r^*+\frac{x}{N_{u,m}}}+\frac{c_2x}{N_{u,m}}\\
&=c_1r_{u,m}+\frac{4Bc_{\Delta}r^*}{K_0}+\frac{c_3x}{N_{u,m}},
\eals
where $c_3\defeq c_2+4Bc_{\Delta}/K_0$. This proves (\ref{eq:STLC-ell-f-excess-risk-upper-bound}).

\end{proof}

\subsection{Proof of Theorem~\ref{theorem:STLC-delta-ell-f-excess-risk-upper-bound-VC-dim}}

\begin{proof}
[\textbf{Proof of
Theorem~\ref{theorem:STLC-delta-ell-f-excess-risk-upper-bound-VC-dim}}]
\noindent\textit{Proof outline.}  Reduce realizable squared loss to an
indicator class, transfer full-sample localization to with-replacement
empirical $L_2$ localization by a relative VC bound, apply contraction and
entropy integration, solve the resulting sub-root fixed point, and absorb the
full-sample loss after invoking Theorem~\ref{theorem:TLC}.

We apply Theorem~\ref{theorem:TLC} to the class
$\cH\defeq\set{\ell_f\colon f\in\cF}$.
By realizability,
$\ell_f(i)=(f(\bx_i)-y_i)^2=(f(\bx_i)-f^*(\bx_i))^2
=\indict{f(\bx_i)\neq y_i}$ for all $i\in[n]$.  Hence $0\in\cH$,
every $h\in\cH$ is indicator-valued, $h^2=h$, and
$T_n(h)=\cL_n(h)$.  We may therefore apply Theorem~\ref{theorem:TLC}
with $H_0=1$ and $\tT_n=T_n$. Moreover, $\cH^2(r)=\cH(r)$.  It follows
from Theorem~\ref{theorem:TC-RC} that if $\psi_{u,m}$ is a sub-root
function satisfying
\bal\label{eq:psi-u-m-goal}
\psi_{u,m}(r) \ge 2\max\set{
\Expect{}{\sup_{h \colon h \in \cH,\cL_n(h) \le r}
R^{(\textup{ind})}_{\bsigma,\bY^{(u)}}h},
\Expect{}{\sup_{h \colon h \in \cH,\cL_n(h) \le r}
R^{(\textup{ind})}_{\bsigma,\bY^{(m)}}h}},
\eal
then $\psi_{u,m}$ meets condition~(\ref{eq:STLC-cond-psi-general}) of
Theorem~\ref{theorem:TLC}.  We now derive upper bounds for
the two expectations on the right-hand side of
(\ref{eq:psi-u-m-goal}).

Let $\cF-f^*$ denote the evaluation-difference class
$\set{g_f\colon f\in\cF}$ on $[n]$, where
$g_f(i)\defeq f(\bx_i)-f^*(\bx_i)$.  Because the VC-dimension of $\cF$
is $\dVC$, the binary-valued class
$\set{g^2\colon g\in\cF-f^*}=\set{|g|\colon g\in\cF-f^*}=\cH$
has VC-dimension at most $\dVC$: on every finite set, its label vectors
are obtained from those of $\cF$ by taking symmetric difference with the
fixed label vector of $f^*$.  Proposition~\ref{proposition:concentration-vc-class-ind},
applied with $K_0=2$ to the uniform distribution on $[n]$, and then
weakening the resulting coefficient $3/2$ of $T_n(g)$ to $2$, therefore
implies that for an absolute constant $c>0$ and every $x'>0$, with
probability at least $1-\exp(-x')$ over $\bY^{(u)}$,
\bal\label{eq:TLC-delta-ell-f-excess-risk-upper-bound-VC-dim-seg1}
\frac1u\sum_{i=1}^u g^2(Y_i)-2T_n(g)
\le c\pth{\frac{\dVC\log(ue/\dVC)}u+\frac{x'}u},
\qquad g\in\cF-f^*.
\eal
Define $\cA\defeq\set{\bY^{(u)}\colon
\textup{(\ref{eq:TLC-delta-ell-f-excess-risk-upper-bound-VC-dim-seg1})
holds}}$.  Then $\Prob{\cA^c}\le\exp(-x')$.  Put
$C(u,\dVC,x')\defeq
c\pth{\dVC\log(ue/\dVC)/u+x'/u}$ and
$b_u(r)\defeq\min\set{2\sqrt{2r+C(u,\dVC,x')},2}$.

We next derive the upper bound for the first expectation in
(\ref{eq:psi-u-m-goal}).  Let $P_u(\bY^{(u)})$ denote the empirical
probability measure of $\bY^{(u)}$, so that
$\norm{g}{L^2(P_u(\bY^{(u)}))}^2=u^{-1}\sum_{i=1}^u g^2(Y_i)$.
Let $C'_1,C'_2>0$ be sufficiently large absolute constants.  We prove below
that they may be chosen so that
\bal\label{eq:TLC-delta-ell-f-excess-risk-upper-bound-VC-dim-u}
&\Expect{}{\sup_{h\colon h\in\cH,\ \cL_n(h)\le r}
R^{(\textup{ind})}_{\bsigma,\bY^{(u)}}h}
=\Expect{\bY^{(u)},\bsigma}{\sup_{g\in\cF-f^*\colon T_n(g)\le r}
R^{(\textup{ind})}_{\bsigma,\bY^{(u)}}g^2}\nonumber\\
&\stackrel{\circled{1}}{\le}
2\Expect{\bY^{(u)},\bsigma}{\sup_{g\in\cF-f^*\colon T_n(g)\le r}
R^{(\textup{ind})}_{\bsigma,\bY^{(u)}}g}\nonumber\\
&\stackrel{\circled{2}}{=}
2\Expect{\bY^{(u)},\bsigma}{\indict{\bY^{(u)}\in\cA}
\sup_{g\in\cF-f^*\colon T_n(g)\le r}
R^{(\textup{ind})}_{\bsigma,\bY^{(u)}}g}\nonumber\\
&\phantom{=}+2\Expect{\bY^{(u)},\bsigma}{\indict{\bY^{(u)}\in\cA^c}
\sup_{g\in\cF-f^*\colon T_n(g)\le r}
R^{(\textup{ind})}_{\bsigma,\bY^{(u)}}g}\nonumber\\
&\stackrel{\circled{3}}{\le}
2\Expect{\bY^{(u)},\bsigma}{\indict{\bY^{(u)}\in\cA}
\sup_{\substack{g\in\cF-f^*\colon\\
\norm{g}{L^2(P_u(\bY^{(u)}))}^2\le2r+C(u,\dVC,x')}}
R^{(\textup{ind})}_{\bsigma,\bY^{(u)}}g}\nonumber\\
&\phantom{=}+2\Expect{\bY^{(u)},\bsigma}{\indict{\bY^{(u)}\in\cA^c}
\sup_{g\in\cF-f^*\colon T_n(g)\le r}
R^{(\textup{ind})}_{\bsigma,\bY^{(u)}}g}\nonumber\\
&\stackrel{\circled{4}}{\le}\frac{2C_0}{\sqrt u}
\Expect{\bY^{(u)}}{\indict{\bY^{(u)}\in\cA}
\int_0^{b_u(r)}\sqrt{\log N\pth{\cF-f^*,
\norm{\cdot}{L^2(P_u(\bY^{(u)}))},\eps}}\diff\eps}\nonumber\\
&\phantom{=}+\frac{2C_0\exp(-x')}{\sqrt u}
\sup_{\by\in[n]^u}\int_0^2\sqrt{\log N\pth{\cF-f^*,
\norm{\cdot}{L^2(P_u(\by))},\eps}}\diff\eps\nonumber\\
&\stackrel{\circled{5}}{\le}
C'_1\sqrt{\frac{\dVC\log(ue/\dVC)}u}\sqrt r
+C'_2\frac{\dVC\log(ue/\dVC)}u.
\eal
Here $C_0$ is the absolute constant in
Theorem~\ref{theorem:dudley-integral-entropy-bound}.
$\circled{1}$ follows from the contraction property in
Theorem~\ref{theorem:contraction-RC}, because $z\mapsto z^2$ is
$2$-Lipschitz on $[-1,1]$ and vanishes at zero.  Step $\circled{2}$ is
the decomposition over $\cA$ and $\cA^c$.  When
$\bY^{(u)}\in\cA$ and $T_n(g)\le r$, we have
$u^{-1}\sum_{i=1}^u g^2(Y_i)
\le2T_n(g)+C(u,\dVC,x')\le2r+C(u,\dVC,x')$, which proves
$\circled{3}$.

We next justify each term in $\circled{4}$.  Put
$s_u(r)\defeq2r+C(u,\dVC,x')$.  For a deterministic sequence
$\by=(y_1,\ldots,y_u)\in[n]^u$, write
$P_u(\by)\defeq u^{-1}\sum_{i=1}^u\delta_{y_i}$, and define
$\cG_{\by}(r)\defeq\set{g\in\cF-f^*\colon
\norm{g}{L^2(P_u(\by))}^2\le s_u(r)}$ and
$\cG_n(r)\defeq\set{g\in\cF-f^*\colon T_n(g)\le r}$.
Both classes contain the zero function because $f^*\in\cF$.  Also, every
$g\in\cF-f^*$ takes values in $\set{-1,0,1}$.

First consider the good-event term.  For $g_1,g_2\in\cG_{\by}(r)$, the
triangle inequality and the pointwise bound $|g_1-g_2|\le2$ imply
$
\norm{g_1-g_2}{L^2(P_u(\by))}
\le\min\set{2\sqrt{s_u(r)},2}=b_u(r).
$ Hence the empirical $L^2(P_u(\by))$-diameter of $\cG_{\by}(r)$ is at most
$b_u(r)$.
Conditional on $\bY^{(u)}=\by$, Dudley's inequality and the
external-covering convention in
Theorem~\ref{theorem:dudley-integral-entropy-bound} yield
\bal\label{eq:VC-circled-four-good-term}
&2\Expect{\bsigma}{\indict{\by\in\cA}
\sup_{g\in\cG_{\by}(r)}R^{(\textup{ind})}_{\bsigma,\by}g}\nonumber\\*
&\quad\le\frac{2C_0}{\sqrt u}\indict{\by\in\cA}
\int_0^{b_u(r)}\sqrt{\log N\pth{\cF-f^*,
\norm{\cdot}{L^2(P_u(\by))},\eps}}\diff\eps.
\eal
Here the theorem first gives an upper integration limit no larger than
$b_u(r)/2$.  We enlarged it to $b_u(r)$ because the integrand is
nonnegative.  The localized class is a subset of $\cF-f^*$, so its external
covering number at every radius is bounded by that of $\cF-f^*$.  Taking
expectation in $\by=\bY^{(u)}$ in
(\ref{eq:VC-circled-four-good-term}) gives the first term on the right-hand
side of $\circled{4}$.

For the bad-event term, the pointwise range of $\cF-f^*$ gives
$\sup_{g_1,g_2\in\cG_n(r)}
\norm{g_1-g_2}{L^2(P_u(\by))}\le2$ for every $\by\in[n]^u$.
Another conditional application of Dudley's inequality, followed by
enlarging the upper integration limit from at most $1$ to $2$, gives
\bal\label{eq:VC-circled-four-bad-term}
&2\Expect{\bY^{(u)},\bsigma}{\indict{\bY^{(u)}\in\cA^c}
\sup_{g\in\cG_n(r)}R^{(\textup{ind})}_{\bsigma,\bY^{(u)}}g}\nonumber\\
&\quad\le\frac{2C_0}{\sqrt u}
\Expect{\bY^{(u)}}{\indict{\bY^{(u)}\in\cA^c}
\int_0^2\sqrt{\log N\pth{\cF-f^*,
\norm{\cdot}{L^2(P_u(\bY^{(u)}))},\eps}}\diff\eps}\nonumber\\
&\quad\le\frac{2C_0\exp(-x')}{\sqrt u}
\sup_{\by\in[n]^u}\int_0^2\sqrt{\log N\pth{\cF-f^*,
\norm{\cdot}{L^2(P_u(\by))},\eps}}\diff\eps.
\eal
The last inequality uses $\Prob{\cA^c}\le\exp(-x')$ and bounds the
nonnegative entropy integral by its supremum over $[n]^u$.  This is exactly
the second term in $\circled{4}$ by
(\ref{eq:VC-circled-four-bad-term}).  The VC entropy estimate below guarantees
that all covering numbers and entropy integrals used here are finite.

We now justify each term in $\circled{5}$.  For a probability measure $Q$ on
$[n]$ and $0\le b\le2$, define the entropy integral
$\mathsf E_Q(b)\defeq\int_0^b\sqrt{\log N\pth{\cF-f^*,
\norm{\cdot}{L^2(Q)},\eps}}\diff\eps$.
Denote the first and second terms on the right-hand side of $\circled{4}$ by
\bal\label{eq:VC-circled-five-terms}
\mathsf G_u(r) \defeq\frac{2C_0}{\sqrt u}
\Expect{\bY^{(u)}}{\indict{\bY^{(u)}\in\cA}
\mathsf E_{P_u(\bY^{(u)})}(b_u(r))}, \quad
\mathsf B_u \defeq\frac{2C_0\exp(-x')}{\sqrt u}
\sup_{\by\in[n]^u}\mathsf E_{P_u(\by)}(2).
\eal

The translation $f\mapsto f-f^*$ preserves every $L^2(Q)$ distance.  The
uniform VC entropy bound~\cite[Theorem 2.6.7]{van1996weak} therefore supplies
an absolute constant $C\ge2e$ such that, for every probability measure $Q$
and $0<\eps\le2$,
$\log N\pth{\cF-f^*,\norm{\cdot}{L^2(Q)},\eps}
\le C\dVC\log(C/\eps)$.
The cited bound is usually stated for internal covering numbers.  It remains
valid under our external-covering convention because allowing centers in the
ambient space can only decrease a covering number.  Consequently, for
$0<b\le2$,
\bal\label{eq:TLC-delta-ell-f-excess-risk-upper-bound-VC-dim-seg2}
\mathsf E_Q(b)
&\le C\sqrt{\dVC}\int_0^b\sqrt{\log\pth{\frac C\eps}}\diff\eps\nonumber\\
&\stackrel{\circled{6}}{\le}
C\sqrt{\dVC b}\sqrt{\int_0^b\log\pth{\frac C\eps}\diff\eps}
\stackrel{\circled{7}}{=}
Cb\sqrt{\dVC\pth{\log\pth{\frac Cb}+1}}
\le Cb\sqrt{\dVC\log\pth{\frac Cb}}.
\eal
Here $\circled{6}$ is the Cauchy--Schwarz inequality, and
$\circled{7}$ follows from
$\int_0^b\log(C/\eps)\diff\eps=b\log(C/b)+b$.  The final inequality uses
$C\ge2e$ and $b\le2$, which imply $\log(C/b)\ge1$, and enlarges the absolute
constant $C$.  In particular,
\bal\label{eq:TLC-delta-ell-f-excess-risk-upper-bound-VC-dim-seg3}
\sup_Q\mathsf E_Q(2)\le C\sqrt{\dVC},
\eal
where the supremum ranges over all probability measures on $[n]$.

Set $L_u\defeq\log(ue/\dVC)$, $x'\defeq\dVC L_u$, and
$a_u\defeq\dVC L_u/u$.
Writing $q=u/\dVC\ge1$, we have $L_u=1+\log q\le q$, and hence
$\dVC/u\le a_u\le1$.  Moreover, $C(u,\dVC,x')\le Ca_u$ and
$\exp(-x')=\pth{\dVC/(ue)}^{\dVC}\le\dVC/u\le a_u$.

We first bound the bad-event term.  Equations
(\ref{eq:VC-circled-five-terms}) and
(\ref{eq:TLC-delta-ell-f-excess-risk-upper-bound-VC-dim-seg3}) give
\bal\label{eq:VC-circled-five-bad-bound}
\mathsf B_u
\le C\exp(-x')\sqrt{\frac{\dVC}{u}}
\le Ca_u.
\eal
The last inequality also uses $\sqrt{\dVC/u}\le1$.

We next bound the good-event term directly from the upper bound on $b_u(r)$.
Put $t_u(r)\defeq r+a_u$, $s_u(r)\defeq\min\set{\sqrt{t_u(r)},1}$, and
$\bar b_u(r)\defeq\min\set{C_b\sqrt{t_u(r)},2}$ for a sufficiently large
absolute constant $C_b\ge2$.  Since $C(u,\dVC,x')\le Ca_u$, the definition of
$b_u(r)$ gives $b_u(r)\le\bar b_u(r)$ for every $r\ge0$, while
$2s_u(r)\le\bar b_u(r)\le C_bs_u(r)$.

Enlarge the entropy constant $C$ in
(\ref{eq:TLC-delta-ell-f-excess-risk-upper-bound-VC-dim-seg2}), if necessary,
so that $C\ge2e$, and put $H(b)\defeq b\sqrt{\log(C/b)}$ for $0<b\le2$.
Since $H'(b)=\{2\log(C/b)-1\}/\{2\sqrt{\log(C/b)}\}\ge0$, the function $H$
is nondecreasing on $(0,2]$.  Thus
(\ref{eq:TLC-delta-ell-f-excess-risk-upper-bound-VC-dim-seg2}) and
$\indict{\bY^{(u)}\in\cA}\le1$ give
$\mathsf G_u(r)\le C\sqrt{\dVC/u}\,H\pth{b_u(r)}
\le C\sqrt{\dVC/u}\,H\pth{\bar b_u(r)}$.
\par\vfil\pagebreak

Moreover, $t_u(r)\ge a_u$, $a_u\le1$, and hence
$s_u(r)\ge\sqrt{a_u}$.  The identities $a_u=L_u/q$ and
$L_u=1+\log q$ therefore yield
$\log\{C/(2s_u(r))\}\le\log\{C/(2\sqrt{a_u})\}
\le CL_u$.  Using $2s_u(r)\le\bar b_u(r)\le C_bs_u(r)$ and
$s_u(r)\le\sqrt{t_u(r)}$, we have
$H\pth{\bar b_u(r)}\le
C_bs_u(r)\sqrt{\log\{C/(2s_u(r))\}}
\le Cs_u(r)\sqrt{L_u}$.  It follows that
\bal\label{eq:VC-circled-five-good-bound}
\mathsf G_u(r)
\le C\sqrt{\frac{\dVC}{u}}s_u(r)\sqrt{L_u}
\le C\sqrt{a_u(r+a_u)}
\le C\pth{\sqrt{a_ur}+a_u},\qquad r\ge0.
\eal
Together, (\ref{eq:VC-circled-five-bad-bound}) and
(\ref{eq:VC-circled-five-good-bound}) prove $\circled{5}$ after choosing
$C'_1$ and $C'_2$ as sufficiently large absolute constants.

Repeating the argument for
(\ref{eq:TLC-delta-ell-f-excess-risk-upper-bound-VC-dim-u}) with $u$
replaced by $m$ gives
\bal\label{eq:TLC-delta-ell-f-excess-risk-upper-bound-VC-dim-m}
\Expect{}{\sup_{h\in\cH\colon\cL_n(h)\le r}
R^{(\textup{ind})}_{\bsigma,\bY^{(m)}}h}
\le C'_1\sqrt{\frac{\dVC\log(me/\dVC)}m}\sqrt r
+C'_2\frac{\dVC\log(me/\dVC)}m.
\eal
Because $p\mapsto\log(pe/\dVC)/p$ is nonincreasing for
$p\ge\dVC$ and $u\ge m$, equations
(\ref{eq:TLC-delta-ell-f-excess-risk-upper-bound-VC-dim-u}),
(\ref{eq:TLC-delta-ell-f-excess-risk-upper-bound-VC-dim-m}), and
(\ref{eq:psi-u-m-goal}) show that we may take
$\psi_{u,m}(r)=2C'_1\sqrt{\dVC\log(me/\dVC)/m}\sqrt r
+2C'_2\dVC\log(me/\dVC)/m$,
after enlarging the absolute constants if necessary.  This function is
sub-root.  For every $0\le r\le r_{u,m}$, we have
$r\le\psi_{u,m}(r)$. Solving this quadratic inequality in $\sqrt r$
gives
\bal\label{eq:TLC-delta-ell-f-excess-risk-upper-bound-VC-dim-seg3-post}
r_{u,m}\le C\frac{\dVC\log(me/\dVC)}m.
\eal

Theorem~\ref{theorem:TLC}, applied with $K_0=2$, $H_0=1$, and
$N_{u,m}=m$, together with
(\ref{eq:TLC-delta-ell-f-excess-risk-upper-bound-VC-dim-seg3-post}),
now implies that, for every $x>0$, with probability at least
$1-\exp(-x)$, every $h\in\cH$ satisfies
\bal\label{eq:TLC-delta-ell-f-excess-risk-upper-bound-VC-dim-seg4}
\cL_u(h)\le\cL_m(h)+\frac12\cL_n(h)
+C\frac{\dVC\log(me/\dVC)}m+C\frac xm.
\eal
Set $h=\ell_{\hat f_m}$.  Realizability and empirical risk minimization
give $0\le\cL_m(\ell_{\hat f_m})\le\cL_m(\ell_{f^*})=0$, and hence
$\cL_m(h)=0$.  Moreover,
$\cL_n(h)=(u/n)\cL_u(h)\le\cL_u(h)$.  Substituting these identities into
(\ref{eq:TLC-delta-ell-f-excess-risk-upper-bound-VC-dim-seg4}) and
absorbing $\cL_u(h)/2$ into the left-hand side proves
(\ref{eq:STLC-delta-ell-f-excess-risk-upper-bound-VC-dim}).
\end{proof}

\subsection{Proof of Theorem~\ref{theorem:STLC-kernel}: STLC-Based Excess Risk Bound for Transductive Kernel Learning}
\label{sec:TKL-proof}

Put $\bK_n\defeq\bK/n$ and $q_0\defeq\rank(\bK_n)$.  Thus exactly
$q_0$ of the ordered eigenvalues
$\hat\lambda_1\ge\cdots\ge\hat\lambda_n\ge0$ are positive.

Define the empirical covariance operator $\hat T_n\colon\cH_K\to\cH_K$ by
$\hat T_ng\defeq n^{-1}\sum_{i=1}^nK(\cdot,\bx_i)g(\bx_i)$ for
$g\in\cH_K$.
The nonzero eigenvalues of $\hat T_n$ are
$\set{\hat \lambda_q}_{q=1}^{q_0}$.  Choose an orthonormal family of
eigenvectors $\bU^{(1)},\ldots,\bU^{(q_0)}$ of $\bK_n$, where
$\bU^{(q)}$ corresponds to $\hat\lambda_q$, and define
$\Phi_q\defeq(n\hat\lambda_q)^{-1/2}
\sum_{j=1}^nK(\cdot,\bx_j)\bth{\bU^{(q)}}_j$.
Then $\set{\Phi_q}_{q=1}^{q_0}$ is an orthonormal basis of $\cH_{\bX_n}$ and $\hat T_n\Phi_q=\hat\lambda_q\Phi_q$ for every $q\in[q_0]$.

The following spectral estimate is the main input to the kernel proof.
\begin{lemma}\label{lemma:STLC-kernel-ind}
Let $\cF=\cH_{\bX_n}(\mu)$.  For $r\ge0$, define
\bal
{\tilde\varphi_u}(r)&\defeq
\min_{Q\in\set{0,\ldots,n}}\pth{\sqrt{\frac{rQ}{u}}+\mu
\sqrt{\frac{\sum_{q=Q+1}^{n}\hat\lambda_q}{u}}},
\label{eq:varphi-STLC-kernel-u-ind}\\
{\tilde\varphi_m}(r)&\defeq
\min_{Q\in\set{0,\ldots,n}}\pth{\sqrt{\frac{rQ}{m}}+\mu
\sqrt{\frac{\sum_{q=Q+1}^{n}\hat\lambda_q}{m}}}.
\label{eq:varphi-STLC-kernel-m-ind}
\eal
Then, for every $r\ge0$,
\bal
\Expect{\bY^{(u)},\bsigma}{\sup_{f\in \cF \colon T_n(f)\le r}
R^{(\textup{ind})}_{\bsigma,\bY^{(u)}}f}&\le\tilde\varphi_u(r),
\label{eq:STLC-kernel-u-ind}\\
\Expect{\bY^{(m)},\bsigma}{\sup_{f\in \cF \colon T_n(f)\le r}
R^{(\textup{ind})}_{\bsigma,\bY^{(m)}}f}&\le\tilde\varphi_m(r).
\label{eq:STLC-kernel-m-ind}
\eal
\end{lemma}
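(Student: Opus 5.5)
We prove the bound for $p=u$; the case $p=m$ is identical with $u$ replaced by $m$. Fix $Q\in\set{0,\ldots,n}$ and $f\in\cF$ with $T_n(f)\le r$. Since $f\in\cH_{\bX_n}$, we expand it as $f=\sum_{q=1}^{q_0}\theta_q\Phi_q$ with $\sum_q\theta_q^2=\norm{f}{\cH_K}^2\le\mu^2$. The key identity is
\[
T_n(f)=\frac1n\sum_{i=1}^nf^2(\bx_i)=\iprod{f}{\hat T_nf}_{\cH_K}=\sum_{q=1}^{q_0}\hat\lambda_q\theta_q^2\le r .
\]
By the reproducing property, $f(\bx_i)=\iprod{f}{K(\cdot,\bx_i)}_{\cH_K}$. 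Hence
\[
R^{(\textup{ind})}_{\bsigma,\bY^{(u)}}f=\iprod{f}{V}_{\cH_K},\qquad V\defeq u^{-1}\sum_{i=1}^u\sigma_iK(\cdot,\bx_{Y_i}).
\]
Only the projection of $V$ onto $\cH_{\bX_n}$ matters. We write $V_q\defeq\iprod{V}{\Phi_q}_{\cH_K}$ and split the sum over $q$ into the head $q\le\min\set{Q,q_0}$ and the tail $q>Q$.

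For the head, Cauchy--Schwarz with weights $\hat\lambda_q$ gives
\[
\sum_{q\le Q}\theta_qV_q\le\pth{\sum_{q\le Q}\hat\lambda_q\theta_q^2}^{1/2}\pth{\sum_{q\le Q}\hat\lambda_q^{-1}V_q^2}^{1/2}\le\sqrt r\pth{\sum_{q\le Q}\hat\lambda_q^{-1}V_q^2}^{1/2}.
\]
For the tail, Cauchy--Schwarz with unit weights and $\sum_q\theta_q^2\le\mu^2$ give
\[
\sum_{q>Q}\theta_qV_q\le\mu\pth{\sum_{q>Q}V_q^2}^{1/2}.
\]
Both right-hand sides are free of $f$, so the supremum over the localized class is at most their sum. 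We then take expectations and apply Jensen's inequality to move $\E$ inside each square root.

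The second-moment computation is the following. The summands $\sigma_iK(\cdot,\bx_{Y_i})$ are independent and centered, so
\[
\E V_q^2=u^{-2}\sum_{i=1}^u\E\,\Phi_q^2(\bx_{Y_i})=u^{-1}\frac1n\sum_{j=1}^n\Phi_q^2(\bx_j)=u^{-1}\iprod{\Phi_q}{\hat T_n\Phi_q}_{\cH_K}=\hat\lambda_q/u.
\]
Therefore the head contributes $\sqrt{r\,\min\set{Q,q_0}/u}\le\sqrt{rQ/u}$, and the tail contributes $\mu\sqrt{\sum_{q>Q}\hat\lambda_q/u}$, where eigenvalues with index beyond $q_0$ are zero. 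Minimizing over $Q$ gives $\tilde\varphi_u(r)$.

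The step needing the most care is the identification $T_n(f)=\sum_q\hat\lambda_q\theta_q^2$ and $\E V_q^2=\hat\lambda_q/u$. For this we verify from the definition of $\Phi_q$ and orthonormality of the $\bU^{(q)}$ that the $\Phi_q$ are orthonormal in $\cH_K$ and are eigenvectors of $\hat T_n$. We also note that when $\bK$ is singular, any component of $V$ orthogonal to $\cH_{\bX_n}$ pairs to zero with $f$, so it can be discarded.
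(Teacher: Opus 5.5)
Your proposal is correct and follows the paper's proof essentially step for step. Both arguments expand $f$ in the eigenbasis $\set{\Phi_q}$ of $\hat T_n$. Both control the head by a weighted Cauchy--Schwarz bound against $T_n(f)=\sum_q\hat\lambda_q\theta_q^2\le r$ and the tail by $\norm{f}{\cH_K}\le\mu$. Both finish with Jensen's inequality and the second-moment identity $\E V_q^2=\hat\lambda_q/u$.

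The differences are only cosmetic. You work in coordinates, and you handle $Q>q_0$ by truncating the head at $\min\set{Q,q_0}$; the paper instead minimizes over $Q\le q_0$ and notes that the objective is monotone beyond $q_0$. Your remark about discarding the component of $V$ orthogonal to $\cH_{\bX_n}$ is vacuous, since $V$ already lies in $\cH_{\bX_n}$.
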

\begin{proof}
\noindent\textit{Proof outline.}  Expand each function in the empirical
covariance eigenbasis, split the Rademacher sum into its first $Q$ and remaining
coordinates, and control the two parts by the empirical $L_2$ radius and RKHS
radius, respectively.
If $q_0=0$, then $\cH_{\bX_n}=\set{0}$ and all the empirical eigenvalues
vanish, so both conclusions are immediate.  We henceforth assume $q_0\ge1$.

We have
\bal\label{eq:lemma-STLC-empirical-NN-seg1}
R^{(\textup{ind})}_{\bsigma,\bY^{(u)}}f = \frac 1u \sum\limits_{i=1}^u {\sigma_i}{f(\bx_{Y_i})} =
\iprod{f}
{\frac 1u \sum\limits_{i=1}^u {\sigma_i}{K(\cdot,\bx_{Y_i})}}_{\cH_K}.
\eal
Because $\set{\Phi_q}_{q=1}^{q_0}$ is an orthonormal basis of
$\cH_{\bX_n}$, for every $Q\in\set{0,\ldots,q_0}$ we further express the
right-hand side of (\ref{eq:lemma-STLC-empirical-NN-seg1}) as
\bal\label{eq:lemma-STLC-empirical-NN-seg2}
\iprod{f}{\frac 1u \sum\limits_{i=1}^u {\sigma_i}{K(\cdot,\bx_{Y_i})}}_{\cH_K}
&=\iprod{\sum\limits_{q=1}^{Q} \sqrt{\hat \lambda_q}  \iprod{f}
{\Phi_q}_{\cH_K}\Phi_q }
{v^{(Q)}(\bY^{(u)},\bsigma)}_{\cH_K} \nonumber \\
&+\iprod{\bar f}
{{\bar v}^{(Q)}(\bY^{(u)},\bsigma)}_{\cH_K},
\eal
where
\bals
\bar f &\defeq f - \sum\limits_{q=1}^{Q}   \iprod{f}
{\Phi_q}_{\cH_K}\Phi_q, \\
v^{(Q)}(\bY^{(u)},\bsigma) &\defeq \frac 1u\sum\limits_{q=1}^{Q} \frac{1}{\sqrt{\hat \lambda_q}}\iprod{\sum\limits_{i=1}^u {\sigma_i}{K(\cdot,\bx_{Y_i})}}
{\Phi_q}_{\cH_K}\Phi_q, \\
{\bar v}^{(Q)}(\bY^{(u)},\bsigma) &\defeq
\frac 1u \sum\limits_{q = Q+1}^{q_0} \iprod{\sum\limits_{i=1}^u {\sigma_i}{K(\cdot,\bx_{Y_i})}}{\Phi_q}_{\cH_K}\Phi_q.
\eals
For the empirical covariance operator $\hat T_n$ defined above, the
reproducing property gives the quadratic-form identity
$\iprod{\hat T_n f}{f}_{\cH_K}
=\iprod{n^{-1}\sum_{i=1}^nK(\cdot,\bx_i)f(\bx_i)}{f}_{\cH_K}
=T_n(f)$.
As a result,
\bal\label{eq:lemma-STLC-empirical-NN-seg3}
\norm{\sum\limits_{q=1}^Q \sqrt{\hat \lambda_q}  \iprod{f}
{\Phi_q}_{\cH_K}\Phi_q }{\cH_{K}}^2
&=\sum\limits_{q=1}^Q \hat \lambda_q\iprod{f}{\Phi_q}_{\cH_K}^2\nonumber\\
&\le\sum\limits_{q=1}^{q_0}\hat \lambda_q\iprod{f}{\Phi_q}_{\cH_K}^2
=\iprod{\hat T_n f}{f}_{\cH_K}\nonumber\\
&=T_n(f)\le r,
\eal
which holds for all $f$ such that $T_n(f)\le r$.

Combining (\ref{eq:lemma-STLC-empirical-NN-seg1}), (\ref{eq:lemma-STLC-empirical-NN-seg2}), and (\ref{eq:lemma-STLC-empirical-NN-seg3}), we have
\bal\label{eq:lemma-STLC-empirical-NN-seg4}
&\Expect{\bY^{(u)},\bsigma}{\sup_{f\in \cF \colon T_n(f)\le r} R^{(\textup{ind})}_{\bsigma,\bY^{(u)}}f}\nonumber \\
&\stackrel{\circled{1}}{\le} \sup_{f\in \cF \colon T_n(f)\le r}
\norm{\sum\limits_{q=1}^Q \sqrt{\hat \lambda_q}  \iprod{f}
{\Phi_q}_{\cH_K}\Phi_q}{\cH_K}
\cdot
\Expect{\bY^{(u)},\bsigma}{\norm{v^{(Q)}(\bY^{(u)},\bsigma)}{\cH_K}} \nonumber \\
&\phantom{=}+\sup_{f\in \cF \colon T_n(f)\le r}\norm{\bar f}{\cH_K} \cdot
\Expect{\bY^{(u)},\bsigma}{\norm{{\bar v}^{(Q)}(\bY^{(u)},\bsigma)}{\cH_K}}
\nonumber \\
&\le \sqrt{r} \Expect{\bY^{(u)},\bsigma}{\norm{v^{(Q)}(\bY^{(u)},\bsigma)}{\cH_K}}
+ \mu \Expect{\bY^{(u)},\bsigma}{\norm{{\bar v}^{(Q)}(\bY^{(u)},\bsigma)}{\cH_K}}.
\eal
Here $\circled{1}$ follows from the Cauchy--Schwarz inequality. The last
line also uses that orthogonal projection is contractive, so
$\norm{\bar f}{\cH_K}\le\norm{f}{\cH_K}\le\mu$.

We have
\bal\label{eq:lemma-STLC-empirical-NN-seg5}
&\frac 1u \Expect{\bY^{(u)},\bsigma}{\iprod{
\sum\limits_{i=1}^u {\sigma_i}{K(\cdot,\bx_{Y_i})}}{\Phi_q}_{\cH_K}^2} \stackrel{\circled{1}}{=}\frac 1u \Expect{\bY^{(u)}}{\sum\limits_{i=1}^u
\iprod{K(\cdot,\bx_{Y_i})}{\Phi_q}_{\cH_K}^2}  \nonumber \\
&= \frac 1u \Expect{\bY^{(u)}}{\sum\limits_{i=1}^u
\Phi_q(\bx_{Y_i})^2}
=\frac 1n \sum\limits_{i=1}^n \Phi^2_q(\bx_i)  = \iprod{\hat T_n\Phi_q}{\Phi_q}_{\cH_K} = \hat \lambda_q.
\eal
Here $\circled{1}$ follows because the Rademacher variables are independent,
centered, and have unit second moments, so all cross terms vanish.
It follows from (\ref{eq:lemma-STLC-empirical-NN-seg5}) that
\bal\label{eq:lemma-STLC-empirical-NN-seg6}
&\Expect{\bY^{(u)},\bsigma}{\norm{v^{(Q)}(\bY^{(u)},\bsigma)}{\cH_K}}
= \frac 1{\sqrt u} \Expect{\bY^{(u)},\bsigma}{\sqrt{\frac 1u \sum\limits_{q=1}^Q \frac{1}{\hat \lambda_q} \iprod{\sum\limits_{i=1}^u {\sigma_i}{K(\cdot,\bx_{Y_i})}}{\Phi_q}_{\cH_K}^2} } \nonumber \\
&\stackrel{\circled{1}}{\le} \frac 1 {\sqrt u} \sqrt{ \frac 1u
\Expect{\bY^{(u)},\bsigma}{\sum\limits_{q=1}^Q \frac{1}{\hat \lambda_q}\iprod{
\sum\limits_{i=1}^u {\sigma_i}{K(\cdot,\bx_{Y_i})}}{\Phi_q}_{\cH_K}^2}} \stackrel{\circled{2}}{=}\sqrt{\frac Qu},
\eal
where $\circled{1}$ is due to  Jensen's inequality, $\circled{2}$ follows from (\ref{eq:lemma-STLC-empirical-NN-seg5}). Similarly, we have
\bal\label{eq:lemma-STLC-empirical-NN-seg7}
&\Expect{\bY^{(u)},\bsigma}{\norm{{\bar v}^{(Q)}(\bY^{(u)},\bsigma)}{\cH_K}} = \frac 1{\sqrt u}\Expect{\bY^{(u)},\bsigma}{\sqrt{\frac 1u \sum\limits_{q = Q+1}^{q_0} \iprod{
\sum\limits_{i=1}^u {\sigma_i}{K(\cdot,\bx_{Y_i})}}{\Phi_q}_{\cH_K}^2}} \nonumber \\
&\le \frac 1{\sqrt u} \sqrt{ \frac 1u
\Expect{\bY^{(u)},\bsigma}{\sum\limits_{q = Q+1}^{q_0} \iprod{\sum\limits_{i=1}^u
{\sigma_i}{K(\cdot,\bx_{Y_i})}}{\Phi_q}_{\cH_K}^2 }}
=\sqrt{\frac{\sum\limits_{q = Q+1}^{n}\hat \lambda_q}{u}}.
\eal
It follows from (\ref{eq:lemma-STLC-empirical-NN-seg4}), (\ref{eq:lemma-STLC-empirical-NN-seg6}),
 and (\ref{eq:lemma-STLC-empirical-NN-seg7}) that
\bals
\Expect{\bY^{(u)},\bsigma}{\sup_{f\in \cF \colon T_n(f)\le r}\iprod{f}
{\frac 1u \sum\limits_{i=1}^u {\sigma_i}{K(\cdot,\bx_{Y_i})}}_{\cH_K}}
\le \min_{Q\in\set{0,\ldots,q_0}} \pth{\sqrt{\frac {rQ}u} +
\mu
\sqrt{\frac{\sum\limits_{q = Q+1}^{n}\hat \lambda_q}{u}}},
\eals
The objective in the last display is nondecreasing for $Q\ge q_0$ because $\hat\lambda_q=0$ for $q>q_0$. Hence its minimum over $Q\in\set{0,\ldots,q_0}$ equals its minimum over $Q\in\set{0,\ldots,n}$, which proves (\ref{eq:STLC-kernel-u-ind}). Replacing $u$ by $m$ proves (\ref{eq:STLC-kernel-m-ind}).
\end{proof}

We can take $\psi_{u,m}$ in Theorem~\ref{theorem:TLC} as the upper bound for inductive Rademacher complexities using Theorem~\ref{theorem:TC-RC}, leading to the following corollary.
\begin{corollary}\label{corollary:STLC-delta-ell-f-ind-rc}
Under the conditions of Theorem~\ref{theorem:TLC} with $\cH=\Delta_{\cF}$
and the surrogate functional $\tT_n$ defined in (\ref{eq:tTn-def}), let
$\psi_{u,m}$ be a sub-root function with positive fixed point $r_{u,m}$.
Put $C_{L_0}\defeq2\max\set{1,2L_0}$.  If, for every $r\ge r_{u,m}$,
\bal\label{eq:STLC-cond-u-delta-ell-f-ind-rc}
\psi_{u,m}(r) \ge C_{L_0}\max\set{\Expect{}{\sup_{h \colon h \in \Delta_{\cF},\tT_n(h) \le r} R^{(\textup{ind})}_{\bsigma,\bY^{(u)}} h}, \Expect{}{\sup_{h \colon h \in \Delta_{\cF},\tT_n(h) \le r} R^{(\textup{ind})}_{\bsigma,\bY^{(m)}} h}},
\eal
Then, for every $x>0$, with probability at least $1-\exp(-x)$ over $Z$,
$\cL_h(Z)\le\cL_h(\barZ)+\tT_n(h)/K_0+c_1r_{u,m}
+c_2x/N_{u,m}$ for every $h\in\Delta_{\cF}$.
\end{corollary}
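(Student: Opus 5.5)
The plan is to show that condition (\ref{eq:STLC-cond-u-delta-ell-f-ind-rc}) implies condition (\ref{eq:STLC-cond-psi-general}) of Theorem~\ref{theorem:TLC} for $\cH=\Delta_{\cF}$ and the surrogate $\tT_n$ from (\ref{eq:tTn-def}). Once that is done, the displayed bound follows directly from Theorem~\ref{theorem:TLC}. Lemma~\ref{lemma:STLC-delta-ell-f} already provides the envelope $H_0=L_0$ and the majorization $T_n\le\tT_n$, so only the complexity comparison remains.

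\textbf{Step 1: the localized class itself.} Fix $r\ge r_{u,m}$, $p\in\set{u,m}$ and $\eta\in\set{+,-}$. The localized class $\cH(r)=\set{h\in\Delta_{\cF}\colon\tT_n(h)\le r}\cup\set{0}$ differs from the class in (\ref{eq:STLC-cond-u-delta-ell-f-ind-rc}) only by the zero function.

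\begin{itemize}
\item Theorem~\ref{theorem:TC-RC} gives $\cfrakR_p^\eta(\cH(r))\le2\cfrakR_p^{(\textup{ind})}(\cH(r))$.
\item Adding $0$ changes nothing when the localized set is nonempty. In that case the localized inductive supremum is already nonnegative in expectation, because taking a fixed element $h_0$ gives $\E[R^{(\textup{ind})}h_0]=0\le\E\sup$. If the localized set is empty, the complexity of $\set{0}$ is zero.
\item Hence $\cfrakR_p^\eta(\cH(r))$ is at most $2$ times the expectation appearing in (\ref{eq:STLC-cond-u-delta-ell-f-ind-rc}), which is at most $\psi_{u,m}(r)$, since $C_{L_0}\ge2$.
\end{itemize}

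\textbf{Step 2: the squared class.} For $\cH^2(r)$, apply Theorem~\ref{theorem:TC-RC} again to get $\cfrakR_p^\eta(\cH^2(r))\le2\cfrakR_p^{(\textup{ind})}(\cH^2(r))$. Then use the contraction property, Theorem~\ref{theorem:contraction-RC}, with $\phi_i(a)=\min\set{a^2,L_0^2}$, a clipped square. This map vanishes at $0$ and is $2L_0$-Lipschitz on all of $\RR$. It agrees with $a\mapsto a^2$ on the range $[-L_0,L_0]$, because $|h(i)|\le L_0$ for $h\in\Delta_{\cF}$. The contraction property therefore gives
$\cfrakR_p^{(\textup{ind})}(\cH^2(r))\le2L_0\,\cfrakR_p^{(\textup{ind})}(\cH(r))$.
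Combining these, $\cfrakR_p^\eta(\cH^2(r))\le4L_0\,\cfrakR_p^{(\textup{ind})}(\cH(r))\le C_{L_0}\,\cfrakR_p^{(\textup{ind})}(\cH(r))\le\psi_{u,m}(r)$, where the last comparison uses Step 1. The constant $C_{L_0}=2\max\set{1,2L_0}$ is exactly the maximum of the factor $2$ from Step 1 and the factor $4L_0$ from this step.

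\textbf{Step 3: conclusion.} Taking the maximum over $p\in\set{u,m}$ and $\eta\in\set{+,-}$ verifies (\ref{eq:STLC-cond-psi-general}) for all $r\ge r_{u,m}$. Theorem~\ref{theorem:TLC} then gives the displayed inequality with probability at least $1-\exp(-x)$.

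The only delicate point is the bookkeeping with the added zero function and the Lipschitz extension of the square map; I expect no substantive obstacle there.
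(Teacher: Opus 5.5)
Your proposal is correct and follows the paper's proof. Both apply Theorem~\ref{theorem:TC-RC} to the localized class and its square, use contraction for the square class, and then invoke Theorem~\ref{theorem:TLC}. Your clipped square $a\mapsto\min\set{a^2,L_0^2}$ is a cleaner way to meet the global Lipschitz hypothesis of Theorem~\ref{theorem:contraction-RC} than the paper's direct appeal to $a\mapsto a^2$.

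One justification in Step~1 needs repair. Let $S$ be the supremum of $R^{(\textup{ind})}_{\bsigma,\bY^{(p)}}h$ over $\set{h\in\Delta_{\cF}\colon\tT_n(h)\le r}$. Knowing $\E[S]\ge0$ does not give $\E[\max\set{S,0}]=\E[S]$; that equality requires $S\ge0$ almost surely.

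The claim itself is nevertheless true, for a different reason. We have $0=\ell_{f_n^*}-\ell_{f_n^*}\in\Delta_{\cF}$, and the representation $(f_n^*,f_n^*)$ gives $\tT_n(0)=0$. So the localized set already contains the zero function for every $r$, and adding $\set{0}$ changes nothing. This also makes your empty-set case unnecessary.
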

\begin{proof}
\noindent\textit{Proof outline.}  Dominate transductive complexities by
with-replacement Rademacher complexities and use contraction for the square
class, then invoke Theorem~\ref{theorem:TLC}.
Put $\cA_r\defeq\set{h\in\Delta_{\cF}\colon\tT_n(h)\le r}\cup\set{0}$.
Define $\cA_r^2\defeq\set{h^2\colon h\in\cA_r}$.
Theorem~\ref{theorem:TC-RC} gives
$\cfrakR_p^\eta(\cA_r)\le2\cfrakR_p^{(\textup{ind})}(\cA_r)$.
Because $|h(i)|\le L_0$ and $0\in\cA_r$, the contraction inequality,
applied to $a\mapsto a^2$, gives
$\cfrakR_p^\eta(\cA_r^2)
\le2\cfrakR_p^{(\textup{ind})}(\cA_r^2)
\le4L_0\cfrakR_p^{(\textup{ind})}(\cA_r)$ for
$p\in\set{u,m}$ and $\eta\in\set{+,-}$.
Thus (\ref{eq:STLC-cond-u-delta-ell-f-ind-rc}) implies
(\ref{eq:STLC-cond-psi-general}), and Theorem~\ref{theorem:TLC} proves the
claim.
\end{proof}

\begin{proof}
[\textbf{\textup{Proof of Theorem~\ref{theorem:STLC-kernel}}}]
\noindent\textit{Proof outline.}  Verify existence of the empirical
minimizers, build a sub-root majorant for localized loss differences by
contraction and Lemma~\ref{lemma:STLC-kernel-ind}, construct the analogous
excess-loss majorant, compare their fixed points, and substitute both bounds
into the generic STLC theorem.

The class $\cH_{\bX_n}(\mu)$ is a closed bounded subset of the finite-dimensional Hilbert space $\cH_{\bX_n}$ and hence is compact. Since the loss is Lipschitz and point evaluations are continuous in an RKHS, the training and test empirical risks are continuous on this class. Thus the minimizers $\hat f_m$ and $\hat f_u$ exist for every split.
If $L=0$ or $\cH_{\bX_n}(\mu)=\set{0}$, all predictors in the class have the same pointwise loss or the class is a singleton, respectively. In either case $\cE(\hat f_m)=0$ and (\ref{eq:STLC-kernel-excess-loss}) is immediate. Hence, in the remainder of the proof, assume $L>0$ and $\cH_{\bX_n}(\mu)\neq\set{0}$. Then $\mu>0$, $q_0\ge1$, and the sub-root functions constructed below are nontrivial, so their positive fixed points exist.
Recall that $C_{L_0}=2\max\set{1,2L_0}$, set
$B_K\defeq\max\set{1,L^2B'}$ and $\cF=\cH_{\bX_n}(\mu)$, and let
$\tT_n$ be the functional in (\ref{eq:tTn-def}).  As shown after
Assumption~\ref{assumption:Lipschitz-loss-Tn-f-Ln-ellf},
Assumption~\ref{assumption:main}(2) holds with $B=B_K$.  We use this value
both in $\tT_n$ and when applying the generic excess-risk theorem.  We first
construct its two required sub-root majorants.
Let $h\in\Delta_{\cF}$ satisfy $\tT_n(h)\le r$, where $r>0$.  By the
definition of the infimum in (\ref{eq:tTn-def}), there exist
$f_1,f_2\in\cF$ such that $h=\ell_{f_1}-\ell_{f_2}$ and the objective in
(\ref{eq:tTn-def}) is smaller than $\tT_n(h)+r\le2r$.  Equivalently,
\bal\label{eq:STLC-kernel-surrogate-localization}
\cL_n(\ell_{f_1}-\ell_{f_n^*})
+\cL_n(\ell_{f_2}-\ell_{f_n^*})<\frac r{B_K}.
\eal
Set $B_1\defeq B'/B_K$.
For $r>0$ we have
\bals
&2\Expect{}{\sup_{h \colon h \in \Delta_{\cF},\tT_n(h) \le r} R^{(\textup{ind})}_{\bsigma,\bY^{(u)}}h } \\
&\stackrel{\circled{1}}{\le} 2\Expect{\bY^{(u)},\bsigma}{\sup_{\substack{f_1,f_2 \in \cF \\ \cL_n(\ell_{f_1} - \ell_{f_n^*})+\cL_n(\ell_{f_2} - \ell_{f_n^*})\le r/B_K}} R^{(\textup{ind})}_{\bsigma,\bY^{(u)}}(\ell_{f_1}-\ell_{f_2})} \\
&\le 2\Expect{\bY^{(u)},\bsigma}{\sup_{\substack{f_1\in \cF \\ \cL_n(\ell_{f_1} - \ell_{f_n^*})\le r/B_K}} R^{(\textup{ind})}_{\bsigma,\bY^{(u)}} \pth{\ell_{f_1}- \ell_{f_n^*}}} \\
&\quad\quad+ 2\Expect{\bY^{(u)},\bsigma }{\sup_{\substack{f_2 \in \cF \\ \cL_n(\ell_{f_2} - \ell_{f_n^*})\le r/B_K}} R^{(\textup{ind})}_{\bsigma,\bY^{(u)}}\pth{\ell_{f_n^*}-\ell_{f_2}}} \\
&\stackrel{\circled{2}}{\le} 4L\Expect{\bY^{(u)},\bsigma}{\sup_{f\in \cF \colon
T_n\pth{f - f_n^*} \le B_1r} R^{(\textup{ind})}_{\bsigma,\bY^{(u)}} \pth{f- f_n^*}} \\
&\stackrel{\circled{3}}{\le} 8L\Expect{\bY^{(u)},\bsigma}{\sup_{f\in \cF \colon T_n(f)\le B_1r/4} R^{(\textup{ind})}_{\bsigma,\bY^{(u)}}f}.
\eals
Here $\circled{1}$ follows from
(\ref{eq:STLC-kernel-surrogate-localization}).  Both summands in its left-hand
side are nonnegative by the optimality of $f_n^*$.  Hence their sum being at
most $r/B_K$ implies each individual constraint used in the next line.  For
each $i\in[n]$, let
$\cA_i\defeq\set{f(\bx_i)-f_n^*(\bx_i)\colon f\in\cF}$.
Assumption~\ref{assumption:Lipschitz-loss-Tn-f-Ln-ellf}(1) shows that the
map
$a\mapsto\ell\pth{a+f_n^*(\bx_i),y_i}
-\ell\pth{f_n^*(\bx_i),y_i}$ is $L$-Lipschitz on $\cA_i$ and vanishes at
$0\in\cA_i$.  Extend this map from $\cA_i$ to $\RR$ with the same
Lipschitz constant while preserving its value at zero.  The coordinatewise
contraction property in Theorem~\ref{theorem:contraction-RC}, the symmetry
of the Rademacher variables, and
Assumption~\ref{assumption:Lipschitz-loss-Tn-f-Ln-ellf}(2) give
$\circled{2}$.  The fact that $(f-f_n^*)/2\in\cF$, because $\cF$ is
symmetric and convex, gives $\circled{3}$.  Finally,
Lemma~\ref{lemma:STLC-kernel-ind} gives
\bal\label{eq:STLC-kernel-seg1}
2\Expect{}{\sup_{h \colon h \in \Delta_{\cF},\tT_n(h) \le r}
R^{(\textup{ind})}_{\bsigma,\bY^{(u)}}h}
\le8L {\tilde \varphi_u}\pth{B_1r/4}.
\eal

Applying (\ref{eq:STLC-kernel-seg1}) for the two sample sizes gives
\bals
C_{L_0}\Expect{\bY^{(u)},\bsigma}{\sup_{h\in\Delta_{\cF}\colon\tT_n(h)\le r}
R^{(\textup{ind})}_{\bsigma,\bY^{(u)}}h}
&\le4C_{L_0}L\tilde\varphi_u\pth{rB_1/4}\defeq\varphi_u(r),\\
C_{L_0}\Expect{\bY^{(m)},\bsigma}{\sup_{h\in\Delta_{\cF}\colon\tT_n(h)\le r}
R^{(\textup{ind})}_{\bsigma,\bY^{(m)}}h}
&\le4C_{L_0}L\tilde\varphi_m\pth{rB_1/4}\defeq\varphi_m(r).
\eals
For each fixed $Q$, the functions appearing inside the minima in
(\ref{eq:varphi-STLC-kernel-u-ind}) and
(\ref{eq:varphi-STLC-kernel-m-ind}) are sub-root.  Pointwise minima and
positive sums preserve the sub-root property.  Thus
$\varphi_{u,m}(r)\defeq\varphi_u(r)+\varphi_m(r)$ is sub-root and, by
Corollary~\ref{corollary:STLC-delta-ell-f-ind-rc},
satisfies (\ref{eq:STLC-cond-psi-general}).  Let $r_{u,m}$ be its positive
fixed point.

We next bound this fixed point.  Put
$A\defeq4C_{L_0}L$ and $\beta\defeq B_1/4$.  For
$Q\in\set{0,\ldots,n}$, define
\bals
D_Q&\defeq\sqrt{\frac{\beta Q}{u}}+
\sqrt{\frac{\beta Q}{m}},\\
E_Q&\defeq\mu\left\{
\sqrt{\frac{\sum_{q=Q+1}^n\hat\lambda_q}{u}}+
\sqrt{\frac{\sum_{q=Q+1}^n\hat\lambda_q}{m}}
\right\}.
\eals
The fixed-point identity and the fact that each minimum is bounded above
by its value at the same $Q$ give
$r_{u,m}=\varphi_{u,m}(r_{u,m})
\le A\pth{D_Q\sqrt{r_{u,m}}+E_Q}$.
By Young's inequality,
$AD_Q\sqrt{r_{u,m}}\le r_{u,m}/2+A^2D_Q^2/2$. Moreover,
$D_Q^2\le2\beta Q(1/u+1/m)$.  Consequently, there is a constant $c_K>0$,
depending only on $B',L_0,L$, and $\mu$, such that
$r_{u,m}\le A^2D_Q^2+2AE_Q\le c_Kr(u,m,Q)$ for
$Q\in\set{0,\ldots,n}$.
Therefore
\bal\label{eq:STLC-kernel-fixed-point-bound}
r_{u,m}\le c_K\min_{Q\in\set{0,\ldots,n}}r(u,m,Q).
\eal

We next construct a sub-root function $\psi^*_{u,m}$ satisfying
(\ref{eq:STLC-cond-um-delta-star-ell-f-psi-star}) in
Theorem~\ref{theorem:STLC-delta-star-ell-f}.  For every
$h=\ell_f-\ell_{f_n^*}\in\Delta^*_{\cF}$, the representation $(f,f_n^*)$
in (\ref{eq:tTn-def}) gives $\tT_n(h)\le2B_K\cL_n(h)$.  Therefore
$\set{h\in\Delta^*_{\cF}\colon B_K\cL_n(h)\le r}
\subseteq\set{h\in\Delta_{\cF}\colon\tT_n(h)\le2r}$.
The same inclusion holds after squaring the classes.  Define
$\psi^*_{u,m}\defeq\sqrt2\,\varphi_{u,m}$, and let $r^*$ be its positive
fixed point.  Since $\psi^*_{u,m}\ge\varphi_{u,m}$, fixed-point comparison
gives $r^*\ge r_{u,m}$.  Hence, for every $r\ge r^*$, condition
(\ref{eq:STLC-cond-psi-general}) may be applied at $2r$, and the sub-root
property gives
$\varphi_{u,m}(2r)\le\sqrt2\,\varphi_{u,m}(r)=\psi^*_{u,m}(r)$.
Together with the preceding inclusions, this proves
(\ref{eq:STLC-cond-um-delta-star-ell-f-psi-star}).  Finally,
$\psi^*_{u,m}(2r_{u,m})\le2\varphi_{u,m}(r_{u,m})=2r_{u,m}$,
so fixed-point comparison yields $r^*\le2r_{u,m}$.  Combining this with
(\ref{eq:STLC-kernel-fixed-point-bound}) gives
$r^*\le2r_{u,m}\le
2c_K\min_{Q\in\set{0,\ldots,n}}r(u,m,Q)$.

All hypotheses of
Theorem~\ref{theorem:STLC-delta-ell-f-excess-risk-upper-bound} are now
verified.  That theorem, (\ref{eq:STLC-kernel-fixed-point-bound}), and the
preceding bound imply, with probability at least $1-3\exp(-x)$,
\bals
\cE(\hat f_m)
&\le c_1r_{u,m}+\frac{4B_Kc_{\Delta}}{K_0}r^*
+\frac{c_3x}{N_{u,m}}\\
&\le c_K\pth{c_1+\frac{8B_Kc_{\Delta}}{K_0}}
\min_{Q\in\set{0,\ldots,n}}r(u,m,Q)
+\frac{c_3x}{N_{u,m}}.
\eals
Taking
$c_5\ge\max\set{c_K(c_1+8B_Kc_{\Delta}/K_0),c_3}$ proves
(\ref{eq:STLC-kernel-excess-loss}).  The dependence asserted for $c_5$
follows from the dependence of $c_1,c_{\Delta},c_3$, and $c_K$.
\end{proof}

\section{More Detailed Comparisons to Existing Works}
\label{sec:more-comparison}
We present more detailed comparisons of our results to existing works in this section.

\subsection{More Detailed Comparison for Transductive Learning Over Binary-Valued Function Classes}
\label{sec:more-comparison-finite-VC-dim}

A related result for binary-valued classes of finite VC-dimension is
\cite[Theorem 6.1]{yang2025improvedgeneralizationboundstransductive}.  Under
the conditions of
Theorem~\ref{theorem:STLC-delta-ell-f-excess-risk-upper-bound-VC-dim}, it
states that there are absolute positive constants $\bar c_0,\bar c_1$ such
that, for every $x>0$ and $\delta\in(0,1)$, with probability at least
$1-\exp(-x)-\delta$,
$\cL_u(\ell_{\hat f_m})\le\bar c_0\dVC\log(me/\dVC)/m
+\bar c_1\log_2(4m/\delta)x/m$.
Theorem~\ref{theorem:STLC-delta-ell-f-excess-risk-upper-bound-VC-dim}
removes the factor $\log_2(4m/\delta)$ from the confidence term.  Here
$N_{u,m}=m$ because this comparison assumes $u\ge m$.

\subsection{More Detailed Comparisons for Transductive Kernel Learning}
\label{sec:more-comparison-TKL}
We compare (\ref{eq:STLC-kernel-excess-loss}) with
\cite[Corollaries 14 and 15]{TolstikhinBK14-local-complexity-TRC}.  Under the
unit-RKHS-ball and unit-kernel-diagonal normalization in that work, the two
corollaries imply that, for every $x>0$, with probability at least
$1-2\exp(-x)$,
\bal\label{eq:local-complexity-excess-risk-TKL}
\cE(\hat f_{ m}) \le \cO\pth{\frac nu r_m^*+ \frac nm r_u^*+x\pth{\frac 1m +\frac 1u}},
\eal
where $r_u^*$ and $r_m^*$ are the localized fixed points in those corollaries.
For a constant $C_L$ depending only on the loss Lipschitz constant, they obey
$r_s^*\le C_L\min_{Q\in\set{0,\ldots,s}}
\pth{Q/s+\sqrt{\sum_{q=Q+1}^{n}\hat\lambda_q/s}}$ for $s\in\set{u,m}$.
The comparison is direct at the level of the displayed bounds.
Equation~(\ref{eq:local-complexity-excess-risk-TKL}) multiplies its
two fixed-point bounds by $n/u$ and $n/m$, whereas
(\ref{eq:STLC-kernel-excess-loss}) contains no such prefactors.  No universal
asymptotic conclusion follows from this algebra alone because the empirical
spectrum may change with the full sample.  Independently of spectral decay,
choosing $Q=0$ and using
$\sum_{q=1}^n\hat\lambda_q=\tr{\bK/n}$ gives the deterministic estimate
$\min_Q r(u,m,Q)\le
\sqrt{\tr{\bK/n}}\pth{u^{-1/2}+m^{-1/2}}$.

Furthermore, \cite[Theorem 6.2]{yang2025improvedgeneralizationboundstransductive}
and \cite[Theorem 4.1]{yang2025a-concentration-sampling-without-replacement}
give related TKL bounds.  Under the assumptions of the former result, there is
a constant $\bar c_5>0$ with the parameter dependence stated there.  For every
$x>0$ and $\delta\in(0,1/3)$, with probability at least
$1-3\exp(-x)-3\delta$,
$\cE(\hat f_m)\le\bar c_5\pth{\min_{Q\in\set{0,\ldots,n}}r(u,m,Q)
+\log_2(4N_{u,m}/\delta)x/N_{u,m}}$.
Theorem~\ref{theorem:STLC-kernel} removes from this bound the factor
$\log_2(4\min\set{u,m}/\delta)$.  A bound of the same broad form is available
under the hypotheses of
\cite[Theorem 4.1]{yang2025a-concentration-sampling-without-replacement}, but
that result requires the imbalanced regime $u\gg m^2$ or $m\gg u^2$.

\section{More Discussions}
\label{sec:more-discussion}
Concentration inequalities for sampling without replacement have been studied
extensively~\cite{BARDENET2015-sampling-without-replacement,
Tolstikhin2017-sampling-without-replacement}, including modified
log-Sobolev results on the multislice~\cite{Sambale2022}.  As detailed in
Section~\ref{sec:concentration-general-sup-empirical},
Corollary~\ref{corollary:concentration-gu} avoids the two regime-dependent
terms in the bounds of \cite{TolstikhinBK14-local-complexity-TRC}.  The
localized analysis in
\cite{yang2025a-concentration-sampling-without-replacement} applies only in
the imbalanced regimes $m\gg u^2$ or $u\gg m^2$.  The present results require
no such relation between $m$ and $u$.  They also give a nearly minimax-optimal
realizable VC bound for $m\ge9$ and a spectrum-dependent kernel bound.
Extending the analysis beyond bounded function classes remains an open
direction.

\end{document}